\theoremstyle{plain}
\newtheorem{theorem}{Theorem}[section]
\newtheorem{proposition}[theorem]{Proposition}
\theoremstyle{definition}
\theoremstyle{remark}
\definecolor{softred}{RGB}{200, 50, 50}
\definecolor{softgreen}{RGB}{50, 150, 50}
\definecolor{softorange}{RGB}{200, 120, 50}
\definecolor{softblue}{RGB}{90, 140, 200}
\newcommand{\ie}{\textit{i.e}}
\newcommand{\eg}{\textit{e.g}}
\newcommand{\perf}[1]{%
    {\scriptsize \textcolor{%
        \ifdim #1pt<0pt red\else  ForestGreen\fi%
    }{(#1)}}%
}
\definecolor{mycolor}{cmyk}{0.1, 0.1, 0, 0}
\newcommand{\narrowtexttt}[1]{\scalebox{0.9}[0.98]{\texttt{#1}}}
\newcommand\ours{\narrowtexttt{\textbf{SyMerge}}\xspace}
\icmltitlerunning{SyMerge: From Non-Interference to Synergistic Merging via Single-Layer Adaptation}
\begin{document}

\twocolumn[
  \icmltitle{
  SyMerge: From Non-Interference to Synergistic Merging \\ 
  via Single-Layer Adaptation
 }

  \icmlsetsymbol{equal}{*}
  \icmlsetsymbol{corr}{$\dagger$}

  \begin{icmlauthorlist}
    \icmlauthor{Aecheon Jung}{skku}
    \icmlauthor{Seunghwan Lee}{skku}
    \icmlauthor{Dongyoon Han}{naver,corr}
    \icmlauthor{Sungeun Hong}{skku,corr}
  \end{icmlauthorlist}

  \icmlaffiliation{skku}{Sungkyunkwan University}
  \icmlaffiliation{naver}{NAVER AI Lab}

  \icmlcorrespondingauthor{Dongyoon Han}{dongyoon.han@navercorp.com}
  \icmlcorrespondingauthor{Sungeun Hong}{csehong@skku.edu}

  \icmlkeywords{Machine Learning, ICML}

  \vskip 0.3in
]

\printAffiliationsAndNotice{}  %

\begin{abstract}
Model merging combines independently trained models into a single multi-task model. However, most existing approaches focus primarily on avoiding task interference. We argue that its greater potential lies in enabling task synergy, where tasks actively improve one another. We identify cross-task performance, defined by compatibility between encoders and predictors across tasks, as a key indicator of merge quality. We demonstrate that adapting only a single task-specific layer is sufficient to induce such synergy. We propose SyMerge, a lightweight framework that jointly optimizes merging coefficients and a single task-specific layer. We adopt an expert-guided self-labeling objective, providing stable supervision beyond entropy minimization. Intriguingly, we further show that SyMerge successfully merges models trained from different initializations, a regime where standard methods break down. Our minimalist yet principled method achieves state-of-the-art results across vision, dense prediction, and NLP benchmarks.
Our code is available at \url{https://aim-skku.github.io/SyMerge}.

\end{abstract}

\section{Introduction}
\label{sec:intro}

Multi-task learning~\citep{caruana1997multitask} solves multiple tasks within a single model but requires joint training and careful balancing~\citep{PCGrad_2020_NeurIPS,misra2016cross,kendall2018multi,hu2024revisiting}. 
Model merging~\citep{ModelSoups_2022_ICML,PAINT_NeurIPS_2022} instead combines independently fine-tuned models at the parameter level using task vectors~\citep{TaskArithmetic_2023_ICLR}.
However, naively aggregating task vectors often leads to severe performance degradation due to task interference~\citep{TiesMerging_2023_NeurIPS,TSV_CVPR2025,Tall_Mask_2024_ICML}. 
This motivates test-time adaptive approaches that tune merged models in an unsupervised manner~\citep{Adamerging_2024_ICLR,WeMoE_2024_ICML,Surgery_2024_ICML,ProbSurgery_ICML2025}, though it remains unclear whether such methods fundamentally resolve task interference.

To address task interference more explicitly, prior work has explored a range of mitigation strategies, including parameter-level masking or pruning~\citep{Tall_Mask_2024_ICML,TiesMerging_2023_NeurIPS,PCB_NeurIPS2024,DELLA_arxiv2024}, structural analysis of weight matrices via SVD~\citep{TSV_CVPR2025,Iso-cts_ICML2025,CORE_NeurIPS_2025}, and frequency-based filtering~\citep{FREE-Merging_ICCV2025}. 
Other approaches pursue non-interference through weight disentanglement, aiming to ensure that adding one task vector does not impair another~\citep{tangent_NeurIPS2023,attnonly_ICLR2025}. 
Despite these advances, most existing methods continue to treat non-interference as the final objective.

In this paper, we argue that model merging should go beyond avoiding interference and explicitly consider cross-task interactions. The goal is to achieve \emph{synergy}, where tasks actively benefit one another. We define synergy not merely as improved average performance, but as representation-level realignment: the merged encoder becomes compatible with task-specific predictors across tasks, enabling cross-task mappings unattainable by any individual encoder. This view is supported by a pilot study showing that cross-task performance, obtained by pairing a classifier from one task with an encoder from another, strongly predicts merge quality. We further observe that adapting even a single task-specific layer can greatly improve cross-task compatibility, suggesting that minimal adaptation can be key to inducing synergy in merging.

Building on this intuition, we propose \ours, a lightweight and test-time adaptive framework that jointly optimizes a single task-specific layer and the encoder's merging coefficients. 
Unlike conventional entropy minimization~\citep{Adamerging_2024_ICLR,WeMoE_2024_ICML}, which we find unstable, we employ a robust self-labeling strategy guided by expert model predictions.
{As adaptation progresses, entropy minimization can drift away from the supervised cross-entropy objective, whereas ours stays strongly aligned and provides a stable training signal.}
This design yields strong generalization under distribution shifts without introducing extra modules or costly training. This setup aligns with recent test-time adaptation studies for robust distribution-shift handling~\cite{liang2025comprehensive,zhang2025cat,zhang2026backpropagation}. {Finally, \ours also works when merging models trained from different initializations, where existing merging methods usually fail.}
Our contributions are threefold:
\vspace{-0.5em} 
\begin{itemize}\setlength\itemsep{-0.1em}
\item We redefine the goal of model merging from mitigating interference to fostering positive task synergy, moving beyond non-interference or disentanglement.
\item We present \ours, a minimalist yet effective method that adapts only a single layer and merging coefficients, stabilized by expert-guided self-labeling.
\item \ours achieves state-of-the-art performance across vision (classification), dense prediction (segmentation, depth/surface normal estimation), and NLP tasks.
\end{itemize}

\section{Background}

\subsection{Preliminary}

\noindent\textbf{Problem definition.}
Multi-Task Learning (MTL) aims to train a single model that performs well across a set of $K$ tasks. Model merging offers an efficient alternative by constructing a multi-task model without costly joint training~\citep{kim2025task}. A pre-trained model $\Theta_{\text{pre}}$ is fine-tuned independently on each task, producing $K$ expert models with weights ${\Theta_1, \dots, \Theta_K}$. The key challenge is to design a merging function $\mathcal{M}$ that combines these experts into a unified parameter set $\Theta_{MTL} = \mathcal{M}(\Theta_1, \dots, \Theta_K)$.
The merged model has a shared encoder and task-specific layers. For task $k$, the model can be written as $f(\cdot; \Theta_k) = f^L(\cdot; \theta_k^L) \circ f^{1:L-1}(\cdot; \Theta_{\text{MTL}}^{\text{enc}})$, where $\Theta_{\text{MTL}}^{\text{enc}}$ denotes the merged encoder. The goal is to ensure that this single model $f$ achieves strong performance across all $K$ tasks simultaneously.

\noindent\textbf{Training-free model merging.} These methods determine merging coefficients using predefined heuristics~\citep{TiesMerging_2023_NeurIPS,PCB_NeurIPS2024} or through costly hyperparameter grid search on a validation set~\citep{Tall_Mask_2024_ICML,TaskArithmetic_2023_ICLR}. While this avoids data-driven optimization on the target distribution, it introduces critical scalability and computational bottlenecks. 
The requisite grid search becomes computationally prohibitive as evaluation cost scales with the number of tasks, and the memory-intensive element-wise operations often preclude GPU acceleration.
A representative approach is \textit{Task Arithmetic}~\citep{TaskArithmetic_2023_ICLR}, which captures the task-specific knowledge as the deviation from the pre-trained weights, $\tau_k^l = \theta_k^l - \theta_{\text{pre}}^l$. The merging process is applied to the encoder layers ($l \in \{1, \dots, L-1\}$), where the merged weights are constructed as $\theta_{\text{MTL}}^l = \theta_{\text{pre}}^l + \lambda \cdot\sum_{k=1}^K  \tau_k^l$. Here, $\lambda$ is a hyperparameter. 
The final multi-task model is then composed of this single merged encoder and the original task-specific layers from their respective fine-tuned models. 
However, while simple, their data-agnostic nature incurs suboptimal performance compared to adaptive methods, especially under many-task settings or distribution shifts.

\noindent\textbf{Test-time adaptive merging.}
Another line of methods adapts to the target distribution by optimizing parameters using unlabeled test data. Since ground-truth labels are unavailable, these methods rely on proxy objectives such as minimizing prediction entropy~\citep{Adamerging_2024_ICLR,WeMoE_2024_ICML}. However, entropy-based optimization is limited to classification and does not easily extend to tasks like semantic textual similarity or dense per-pixel prediction.
Post-hoc calibration offers an alternative that avoids this issue. It adjusts the feature representations of a merged model using additional adapters, rather than learning the merging coefficients directly~\citep{Surgery_2024_ICML, ProbSurgery_ICML2025}.

A milestone work is \textit{AdaMerging}~\citep{Adamerging_2024_ICLR} that improves \textit{Task Arithmetic} by constructing a merged model \( f(x; \Theta_{\text{MTL}}) \) through learning merging coefficients, denoted by \( \Lambda = \{\lambda_k^l\}_{k,l} \), in a task-/layer-wise manner.
The merged weights are defined as \( \theta_{\text{MTL}}^l = \theta_{\text{pre}}^l + \sum_{k=1}^K \lambda_k^l \cdot \tau_k^l \).
The coefficients \( \Lambda \) are learned by optimizing the following objective function: $\underset{\Lambda}{\min} \sum_{k=1}^K \mathbb{E}_{x \in \mathcal{X}^{te}_k} \left[ \mathcal{L}_k \left(f(x; {\Theta}_{\text{MTL}}) \right) \right]$. Note that ground-truth test labels are not used; it learns coefficients via entropy minimization loss $\mathcal{L}_k$. Our method builds upon test-time merging approaches. %
The following section motivates our method and outlines our future directions.

\subsection{Motivation}
\textbf{Limitations of training-free methods.}
Our first motivation arises from the limitation of training-free methods, which are vulnerable when adapting to unseen tasks or domain shifts. %
To examine whether existing merging methods lack robustness to data quality and distribution shifts, we intentionally corrupt 4 task datasets following \citet{Imagenet-C_ICLR2019} as a prevalent real-world challenge. 
This setup serves as a rigorous proxy for noisy or out-of-distribution test sets often encountered in the wild.

As shown in Figure~\ref{fig:clean_corruption}, the performance of training-free methods drops significantly on this corrupted data, revealing their fundamental limitation in handling domain shifts. This failure motivates our approach of leveraging unlabeled test data for robust merging, a principle adapted from test-time adaptation that avoids data leakage. Our empirical results are consistent with prior theoretical work~\citep{Prodistill_ICML2025}, which highlights the vulnerability of data-agnostic merging in practice. For more detailed results, see Appendix~\ref{appendix:corruption}.

\begin{figure}[t]
    \centering
    \includegraphics[width=0.87\linewidth]{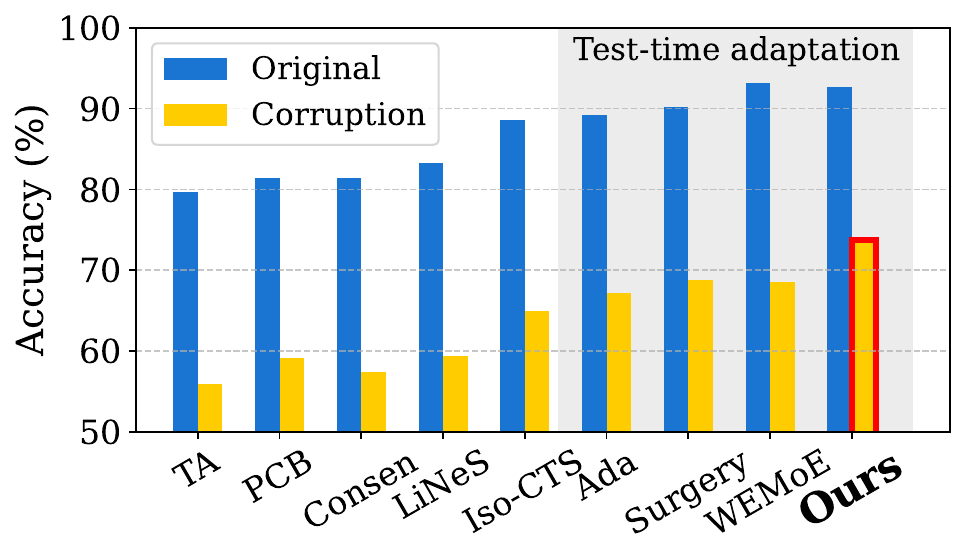}
    \vspace{-.5em}
    \caption{\textbf{Training-Free methods collapse under corruption.} Worse than test-time methods on clean data as well, and far more degraded under corruption.}
    \label{fig:clean_corruption} 
    \vspace{-.5em}
\end{figure}

\textbf{Rethinking cross-task performance.}
Another motivation comes from the intuition that a model's cross-task performance\footnote{We define \textit{cross-task performance} as evaluating model A's encoder with model B's classifier on B's task.} is closely tied to its merging performance. To examine this, we conducted a preliminary study on 20 vision tasks using ViT-B/32.
We observed a significant positive correlation ($r=0.863, \ p<0.001$) between a model's average cross-task performance and its average merging performance.
{Specifically, we treat each task in the benchmark as a target task $B$ and iterate through all other tasks $A$ ($A \neq B$).
For each pair, we evaluate: 
(1) the pairwise cross-task performance, and 
(2) the pairwise merging performance by weight-averaging the two encoders and then evaluating the new merged encoder on $B$'s task. 
The final metrics for task $B$ are computed by averaging these pairwise values over all source tasks $A$.}
As shown in Figure~\ref{fig:cross_merge_correlation}, a model's ability to generalize across tasks is a strong and reliable predictor of its potential for successful merging. This key insight forms the foundation of our work.

Prior work~\citep{tangent_NeurIPS2023} often regards non-interference as the objective of model merging, where the sole aim is to prevent tasks from degrading one another. However, this perspective inherently imposes a ceiling because cross-task performance cannot go beyond that of the pre-trained model, leaving little room for improvement. We instead highlight the need for positive synergy, where tasks actively enhance one another and collectively achieve performance beyond this ceiling. In Section~\ref{sec:theory}, we provide a theoretical analysis showing that improvements in cross-task performance directly translate into more effective merging.

\begin{figure}[t]
    \centering
    \includegraphics[width=0.74\linewidth]{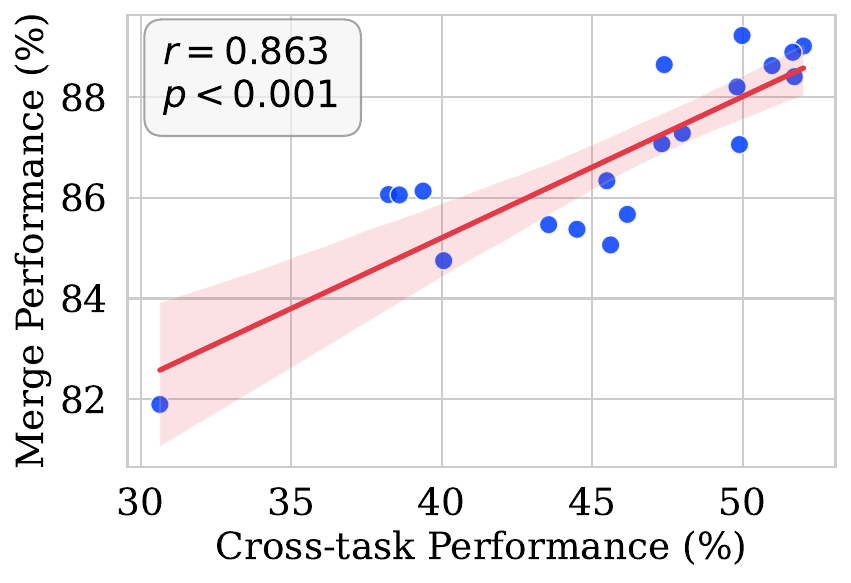}
    \vspace{-.5em}
    \caption{\textbf{Cross-task vs. Merge Performance.} Positive correlation observed across 20 vision tasks with regression fit and 95\% confidence interval.}
    \label{fig:cross_merge_correlation}
    \vspace{-.5em}
\end{figure}

\section{Method}
\label{sec:method}

\begin{figure*}[!t]
    \centering
    \begin{subfigure}[b]{0.45\linewidth}
        \centering
        \includegraphics[width=\linewidth]{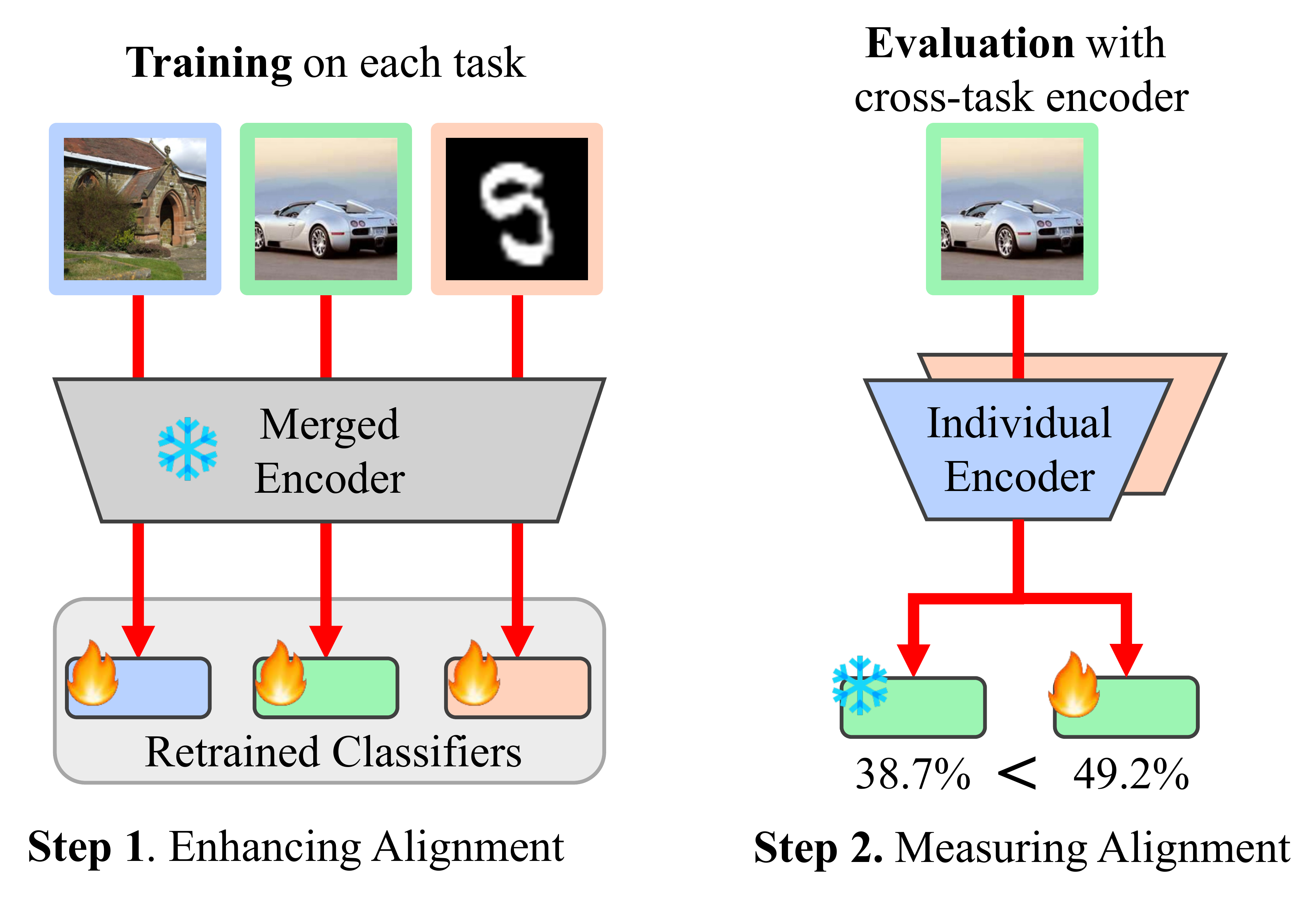} 
        \caption{Pilot study: two-stage protocol for evaluation}
        \label{fig:protocol}
    \end{subfigure}
    \begin{subfigure}[b]{0.38\linewidth}
        \centering
        \includegraphics[width=\linewidth]{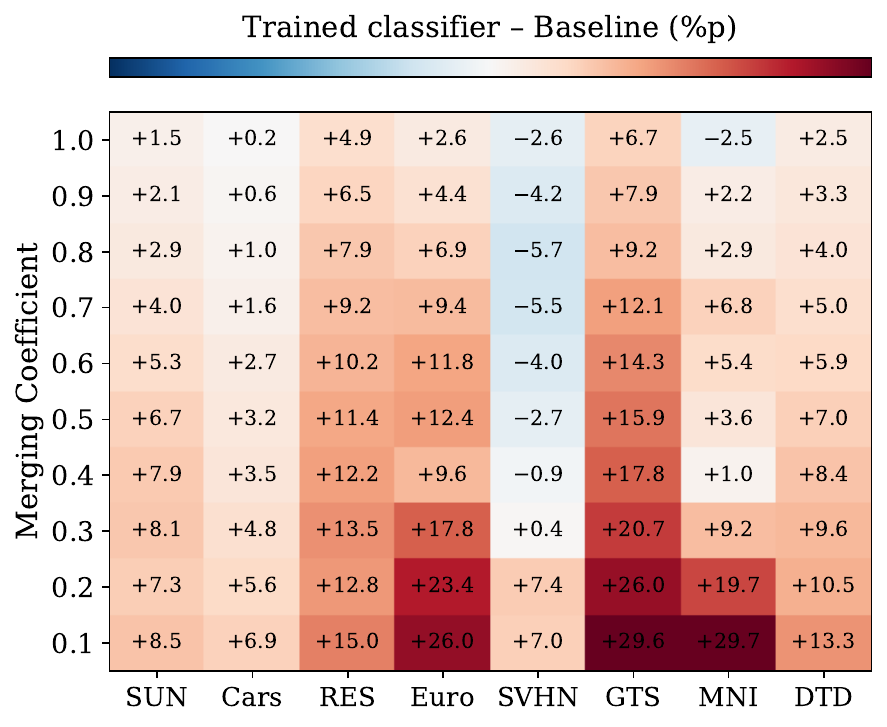} 
        \caption{Performance gain on 8 cross-tasks}
        \label{fig:crosstask_results}
    \end{subfigure}
    \vspace{-0.5em}
    \caption{\textbf{Pilot study protocol and its results on 8 cross-tasks using ViT-B/32.} 
    (a) We first enhance a classifier's functional alignment by training it on representations from a general-purpose merged encoder. We then measure this enhancement by evaluating the trained classifier's cross-task performance when paired with the encoder of a different, individual task. (b) The heatmap shows the accuracy gain (\%p) over the baseline across 8 tasks (x-axis) under various merged encoder configurations (y-axis, via merging coefficient). The consistently positive gains (red) demonstrate the protocol's effectiveness in enhancing functional alignment.} %
    \label{fig:pilot}
\end{figure*}

Our method uses the link between cross-task performance and merge quality as a structural motivation.
To study this connection, we compare a baseline that pairs a task's original classifier—over-specialized to its native encoder—with an encoder from another task. 
For each coefficient value, we construct a fixed merged encoder using Task Arithmetic~\citep{TaskArithmetic_2023_ICLR} in the 8-task ViT-B/32 setting under different merging coefficients.
We then design a two-stage protocol (Figure~\ref{fig:protocol}): first retraining the task-specific layer $f^L(\cdot; \theta_k^L)$ on features from a fixed merged encoder $f^{1:L-1}(\cdot; \Theta_{\text{MTL}}^{\text{enc}})$ using labeled data $\mathcal{X}^{tr}_k$, and then evaluating the updated layer on an encoder from a different task $m \neq k$. 
This procedure provides a direct measure of functional alignment~\citep{Franken_NeurIPS2021} across tasks.

As shown in Figure~\ref{fig:crosstask_results}, the protocol consistently improves cross-task performance over the baseline (detailed results are in Appendix~\ref{appendix:pilot_cross}). While the benefit is observed in most cases, SVHN displays limited or negative gains, likely due to its overlap with MNIST. 
We also find that the effect is not restricted to the final classifier: adapting a single intermediate block yielded similar improvements (Figure~\ref{fig:blocks}). 
Finally, the improvement relies on the merging coefficients, highlighting the importance of a well-formed merged encoder for stronger alignment.
Together with Figure~\ref{fig:cross_merge_correlation} and Section~\ref{sec:theory}, these results suggest that single-layer adaptation can improve cross-task compatibility in a controlled frozen-encoder setting and thereby benefit merging.
This connection is used to motivate the design of our method, rather than as an explicit optimization objective.
However, this protocol relies on ground-truth labels that are unavailable during model merging. 
The remaining challenge is therefore to achieve the same effect in an unsupervised setting, which is the focus of our proposed method, \ours.

\subsection{SyMerge: Synergistic Model Merging}

\ours addresses the lack of labels at test time by adopting a self-labeling strategy. 
A common approach in this setting is to use entropy minimization as a supervisory signal~\citep{Adamerging_2024_ICLR}. However, this proxy can be unstable and misguide the model, as shown in our analysis (Figure~\ref{fig:loss_corr}) and by prior work~\citep{oh2024dawin}.

To create a more reliable training signal, we use the individually fine-tuned models as expert teachers and adopt a self-labeling strategy~\citep{pseudo_2013_ICMLW}. Our objective is to train the merged model to match the confident predictions of these expert models. We formulate this as minimizing the cross-entropy between the predictions of our merged model and the self-labels from the individual models on the unlabeled test set $\mathcal{X}^{te}$. 
The objective is defined as $\displaystyle \underset{\{\lambda_k^l\}, \{\theta_k^{\text{tr}}\}}{\min} \; \sum_{k=1}^{K}  \mathcal{L}_{CE} \left( C_k^{\text{merged}}, C_k^{\text{ft}} \right)$, where (i) $C_k^{\text{merged}}$ is the output from our merged model for task $k$; (ii) $C_k^{\text{ft}}$ is the fixed prediction from the corresponding expert model; and (iii) the trainable parameters are the merging coefficients $\{\lambda_k^l\}$ and a task-specific layer $\{\theta_k^{\text{tr}}\}$.
Refer to Appendix~\ref{appendix:lossfunc} for details.
Unlike conventional teacher-student distillation, which trains a separate student model, \ours directly optimizes the merging coefficients and a single task-specific layer, thereby promoting functional alignment within the merged model.
Our objective readily extends beyond classification, as the key principle is to mimic the expert's output signal. This is achieved simply by using the appropriate task-specific loss (\eg., L1 loss for regression tasks).
Below, we provide an empirical justification for choosing this self-labeling objective over the more common, but less stable, entropy minimization.

Note that a key difference from AdaMerging~\citep{Adamerging_2024_ICLR} is that we jointly optimize both the shared encoder's merging coefficients and the task-specific layer. 
{We argue that this co-adaptation improves the merged encoder itself (not merely the chosen adaptation layer) and enables stable recovery even in disjoint-basin regimes where coefficient-only tuning often collapses (refer to Section~\ref{sec:empirical_analysis}).}
We can optionally apply a confidence-based filtering mechanism to further enhance performance (see details in Appendix~\ref{appendix:setup}).

\begin{figure}[t]
    \centering
    \includegraphics[width=0.9\linewidth]{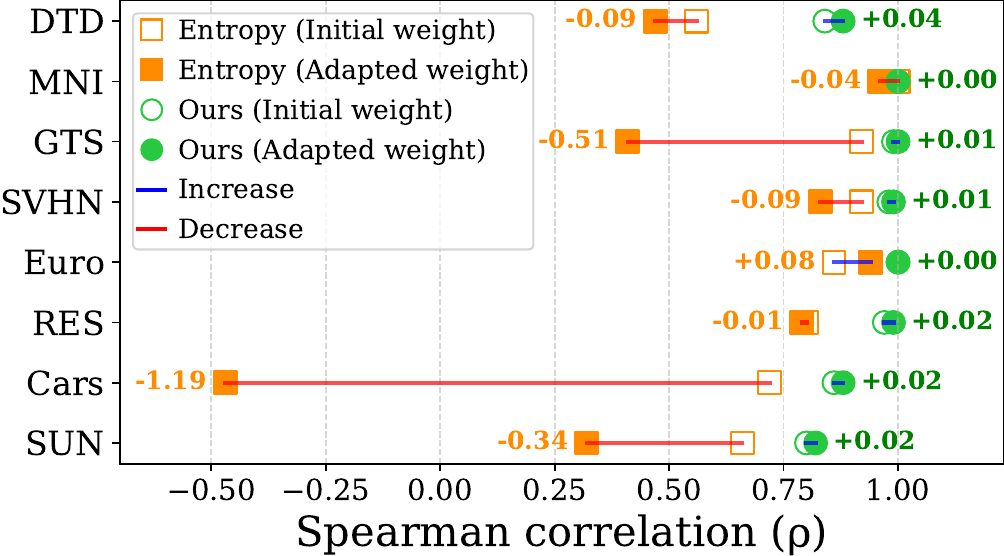}
    \vspace{-0.5em}
        \caption{\textbf{Correlation of proxy losses with the ground truth loss.} The ground truth loss is defined as the supervised cross-entropy computed with ground truth labels. We compare coefficients for Entropy and Ours using merged weights before and after training. A coefficient closer to +1 indicates a more reliable proxy.}
    \label{fig:loss_corr}
    \vspace{-.5em}
\end{figure}

\begin{table*}[t!]
\caption{\textbf{Multi-task performance across 8, 14, and 20 vision tasks.} We compare our method with the competing methods for merging ViT-B/32 and ViT-L/14 fine-tuned models. 
Our main competitors are the test-time merging methods (bottom group, starting from AdaMerging).
All reported results for our methods are the mean and standard deviation computed over 5 runs.}
\centering
\footnotesize

\tabcolsep=0.04em

\begin{tabular}{l | lll | lll}
\toprule
\multirow{2}{*}{Method} & \multicolumn{3}{c|}{{ViT-B/32}} & \multicolumn{3}{c}{{ViT-L/14}} \\
\cmidrule(r){2-4} \cmidrule(l){5-7}
& \: 8 tasks &   \hspace{0.01em} 14 tasks & \hspace{0.01em} 20 tasks & \: 8 tasks &  \hspace{0.01em} 14 tasks & \hspace{0.01em} 20 tasks \\
\midrule
Individual & \quad 90.5& \quad 89.5& \quad 90.4& \quad 94.2& \quad 93.2& \quad 94.0\\
\midrule
Task Arithmetic~\citep{TaskArithmetic_2023_ICLR} & \quad 69.1& \quad 65.4&\quad  60.8& \quad 84.5&\quad 78.4 & \quad 73.9\\
Ties-Merging~\citep{TiesMerging_2023_NeurIPS} & \quad 72.9& \quad 65.2& \quad 63.1& \quad 86.0& \quad 80.5 & \quad 73.0 \\
PCB Merging~\citep{PCB_NeurIPS2024} & \quad 75.6 &\quad  63.8 & \quad 52.7 & \quad 87.5& \quad 81.3& \quad 73.4 \\
LiNeS w/ TA~\citep{Lines_ICLR_2025} & \quad 74.1& \quad 68.0& \quad 63.7& \quad 86.9& \quad 80.4& \quad 75.7\\
Consensus TA~\citep{Tall_Mask_2024_ICML} & \quad 74.9& \quad 70.3& \quad 65.0& \quad 86.6& \quad 82.3& \quad 78.9 \\
TSV-M~\citep{TSV_CVPR2025} & \quad 84.0& \quad 80.1& \quad 76.6& \quad 91.5& \quad 88.2& \quad 87.2\\
ISO-Merging~\citep{Iso-cts_ICML2025} & \quad 84.2& \quad 80.6& \quad 76.9& \quad 93.0& \quad 89.7& \quad 89.2\\
EMR Merging~\citep{EMRMerging_2024} & \quad 88.7 & \quad 86.1 & \quad 86.6 & \quad 93.7 & \quad 91.4& \quad 92.0\\
LOT Merging~\citep{lotmerging_NeurIPS2025} & \quad
82.7& \quad 77.3 & \quad
73.1& \quad
90.5& \quad
86.2& \quad
84.1\\
\midrule
AdaMerging~\citep{Adamerging_2024_ICLR} & \quad 80.1& \quad 76.7& \quad 69.6& \quad 90.8& \quad 85.2& \quad 82.1\\
Surgery w/ Ada.~\citep{Surgery_2024_ICML} & \quad 87.5& \quad 84.6& \quad 84.5& \quad 92.3& \quad 90.4 & \quad 89.5 \\
WEMoE~\citep{WeMoE_2024_ICML} & \quad 89.4& \quad 83.0& \quad 78.9& \quad 93.6& \quad 88.4 & \quad 78.8 \\
ProbSurgery w/ Ada.~\citep{ProbSurgery_ICML2025} & \quad 87.4 & \quad 84.9 & \quad 84.5 & \quad 92.7 & \quad 90.7 & \quad 90.2 \\
\midrule
\rowcolor{mycolor}
\textbf{\ours} &  \quad \textbf{90.1} \tiny{$\pm$ 0.1} & \quad  \textbf{88.7} \tiny{$\pm$0.1} & \quad \textbf{88.6}  \tiny{$\pm$ 0.4} &  \quad  \textbf{94.1} \tiny{$\pm$ 0.0}  &  \quad \textbf{92.8} \tiny{$\pm$ 0.1} &  \quad \textbf{93.2} \tiny{$\pm$ 0.1} \\
\bottomrule
\end{tabular}%
\label{tab:vit-b-l-8-14-20}
\vspace{-0.5em}
\end{table*}

\noindent\textbf{Our objective function.}  
An effective proxy objective for test-time adaptation must maintain a strong correlation with the ground-truth objective.
To verify this, we compute the Spearman correlation (a rank-based measure of monotonic association) on 8 vision tasks with ViT-B/32.
A higher correlation indicates that the loss function provides a reliable training signal.
We evaluate this correlation for both the initial model merged via Task Arithmetic (``Initial weight'') and the final model after it has been optimized with a proxy loss (``Adapted weight'').
For the ``Entropy'', we correlate the entropy of the model's output with the ground-truth loss. In contrast, for ``Ours'', we correlate our self-labeling loss with the same ground-truth loss.

Figure~\ref{fig:loss_corr} shows that entropy initially correlates moderately with supervised cross-entropy. 
However, after training, this correlation significantly weakens, and in some cases, entropy diverges sharply (\eg., Cars). 
This suggests entropy is not a stable proxy for supervision and may fail to maintain consistency as training progresses, which aligns with prior studies~\citep{Adamerging_2024_ICLR, oh2024dawin}.
In contrast, our proposed objective maintains a consistently high correlation, providing a stable and reliable supervisory signal.

\subsection{Theoretical Backup}
\vspace{-.5em}
\label{sec:theory}

We theoretically support that stronger cross-task performance leads to better merge performance.
Following \citet{tangent_NeurIPS2023}, non-interference (\ie., weight disentanglement) assumes that adding a task vector $\tau_i$ leaves another task $j$ unchanged:
$L_j[f(x;\theta_0+\tau_i)] = L_j[f(x;\theta_0)]$ for all $i \neq j$.
In contrast, synergy requires a stronger condition, where $\tau_i$ improves task $j$:
$L_j[f(x;\theta_0+\tau_i)] < L_j[f(x;\theta_0)]$.

\begin{proposition}[]
\label{prop:ci_superiority}
Assume cross-task linearity~\citep{CTL_ICML2024} so that $f(x;\frac{1}{2}(\theta_i + \theta_j)) \approx \frac{1}{2}f(x;\theta_i)+\frac{1}{2}f(x;\theta_j)$,
and suppose the loss function is convex in its output. Then the merged model $f_{merge}(x)=f(x;\frac{1}{2}(\theta_i+\theta_j))$ has an expected loss below the case of weight disentanglement.
\end{proposition}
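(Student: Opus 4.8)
The plan is to fuse the two stated hypotheses—cross-task linearity and convexity of the loss in its output—into a single Jensen-type bound, and then read off the gap between the synergy and non-interference regimes directly from the cross-task term. First I would rewrite the merged weights as $\tfrac{1}{2}(\theta_i+\theta_j)=\theta_0+\tfrac{1}{2}(\tau_i+\tau_j)$, so that $f_{merge}$ is a symmetric interpolation of the two experts. Invoking cross-task linearity and treating the stated approximation as an equality for the analysis, the merged output decomposes as $f_{merge}(x)=f(x;\tfrac{1}{2}(\theta_i+\theta_j))=\tfrac{1}{2}f(x;\theta_i)+\tfrac{1}{2}f(x;\theta_j)$.

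Next I would apply convexity of $L_j$ (Jensen's inequality) to this convex combination, obtaining
\[
L_j[f_{merge}(x)] \le \tfrac{1}{2}L_j[f(x;\theta_i)] + \tfrac{1}{2}L_j[f(x;\theta_j)].
\]
The second term $L_j[f(x;\theta_j)]$ is the expert's own in-task loss and is identical across both regimes, so the entire comparison collapses onto the cross-task term $L_j[f(x;\theta_i)]=L_j[f(x;\theta_0+\tau_i)]$. This is precisely where the two regimes diverge: under weight disentanglement the cross-task term equals the pre-trained loss $L_j[f(x;\theta_0)]$, whereas under the synergy hypothesis it is strictly smaller, $L_j[f(x;\theta_0+\tau_i)]<L_j[f(x;\theta_0)]$.

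Substituting each regime's value of the cross-task term into the Jensen bound and taking expectation over $x$, the shared in-task contribution $\tfrac{1}{2}\mathbb{E}_x L_j[f(x;\theta_j)]$ cancels from the comparison, leaving only the cross-task piece. Since $\mathbb{E}_x L_j[f(x;\theta_0+\tau_i)]<\mathbb{E}_x L_j[f(x;\theta_0)]$ under synergy, the resulting upper bound on $\mathbb{E}_x L_j[f_{merge}(x)]$ is strictly below the bound attained in the disentanglement case, which proves the claim. The same reasoning evaluates task $i$'s loss symmetrically under the analogous synergy condition $L_i[f(x;\theta_0+\tau_j)]<L_i[f(x;\theta_0)]$, so the improvement holds for both tasks at once.

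I expect the main obstacle to be the first step rather than the arithmetic: cross-task linearity holds only approximately, so a fully rigorous version must either absorb the linearization residual into an error term and argue it is dominated by the strict synergy gap $\mathbb{E}_x L_j[f(x;\theta_0)]-\mathbb{E}_x L_j[f(x;\theta_0+\tau_i)]$, or adopt cross-task linearity as an exact modeling assumption as the proposition does. A secondary, conceptual subtlety is pinning down what ``the case of weight disentanglement'' means as the baseline—namely, replacing the cross-task term by the pre-trained loss while leaving the in-task term untouched—so that the two bounds differ solely through the cross-task contribution. Once that identification is fixed, the convexity step does all the work and the strict inequality follows immediately from the definition of synergy.
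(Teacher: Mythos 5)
Your proposal is correct and follows essentially the same route as the paper's proof: treat cross-task linearity as an equality, apply Jensen's inequality to get $\mathcal{L}_j(f_{\mathrm{merge}}) \le \tfrac{1}{2}\mathcal{L}_j(f_i) + \tfrac{1}{2}\mathcal{L}_j(f_j)$, and then compare the two regimes by substituting $\mathcal{L}_j(f_i) = \mathcal{L}_j(f_0)$ (disentanglement) versus $\mathcal{L}_j(f_i) = \mathcal{L}_j(f_0) - \epsilon_{ij}$ (synergy), so the bound tightens by $\epsilon_{ij}/2$. Your added remarks on the symmetric bound for task $i$ and on the fact that the comparison is between upper bounds (with the linearization residual absorbed into the modeling assumption) are consistent with, and slightly more explicit than, the paper's treatment.
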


The proof in Appendix~\ref{appendix:proof} shows that this positive cross-task improvement tightens the convex upper bound, supporting cross-task performance as a structural indicator of merge quality rather than as the direct optimization objective of \ours.
We note that convexity is required only with respect to the model outputs $f(x;\theta)$, not the parameters $\theta$, and holds exactly for cross-entropy loss.
Although cross-task linearity is an approximation in parameter space, Appendix~\ref{subsec:ctl_verification} empirically verifies that it holds robustly in our setting.

\section{Experiment}

\begin{table*}[t!]
    \caption{\textbf{Multi-task performance across three dense prediction tasks.} We compare the performance of merging ResNet-50 models fine-tuned on three dense prediction tasks in NYUv2.}
  \footnotesize
  \centering
  \setlength{\tabcolsep}{.22em}
  \begin{tabular}{l|cc|cc|c}
    \toprule
    \multirow{2}{*}{Method} & \multicolumn{2}{c|}{Segmentation} & \multicolumn{2}{c|}{Depth Estimation} & Normal \\
                            & mIoU$\uparrow$ & Pix Acc$\uparrow$ & Abs Err$\downarrow$ & Rel Err$\downarrow$ & Mean$\downarrow$ \\
    \midrule
    Individual & {52.0} & {74.2} & 41.5 & 17.3 & 24.2 \\
    \midrule
    Weight Averaging & 36.6  & 64.0 & 55.0  & 23.2 &  30.0 \\
    Task Arithmetic~\citep{TaskArithmetic_2023_ICLR} & 31.6 & 60.3 & 56.7 & 24.0 & 30.6 \\
    Ties-Merging~\citep{TiesMerging_2023_NeurIPS} & 39.9 & 62.7 & 61.3 & 27.3 & 36.2 \\
    MagMax~\citep{marczak2025magmax} & 24.7 & 54.7 & 60.3 & 23.9 & 30.3 \\
    LiNeS w/ TA~\citep{Lines_ICLR_2025} & 36.2 & 64.0 & 54.2 & 22.4 & 29.1 \\
    EMR-Merging~\citep{EMRMerging_2024} & 41.5 & 67.2 & 48.6 & {19.4} & 26.5 \\
    \midrule
    Surgery w/ TA~\citep{Surgery_2024_ICML} & 43.3 & 67.4 & 55.3 & 24.7 & 34.7 \\
    ProbSurgery w/ TA~\citep{ProbSurgery_ICML2025} & 43.6 & 67.6 & 52.6 & 22.3 & 36.7 \\
    
    \midrule
    \rowcolor{mycolor}
    \textbf{\ours} & \quad \ \,  \textbf{49.8} \tiny{$\pm$0.3} & \quad \ \,  \textbf{73.1} \tiny{$\pm$0.2} & \quad \ \,  \textbf{45.3} \tiny{$\pm$0.6} & \quad \ \,  \textbf{18.8} \tiny{$\pm$0.5} & \quad \ \,  \textbf{26.2} \tiny{$\pm$0.1} \\
    \bottomrule
  \end{tabular}

\label{table:nyuv2_resnet50}
\vspace{-0.5em}
\end{table*}

\begin{table*}[t!]
    \caption{\textbf{Multi-task performance across 8 NLP tasks.} We present a performance comparison of merging the RoBERTa models fine-tuned on 8 tasks in GLUE.}

    \centering
    \tabcolsep=0.00em
    \footnotesize
    \begin{tabular}{l|llllllll|l}
        \toprule
            Method & \quad {CoLA} & \, {SST2} & \, {MRPC} & \, {STSB} & \, {QQP} & \, {MNLI} & \, {QNLI} & \, {RTE} & \quad {Avg.} \\
        \midrule
            Individual & \quad\hspace{0.15em} 60.2 & \hspace{0.4em} 94.0 & \hspace{0.6em} 89.2 & \hspace{0.5em} 90.6 & \hspace{0.35em} 91.4 & \hspace{0.5em} 87.2 & \hspace{0.5em} 92.7 & \hspace{0.3em} 79.1 & \hspace{0.7em} 85.6 \\
        \midrule 
            Weight Averaging & \quad\hspace{0.15em} 14.0 & \hspace{0.4em} 64.1 & \hspace{0.6em} 69.4 & \hspace{0.5em} 31.8 & \hspace{0.35em} 75.4 & \hspace{0.5em} 42.2 & \hspace{0.5em} 58.7 & \hspace{0.3em} 55.2 & \hspace{0.7em} 51.3 \\
            Task Arithmetic%
            &\quad\hspace{0.15em} 18.8 & \hspace{0.4em} 85.9 & \hspace{0.6em} 79.9 & \hspace{0.5em} 74.0 & \hspace{0.35em} 83.8 & \hspace{0.5em} 59.1 & \hspace{0.5em} 69.7 & \hspace{0.3em} 62.1 & \hspace{0.7em} 66.7 \\
            {MagMax}%
            & \quad\hspace{0.15em} 17.3 & \hspace{0.4em} 76.0 &\hspace{0.6em} 70.8 & \hspace{0.5em} 71.3 & \hspace{0.35em} 85.8 & \hspace{0.5em} 70.4 & \hspace{0.5em} 59.5 & \hspace{0.3em} 45.1 & \hspace{0.7em} 62.0 \\
            {Ties-Merging}%
            & \quad\hspace{0.15em} 20.5 & \hspace{0.4em} 84.4 & \hspace{0.6em} 81.1 & \hspace{0.5em} 58.2 & \hspace{0.35em} 85.7 & \hspace{0.5em} 64.7 & \hspace{0.5em} 74.8 &\hspace{0.3em} 43.0 & \hspace{0.7em} 64.0 \\
            LiNeS%
            & \quad\hspace{0.15em} 26.1 & \hspace{0.4em} 86.4 & \hspace{0.6em} 78.9 &\hspace{0.5em} 72.9 & \hspace{0.35em} 83.3 & \hspace{0.5em} 56.0 & \hspace{0.5em} 75.9 & \hspace{0.3em} 59.9 & \hspace{0.7em} 67.4 \\
            TSV-M%
                & \quad\hspace{0.15em} 28.7 & \hspace{0.4em} 88.4 & \hspace{0.6em} 83.1& \hspace{0.5em} 76.8 & \hspace{0.35em} {85.8} & \hspace{0.5em} {59.9} & \hspace{0.5em} {75.2} &\hspace{0.3em} 47.7 & \hspace{1em}68.2 \\
            ISO-Merging%
                & \quad\hspace{0.15em} 27.3& \hspace{0.4em} 86.2& \hspace{0.6em} 79.2& \hspace{0.5em} 73.7& \hspace{0.35em} 85.9& \hspace{0.5em} 71.7& \hspace{0.5em} 81.1&\hspace{0.3em} 70.4& \hspace{1em}71.9 \\
            EMR-Merging%
                & \quad\hspace{0.15em} 40.0 & \hspace{0.4em} 93.4 & \hspace{0.6em} 86.3 & \hspace{0.5em} 82.8 & \hspace{0.35em} \textbf{89.7} & \hspace{0.5em} \textbf{85.5} & \hspace{0.5em} \textbf{89.6} &\hspace{0.3em} 74.4 & \hspace{1em}80.2 \\

        \midrule
            Surgery w/TA%
            & \quad\hspace{0.15em} 46.8 & \hspace{0.4em} \ul{93.6} &\hspace{0.6em} \textbf{89.7} &\hspace{0.5em} 88.7 & \hspace{0.35em} 85.1 & \hspace{0.5em} 78.0 & \hspace{0.5em} 85.3 & \hspace{0.5em}76.5 & \hspace{1em}80.5 \\
            ProbSurgery w/TA%
            & \quad\hspace{0.15em} \ul{54.7} & \hspace{0.4em} \ul{93.6} & \hspace{0.6em} \textbf{89.7} & \hspace{0.5em} \ul{89.4} & \hspace{0.35em} \ul{85.2} & \hspace{0.5em} 77.6 & \hspace{0.5em} 85.3 & \hspace{0.5em}\ul{76.9} & \hspace{1em}\ul{81.6}\\
        \midrule
            \rowcolor{mycolor}
            \ours
            & \quad\hspace{0.15em} \textbf{60.0}\tiny{$\pm$0.1} & \hspace{0.4em} \textbf{93.8}\tiny{$\pm$0.3} & \hspace{0.6em} \ul{89.2}\tiny{$\pm$0.0} & \hspace{0.5em} \textbf{90.4}\tiny{$\pm$0.2} & \hspace{0.6em}\ul{85.2} \tiny{$\pm$1.1} & \hspace{0.5em} \ul{84.0}\tiny{$\pm$0.4}  & \hspace{0.8em}\ul{89.2} \tiny{$\pm$0.5} & \hspace{0.3em} \textbf{79.1}\tiny{$\pm$0.0} & \hspace{0.7em} \textbf{83.9}\tiny{$\pm$0.2} \\
            
        \bottomrule
    \end{tabular}
\centering
\label{tab:performance_roberta} 
\vspace{-0.5em}
\end{table*}

\subsection{Experimental Setup}

\noindent\textbf{Datasets and metrics.}
We follow the standard setups in~\citet{TaskArithmetic_2023_ICLR}, using fine-tuned weights on 8 image classification tasks, and extend it to 20 tasks following~\citet{Tall_Mask_2024_ICML}. We report accuracy for all vision tasks.
For dense prediction, we use NYUv2~\citep{NYUv2_ECCV_2012}, containing 1,449 RGB-D images. It includes three tasks: 13-class semantic segmentation, depth estimation, and surface normal estimation. We evaluate them with mIoU/pixel accuracy, absolute/relative error, and mean angular error, respectively.
For NLP tasks, we adopt the setups from~\citet{Dare_2024_ICML} and~\citet{EMRMerging_2024}, using fine-tuned weights for 8 GLUE tasks~\citep{GLUE_2018}. Task-specific metrics are employed: Matthews correlation for CoLA~\citep{CoLA_2019}, Pearson/Spearman correlations for STS-B~\citep{STSB_2017}, and accuracy for others.

\noindent\textbf{Models.} 
For vision experiments, we employ pre-trained CLIP~\citep{CLIP_2021_ICML} models, specifically ViT-{B/32, L/14}. Most analyses use ViT-B/32. To ensure comparability across 8 tasks, we use publicly available fine-tuned weights
from~\citet{TaskArithmetic_2023_ICLR}. For 14- and 20-task experiments, we fine-tune CLIP ViT-B/32 following the Task Arithmetic configuration.
For dense prediction, we adopt ResNet-50~\citep{ResNet_2016_CVPR} as the backbone, following~\citet{tang2024fusionbench}. 
We fine-tune ImageNet~\citep{ImageNet_CVPR2009} pre-trained models for each task.
For NLP tasks, we use RoBERTa-base~\citep{RoBERTa_2019}, leveraging publicly available fine-tuned weights from~\citet{EMRMerging_2024}.

\subsection{Main Results}

\noindent\textbf{Merging 8, 14, 20 vision tasks.} 
Table~\ref{tab:vit-b-l-8-14-20} demonstrates the superiority of our approach, surpassing all baselines across varying numbers of tasks and model scales.
Notably, while the performance of competing methods degrades sharply with more tasks, \ours remains consistent and stays close to the single-task expert reference.
Furthermore, it establishes a large margin over other test-time adaptive methods.
Per-task performance can be found in Appendix~\ref{appendix:vision_tasks}.

\noindent\textbf{Merging dense prediction tasks.}
Dense prediction tasks~\citep{zhang2022spatio,zhang2025memory}, such as segmentation~\citep{choi2023intra}, depth estimation~\citep{min2025s2m2}, and normal estimation~\citep{li2015depth}, pose significant challenges due to their inter-task heterogeneity{; for example, semantic segmentation relies on high-level semantic abstraction, whereas depth and surface normal estimation depend on low-level geometric features.} Despite this, most model merging studies focus on classification, leaving these {high-interference scenarios} largely unexplored.  
Table~\ref{table:nyuv2_resnet50} presents results for these tasks, highlighting the limitations of previous methods. ProbSurgery, which performs comparably to individual models on 8 classification tasks, suffers substantial performance drops in depth and normal estimation. Similarly, EMR-Merging, despite leveraging task-specific dynamic masks, struggles to maintain segmentation performance.  
In contrast, our approach effectively balances these tasks, maintaining performance close to individual models even in dense prediction.

\noindent\textbf{Merging 8 NLP tasks.} As shown in Table~\ref{tab:performance_roberta}, \ours substantially outperforms the best method on average. 
A key observation comes from CoLA, which evaluates grammatical acceptability and differs from other, more semantic GLUE tasks.
Due to this task heterogeneity, distillation-based methods (\eg., Surgery, ProbSurgery) and EMR-Merging suffer from a significant performance drop on CoLA, whereas \ours remains close to individual models across all tasks. 
These results highlight our method's robustness, as it not only maximizes average performance but also prevents significant degradation on any single task, enabling stable integration of diverse tasks.

\subsection{Empirical Analyses}
\label{sec:empirical_analysis}

\noindent\textbf{Transferable cross-task ability.} 
Our work argues that enhancing functional alignment is key to successful merging, and our pilot study showed that adapting a single layer is an effective way to achieve this. We now investigate if the adapted layer learns a truly transferable capability, \ie., if it can be reused as a plug-and-play component to improve other merging methods.
To test this, we use the merged encoders from two leading baselines, Task Arithmetic~\citep{TaskArithmetic_2023_ICLR} and AdaMerging~\citep{Adamerging_2024_ICLR}, as fixed backbones. We then replace their original zero-shot classification heads with the corresponding classifiers trained by our \ours process and evaluate the new model combinations without any further training.

In Table~\ref{tab:transfer_classifier}, we assess this transferability using two metrics. ``Merged'' indicates the accuracy of the final multi-task model, which combines a baseline's merged encoder (\ie., $\theta_{pre}^l+\sum_k \lambda_k^l \cdot \tau_k^l$) with the classifier. ``Cross'', on the other hand, quantifies average generalization by pairing an encoder built from a single scaled task vector for task $i$ (\ie., $\theta_{pre}^l+\lambda_i^l \cdot \tau_i^l$) with a classifier from a different task $j$ (where $i \neq j$) and averaging the results over all such pairs.

The results show a remarkable degree of transferability. When paired with the TA encoder, it boosts merged performance by 
10.5\%p and, critically, improves cross-task performance by 5.0\%p. 
We observe similar substantial gains with the AdaMerging encoder.
These significant gains demonstrate that the layer trained by \ours learns a robust and general function not overfitted to its original encoder. This provides strong evidence that our method effectively enhances functional alignment and produces highly transferable components.

\begin{table}[t]
    \centering
    \footnotesize
    \caption{\textbf{Cross-task transferability check.} 
    Classifiers trained with our method replace the original zero-shot classifiers (in a pre-trained model) connected to merged encoders (Task Arithmetic and AdaMerging) for evaluating different tasks. This replacement yields substantial performance gains on both merged and cross-task evaluations without training on target tasks, demonstrating the high transferability and improved functional alignment.}
    \label{tab:transfer_classifier}

    \begin{tabular}{@{}llcc@{}}
        \toprule
        Encoder & Classifier & Merged & Cross \\
        \midrule
        \multirow{2}{*}{Task Arithmetic } & Zero-shot & 69.1 & 49.8 \\
                            & \ours      & \textbf{79.6} \scriptsize(\textcolor{green!60!black}{+10.5}) & \textbf{54.8} \scriptsize(\textcolor{green!60!black}{+5.0}) \\
        \midrule
        \multirow{2}{*}{AdaMerging} & Zero-shot & 80.1 & 50.1 \\
                             & \ours      & \textbf{87.7} \scriptsize(\textcolor{green!60!black}{+7.6}) & \textbf{54.1} \scriptsize(\textcolor{green!60!black}{+4.0}) \\
        \bottomrule
    \end{tabular}
    \vspace{-0.5em}
\end{table}

\begin{figure}[t]
    \centering
    \begin{minipage}[t]{0.48\linewidth}
        \includegraphics[width=\linewidth]{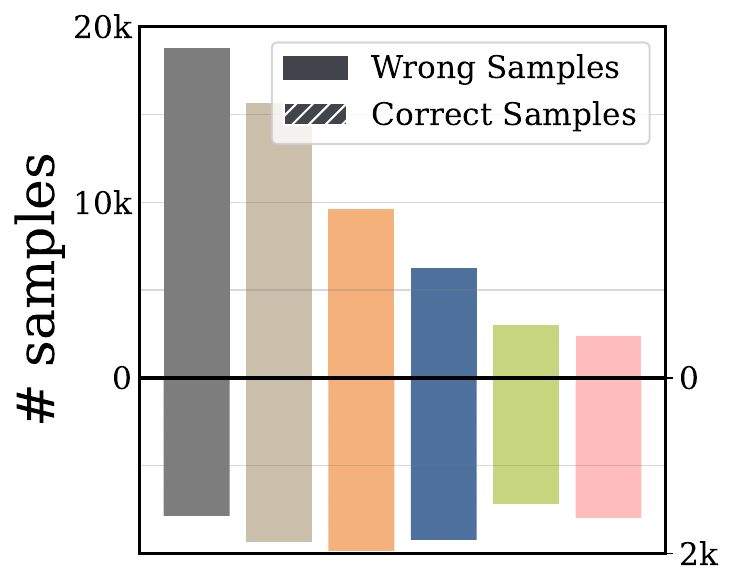}
        \subcaption[]{Prediction discrepancy}
        \label{fig:mergewrong}
    \end{minipage}
    \begin{minipage}[t]{0.47\linewidth}
        \includegraphics[width=\linewidth]{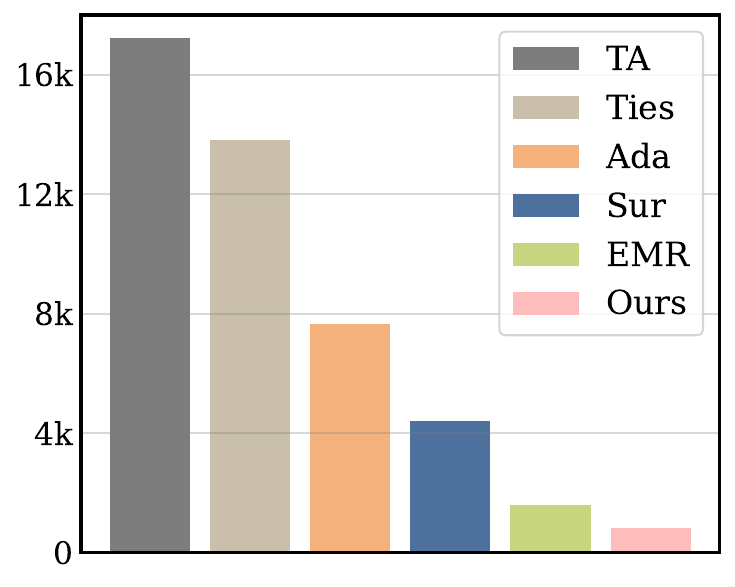}
        \subcaption[]{Overall difference}
        \label{fig:overall}
    \end{minipage}

    \caption{\textbf{Prediction discrepancies between merged and individual models.} (a) Upper bars indicate cases correctly predicted by individual models but missed by the merged model; lower hatched bars indicate the opposite. (b) Overall difference.}
    \label{fig:discrepancy}
    \vspace{-.5em}
\end{figure}

\noindent\textbf{Positive/negative transfer.}
To evaluate the synergistic effect of \ours, we analyze prediction discrepancies between the merged model and the individual models.
Figure~\ref{fig:mergewrong} visualizes this by distinguishing negative transfer (upper bars: merged model fails, individual succeeds) from positive transfer (lower bars: the reverse).
We note that examining positive transfer alone is insufficient to capture synergy, as gains on new samples may come at the cost of losing previously correct predictions. 
Figure~\ref{fig:overall} further summarizes the overall impact by computing the net difference between the upper and lower bars.
\ours achieves the smallest net difference, demonstrating that it best preserves individual model knowledge while benefiting from merging.

\begin{table}[t]
    \centering
    \footnotesize
    \caption{{\textbf{Merging models with different initializations.} We merge EuroSAT (DataComp-ViT) and DTD (OpenAI-ViT) from disjoint basins. While standard methods collapse due to severe loss barriers, \ours uniquely recovers performance, showing its capability to bridge the gap where linear connectivity fails.}}
    \label{tab:diff_init}
    \setlength{\tabcolsep}{5pt} %

    {
    \begin{tabular}{lccc}
        \toprule
        {Method} & {EuroSAT} & {DTD} & {Avg.} \\
        \midrule
        Individual (EuroSAT) & 98.1 & 3.6 & 50.8 \\
        Individual (DTD) & 2.0 & 79.4 & 40.7 \\
        \midrule
        Weight Averaging & 11.6 & 2.2 & 6.9 \\
        AdaMerging  & 8.6 & 2.1 & 5.4 \\
        Surgery  & 38.1 & 13.1 & 25.6 \\
        \midrule
        \ours & \textbf{96.2} \scriptsize(\textcolor{green!60!black}{+58.1}) & \textbf{62.0} \scriptsize(\textcolor{green!60!black}{+48.9}) & \textbf{79.1} \scriptsize(\textcolor{green!60!black}{+53.5})\\
        \bottomrule
    \end{tabular}
    }
\vspace{-0.5em}
\end{table}

{\noindent\textbf{Merging across disjoint basins.} Our theoretical analysis assumes that models reside within a shared optimization basin (\ie., cross-task linearity holds). To rigorously test \ours under conditions where this assumption is violated, we conducted a stress test by merging models derived from \textit{different initializations}, effectively forcing the merge across disjoint loss basins where a significant loss barrier exists. We merge a DTD model fine-tuned from OpenAI-ViT~\citep{CLIP_2021_ICML} and a EuroSAT model from DataComp-ViT~\citep{gadre2023datacomp}. Since standard Task Arithmetic is inapplicable due to the lack of a common pre-trained weight, we defined task vectors relative to the simple weight average of the two models.}

{Table~\ref{tab:diff_init} illustrates a catastrophic failure for standard methods. Weight Averaging collapses (6.9\%), confirming the presence of a severe loss barrier between the two models. Surgery also fails to recover performance (25.6\%), as external adapters cannot rectify a fundamentally misaligned backbone. In contrast, \ours successfully bridges this gap, recovering the average accuracy to 79.1\%. This result indicates that \ours is not merely applying distillation; it provides a practical solution for integrating models with severe functional or parameter-level discrepancies that cannot be resolved by existing merging methods. By jointly optimizing the merging coefficients and the internal layer, \ours effectively realigns the conflicting representations even when the parameter-space linearity breaks down.}

\begin{table}[t]
    \centering
    \footnotesize
    \caption{\textbf{Ablation on optimization components.} Merging coefficients and task-specific layer are highly complementary.}
    \label{tab:component_ablation}
    \setlength{\tabcolsep}{10pt} %
    \begin{tabular}{cc ccc}

        \toprule
        \multicolumn{2}{c}{Trainable} & \multicolumn{3}{c}{Avg. on \textit{N} Tasks} \\
        \cmidrule(r){1-2} \cmidrule(l){3-5}
        Coef & Layer & \textit{N}=8 & \textit{N}=14 & \textit{N}=20 \\
        \midrule
        \checkmark &            & 84.6 & 82.4 & 81.6 \\
                   & \checkmark & 88.2 & 83.0 & 75.0 \\
        \checkmark & \checkmark & \textbf{90.1} & \textbf{88.7} & \textbf{88.6} \\
        \bottomrule
    \end{tabular}
    \vspace{-0.7em}
\end{table}

\textbf{Component analysis.}
We ablate our joint optimization strategy to confirm that both the merging coefficients and the task-specific layer are necessary for optimal performance. 
In Table~\ref{tab:component_ablation}, the results confirm that jointly optimizing both yields a significant synergistic gain, consistently outperforming the optimization of either component alone. 
The analysis reveals a key trade-off: layer-only adaptation excels on 8 tasks but scales poorly as task interference grows in the fixed shared encoder. Conversely, coefficient-only optimization is more robust against an increasing number of tasks but is ultimately limited by a sub-optimal, unaligned task-specific layer. 
Notably, the merging coefficients add fewer than 0.001M trainable parameters, yet they substantially improve the 20-task result over layer-only adaptation (75.0\% $\rightarrow$ 88.6\%). This suggests that the two components are highly complementary, and that the gain arises from their synergy rather than increased trainable capacity.
\ours's joint optimization refines the shared encoder while concurrently adapting the task-specific layer to its merged representations, creating the most robust and scalable strategy.

\begin{figure}[!t]

    \centering
    \includegraphics[width=0.8\linewidth]{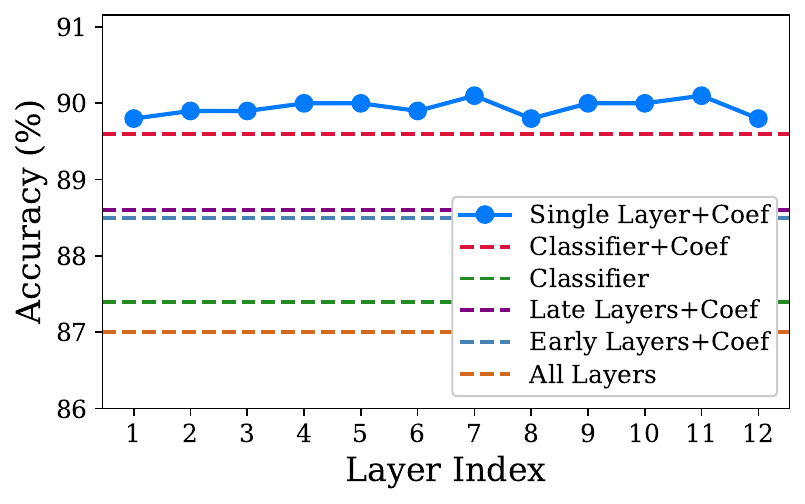}
    \vspace{-.5em}
    \caption{
    \textbf{Impact of training different task-specific layers.} 
    We compare merged model performance when different CLIP ViT-B/32 layers are made trainable.
    The x-axis denotes the trainable layer index; single-layer adaptation remains stable across all 12 layers, whereas adapting multiple layers degrades performance.
    }
    \label{fig:blocks}
    \vspace{-0.5em}
\end{figure}

\noindent\textbf{Effect of adjusting different layers.}
A natural question is whether the improvements observed in \ours are specific to classifier training. To explore this, we analyze the effect of training different layers beyond the classifier by updating merging coefficients along with each of the 12 transformer layers in the merged encoder.  As shown in Figure~\ref{fig:blocks}, training a single internal layer achieves accuracy comparable to training the classifier, suggesting that task-specific information can be effectively captured by refining a single layer rather than modifying the entire model. Identifying the optimal layer could further enhance performance. On the other hand, training multiple layers at once, either from the early stage (layers 1–6) or the later stage (layers 7–12), leads to a performance drop. 
This suggests that the gain is not merely due to increased trainable capacity, but comes from targeted single-layer adaptation that preserves task-agnostic knowledge.
See Table~\ref{tab:appendix_layer_coef_train} for details.

{\noindent\textbf{Improvement beyond classifier adaptation.} 
One might suspect that our performance gains stem primarily from fine-tuning the task-specific layer rather than improving the merged encoder representations. 
To isolate the quality of the jointly optimized merged encoder, we conduct an ablation study using ViT-B/32 across 8 tasks where we retain the merged encoder learned by \ours but replace the fine-tuned classifier with the original zero-shot classifier. 
Remarkably, this configuration achieves an average accuracy of {82.4\%}, representing a substantial improvement over the standard Task Arithmetic baseline with the same zero-shot classifier (69.1\%). 
This confirms that \ours significantly enhances the quality of the underlying merged encoder itself, independent of the classifier adaptation. Refer to Table~\ref{tab:intrinsic} for per-task performance.}

\begin{figure}[t]
    \centering
    \begin{minipage}[t]{0.48\linewidth}
        \includegraphics[width=\linewidth]{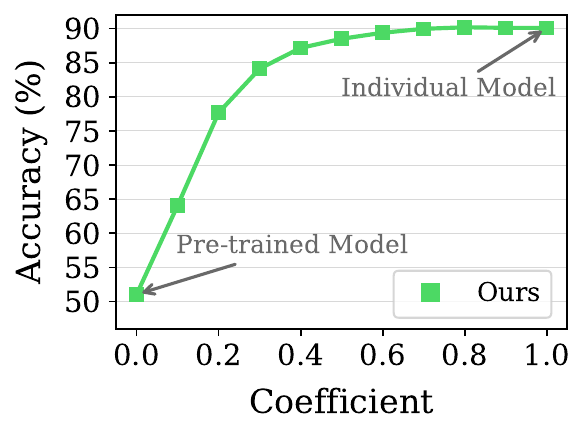}
        \vspace{-2em}
        \subcaption[]{Impact of supervisory model}
        \label{fig:pseudo_quality}
    \end{minipage}
    \hfill
    \begin{minipage}[t]{0.48\linewidth}
        \includegraphics[width=\linewidth]{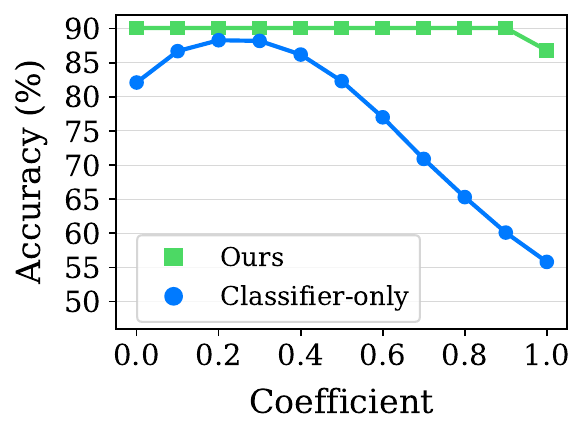}
        \vspace{-2em}
        \subcaption[]{Robustness to initialization}
        \label{fig:merg_coef_init_both}
    \end{minipage}
    \caption{\textbf{More analyses with merging coefficients.} (a) Refined supervisory models merged under various coefficients are tested. Our design choice -- using the unmerged individual models -- performs near-optimally. (b) Our method shows strong robustness to different initial merging coefficients.}
    \label{fig:prior_ablation}
    \vspace{-0.5em}
\end{figure}

\noindent\textbf{More studies with merging coefficients.}
We study whether individual model predictions are sufficient as supervisory signals. 
To refine them, we interpolate each individual model with the pre-trained model by scaling its task vector with a merging coefficient $\lambda\in[0,1]$ (where $\lambda=0$ and $\lambda=1$ represent the pre-trained and individual models, respectively).
Figure~\ref{fig:pseudo_quality} shows that performance improves as $\lambda$ increases, suggesting that guidance closer to the task-agnostic pre-trained model misses task-specific details.
Interestingly, a combined model ($\lambda=0.8$) performs best, which is about 90.2\%.
However, tuning $\lambda$ would require additional hyperparameter search, so using the individual models directly is a practical option. Efficiently searching for the coefficient is a promising direction for future work.
Also, our method benefits from robust merging coefficient initialization.
Figure~\ref{fig:merg_coef_init_both} shows that training only the classifier (\ie., with a fixed shared encoder) is sensitive to initialization, whereas jointly training the classifier and coefficients substantially improves robustness.

\begin{table}[t]
    \centering
    \footnotesize
    \caption{\textbf{Robustness to a corrupted expert.} In the 8-task ViT-B/32 setting, replacing one expert with a random-weight checkpoint severely degrades baselines, while \ours{} remains robust.}
    \label{tab:corrupt_expert}
    \setlength{\tabcolsep}{4pt}
    \begin{tabular}{lcccc}
        \toprule
        & Clean & \multicolumn{3}{c}{With corrupted GTSRB expert} \\
        \cmidrule(lr){3-5}
        Method & Avg. $\uparrow$ & Avg. $\uparrow$ & GTSRB $\uparrow$ & Other-7 Avg. $\uparrow$ \\
        \midrule
        Task Arithmetic & 69.1 & \phantom{0}6.0 & 4.8 & \phantom{0}6.2 \\
        AdaMerging      & 80.1 & \phantom{0}4.6 & 2.1 & \phantom{0}4.9 \\
        Surgery         & 87.5 & 18.8 & 1.7 & 21.2 \\
        WEMoE           & 89.4 & \phantom{0}4.7 & 5.4 & \phantom{0}4.6 \\
        \midrule
        \textbf{\ours}  & \textbf{90.1} & \textbf{77.3} & 1.7 & \textbf{88.1} \\
        \bottomrule
    \end{tabular}
\end{table}

\begin{table}[t]
    \centering
    \footnotesize
    \caption{\textbf{Robustness to a backdoored expert.} With one expert backdoored, \ours{} achieves the best clean/backdoored accuracy and the lowest off-task attack success rate (ASR).}
    \label{tab:backdoor_expert}
    \setlength{\tabcolsep}{5pt}
    \begin{tabular}{lccc}
        \toprule
        Method & Clean $\uparrow$ & Backdoor $\uparrow$ & Off-task ASR $\downarrow$ \\
        \midrule
        Task Arithmetic & 76.3 & 76.0 & 95.6 \\
        Ties-Merging    & 74.5 & 74.4 & 90.6 \\
        AdaMerging      & 83.1 & 82.9 & 98.3 \\
        Surgery         & 84.5 & 84.6 & 86.6 \\
        \midrule
        \textbf{\ours} & \textbf{86.1} & \textbf{86.0} & \textbf{75.6} \\
        \bottomrule
    \end{tabular}
\end{table}

\noindent\textbf{Robustness to unreliable experts.}
To further examine the effect of unreliable expert supervision, we conduct two additional evaluations on ViT-B/32.
First, we consider corrupted experts in the 8-task setting~\citep{TaskArithmetic_2023_ICLR} by replacing the GTSRB expert with a Gaussian random-weight model matched in mean and variance.
As shown in Table~\ref{tab:corrupt_expert}, \ours drops from 90.1 to 77.3 average accuracy, but the degradation is largely confined to the corrupted task, with the remaining tasks mostly preserved.
In contrast, other adaptive baselines suffer collapse across tasks even when only one expert is corrupted.
This suggests that \ours can limit the impact of a single corrupted expert, although it is not robust to arbitrary severe corruption.

Second, we evaluate backdoored experts following BadMerging~\citep{zhang2024badmerging}, where one of six experts, CIFAR100, is replaced with a model that behaves normally on clean inputs but produces attacker-specified outputs when a trigger patch is present.
We report the off-task backdoor setting, which measures whether the trigger effect transfers from the backdoored expert to other tasks, providing a diagnostic test of cross-task vulnerability.
As shown in Table~\ref{tab:backdoor_expert}, \ours achieves the highest clean accuracy (merging only clean experts) and backdoored accuracy (including one backdoored expert), while also obtaining the lowest off-task attack success rate among adaptive merging methods.
These results indicate that \ours remains more stable than existing adaptive approaches under intentionally poisoned expert supervision.

\begin{table}[t]
\centering
\footnotesize
\setlength{\tabcolsep}{4pt}
\caption{\textbf{Efficiency comparison on the 8-task ViT-B/32 setting}. Peak GPU memory includes expert inference under the sequential-update setting. Runtime is measured per adaptation step. $^\dagger$ denotes runtime with cached expert predictions.}
\label{tab:efficiency_summary}
\begin{tabular}{lccc}
\toprule
Method & Params. (M) & GPU Mem. (GB) & Runtime (ms) \\
\midrule
AdaMerging & $<0.001$ & 7.09 & 114 \\
Surgery    & 0.13     & 7.13 & 211 \\
WEMoE      & 7.16     & 8.01 & 428 \\
\midrule
\ours    & 0.39     & 7.12 & 215 \scriptsize{(111$^\dagger$)} \\
\bottomrule
\end{tabular}
\end{table}

\noindent\textbf{Computational cost.}
Table~\ref{tab:efficiency_summary} compares adaptation cost in the 8-task ViT-B/32 setting. \ours achieves 90.1\% average accuracy while requiring only 0.39M trainable parameters, compared to WEMoE with 89.4\% accuracy and 7.16M trainable parameters. This suggests that \ours gains from task synergy rather than increased trainable capacity. Expert supervision is performed under a sequential per-task update strategy: at each step, only the expert for the current task is used to generate pseudo-labels, so there is no need to load all experts onto the GPU simultaneously. 
Including expert inference, \ours uses 7.12GB of peak memory and runs at 215ms per step; caching expert predictions reduces the runtime to 111ms, close to AdaMerging.
Thus, \ours introduces manageable overhead while preserving lightweight adaptation. See Appendix~\ref{subsec:efficiency} for details.

\section{Conclusion}
In this work, we introduce \ours, an effective method that redefines the goal of model merging from mitigating interference to creating synergy, where tasks mutually enhance one another. 
\ours achieves this via a lightweight, test-time adaptive process: it jointly optimizes task-vector coefficients and a single task-specific layer on unlabeled data, using a robust self-labeling strategy for stable supervision. 
Our experiments confirm the broad effectiveness of \ours, which sets a new state-of-the-art across diverse benchmarks, including vision, dense prediction, and NLP, while scaling robustly as the number of tasks increases. 
Notably, \ours succeeds even in disjoint basins where prior baselines fail.
Finally, the learned adaptation is transferable and intrinsically improves the merged encoder beyond task-specific tuning.
Due to its strong performance and robustness in challenging regimes, \ours offers a practical solution for building powerful, scalable multi-task models.

\textbf{Limitation.}
As with most model merging methods, \ours depends on the quality of individual experts, since it uses their predictions as self-labels.
Therefore, it can still be affected by poor or biased experts despite its empirical robustness in our stress tests (Tables~\ref{tab:corrupt_expert} and~\ref{tab:backdoor_expert}). 
\ours requires access to unlabeled target inputs for adaptation, which may be restrictive under privacy or latency constraints; further characterizing the interaction between coefficient tuning and single-layer adaptation remains future work.

\section*{Acknowledgements}
This work was supported by the National Research Foundation  (NRF) grant funded by the Korean government (MSIT) (RS-2026-25477177). This research was supported by the MSIT(Ministry of Science and ICT), Korea, under the Graduate School of Virtual Convergence support program(IITP-2026-RS-2023-00254129) supervised by the IITP(Institute for Information \& Communications Technology Planning \& Evaluation). This research was supported by the AI Computing Infrastructure Enhancement (GPU Rental Support) User Support Program (RQT-25-110063) funded by the Ministry of Science and ICT (MSIT), Republic of Korea.

\section*{Impact Statement}
This paper presents work whose goal is to advance the field of machine learning. There are many potential societal consequences of our work, none of which we feel must be specifically highlighted here.

\bibliography{main}
\bibliographystyle{icml2026}

\newpage
\appendix
\onecolumn

\renewcommand\thefigure{\Alph{figure}}    
\setcounter{figure}{0}  
\renewcommand\thetable{\Alph{table}}
\setcounter{table}{0} 

\noindent{\Large \textbf{Appendix}} \\

\noindent  This appendix provides an overview of our experimental setup and baselines, further experimental results across tasks,  a theoretical proof, and additional empirical analyses. The detailed descriptions of each section are summarized as follows: %
\vspace{-0.3em}
\begin{itemize}[leftmargin=1.5em, itemsep=0pt]
    \item {Appendix~\ref{appendix:related_work}}: Related work;
    \item {Appendix~\ref{appendix:setup}}: Detailed experimental setup and hyperparameter configurations;
    \item {Appendix~\ref{appendix:experiment}}: Additional results for motivation, pilot study, vision, and NLP tasks;
    \item {Appendix~\ref{appendix:proof}}: Proof for Proposition 3.1;
    \item {Appendix~\ref{appendix:analyses}}: Further analyses on task-specific layers, loss functions, convergence, cross-task linearity, confidence-based filtering, computational efficiency, and sparsity.

\end{itemize}

\section{Related Work}\label{appendix:related_work}

\subsection{Task Interference in Multi-task Model Merging}

To overcome the computational cost and task conflicts~\citep{chen2018gradnorm,PCGrad_2020_NeurIPS} of traditional Multi-Task Learning (MTL)~\citep{caruana1997multitask,vandenhende2021multi}, model merging has emerged as an efficient alternative that combines independently fine-tuned models without expensive joint training~\citep{ModelSoups_2022_ICML,WiseFT_2022_CVPR,PAINT_NeurIPS_2022}.

A foundational work in model merging for MTL is Task Arithmetic~\citep{TaskArithmetic_2023_ICLR}, which introduces task vectors, defined as the difference between pre-trained and fine-tuned models.
This concept enables the arithmetic combination of multiple models. 
Subsequent research has focused on refining this merging process. 
\citet{TiesMerging_2023_NeurIPS} and \citet{PCB_NeurIPS2024} directly address interference by assessing intra-task parameter significance and inter-task conflicts, and then removing parameters identified as redundant or conflicting.
Departing from using flattened task vectors, \citet{TSV_CVPR2025} and \citet{Iso-cts_ICML2025} leverage low-rank approximation to analyze the structural information preserved in per-layer task matrices, thereby resolving conflicts between tasks.
Test-time adaptive approaches improve merging by either adjusting merging coefficients to mitigate interference~\citep{Adamerging_2024_ICLR,WeMoE_2024_ICML} or tuning additional modules to minimize feature discrepancy between individual and merged models~\citep{Surgery_2024_ICML,ProbSurgery_ICML2025}.

A distinct line of research seeks to minimize task interference before merging by pursuing weight disentanglement, an ideal where task vectors do not affect each other at all~\citep{tangent_NeurIPS2023,attnonly_ICLR2025,PartialLinear_ICLR2024}.
We argue the conventional goal of non-interference is too limited. We propose the more ambitious objective of creating synergy, where tasks actively improve each other's performance through cross-task enhancement, rather than merely avoiding conflict.

\subsection{Task Synergy and Positive Transfer}

Classical MTL views learning from related tasks as a source of inductive bias that can improve generalization~\citep{caruana1997multitask}. Later studies analyze task relationships, task affinities, and task groupings, including distinctions between cooperative and competing tasks~\citep{zamir2018taskonomy, standley2020tasks}. 
Recent work further studies task interaction through task affinity estimation and adaptive grouping methods, aiming to reduce negative transfer by identifying or selecting cooperative task groups~\citep{li2024scalable, jeongselective}. 
\citet{huang2024going} model cross-task interaction more explicitly through synergy embeddings.

Continual learning makes a similar distinction between interference and transfer. Forward transfer measures whether previously acquired knowledge improves future tasks, while backward transfer measures whether learning a new task improves previous tasks \citep{lopez2017gradient, riemer2019learning, lin2022beyond}. 
These works motivate a broader view in which learning systems should not merely avoid forgetting, conflict, or interference, but should exploit beneficial interactions across tasks.

\ours differs in that it studies positive transfer in post-hoc model merging. Independently fine-tuned experts are merged after training, and synergy manifests as cross-task functional compatibility between the merged encoder and task-specific predictors. 
Accordingly, we evaluate synergy not only by average accuracy but also by sample-level positive and negative transfer relative to each task expert.

\section{Experimental Setup}\label{appendix:setup}

\subsection{Training Details}
\indent\textbf{Main experiments.} We initialize the layer-wise merging coefficients for all layers to 0.3 by default, following the values used in the previous method~\citep{Adamerging_2024_ICLR}. However, for the 14 and 20 vision tasks, the coefficients are initially set to 0.1, as using 0.3 incurred significantly lower Task Arithmetic~\citep{TaskArithmetic_2023_ICLR} performance for these tasks, presumably due to the increased number of tasks. 
We use the Adam~\citep{Adam_ICLR2015} optimizer with momentum parameters (0.9, 0.999) to update the coefficients and the classifier. For vision tasks, the learning rate is set to 0.01 when training the classifier and 0.001 when training only the merging coefficients. For dense prediction tasks, the learning rate of 0.0001 is used. For NLP tasks, the learning rate of 0.0005 is used, regardless of whether the classifier, the merging coefficients, or both are being updated.
Consistent with prior works~\citep{Adamerging_2024_ICLR,WeMoE_2024_ICML,Surgery_2024_ICML}, vision tasks are trained over 500 iterations, using a batch size of 32. Dense prediction tasks are trained for 25 iterations with a batch size of 16. NLP tasks are trained over 1,000 iterations with a batch size of 32. 
For each task, the loss is computed, and the model parameters are updated immediately following the forward pass for each batch. To enable sequential processing, the order of tasks is randomized.
To ensure a fair evaluation, the number of samples per task and all other experimental settings are identically applied to the other test-time adaptive model merging methods.

\noindent\textbf{Discussions on implementation.} Previous methods~\citep{Adamerging_2024_ICLR,WeMoE_2024_ICML} sample a batch for each task, pass each batch through a shared backbone, and then process it through the corresponding task-specific head, which is typical. The losses from each task are then combined, and a single gradient update is performed after completing the forward pass for all tasks.
However, this approach becomes less efficient as the number of tasks or model size increases. 

To address this, we experiment with a sequential update strategy, where updates are applied immediately after each task's forward pass instead of the typical one. Interestingly, this sequential update approach gives a positive byproduct: performance improvements over the traditional approach. While the original AdaMerging achieves an average accuracy of 80.1\%, our sequential update strategy improves it to 81.6\% for the 8 vision tasks using ViT-B/32. Similarly, our method also enjoys a gain when using sequential updates. We argue that this improvement may result from mitigating catastrophic forgetting in continual learning~\citep{kirkpatrick2017overcoming} by training tasks sequentially rather than simultaneously.

For vision tasks, we employ a confidence-based filtering strategy to ensure the model learns from reliable supervisory signals. 
Motivated by uncertainty-based reliability cues used in robust adaptation~\citep{oh2024dawin} and the need for calibrated confidence for reliable predictions~\citep{Carot_2024_NeurIPS}, we use a relative confidence-based criterion instead of a fixed threshold.
Specifically, we exclude samples where the merged model's top-1 confidence is higher than that of the individual models. This ensures the merged model is supervised by more reliable data, leading to more effective adaptation and improved performance.

\noindent\textbf{Pilot study.} 
The aforementioned training details are applied almost identically to the experiments in the pilot study of our main paper. As stated in the main paper, only the classifier was chosen (as a straightforward and practical option) to train on the given Task Arithmetic merged weights for the 8 vision tasks using ViT-B/32. A difference is that, instead of using the test dataset, the training dataset is used in a supervised manner to produce the results closer to the upper bound, in which the difference stands out more. We train the models for only one epoch.

\subsection{Baseline Details}

We compare our method against a diverse set of baselines, ranging from simple merging strategies to advanced methods that employ different mechanisms to address conflicts between tasks.

\noindent\textbf{Pretrained} indicates a model that predicts multiple tasks without additional fine-tuning for task-specific requirements. However, the absence of task-specific information for downstream tasks generally leads to poor performance.

\noindent\textbf{Individual} refers to the fine-tuning of individual pre-trained models for each task. Since there is no interference between tasks, it has commonly been used as a strong reference for task-specific performance.

\noindent\textbf{Weight Averaging} merges multiple individual models by directly averaging their parameters to create a single model for multi-task learning. Although simple, this method lacks task-specific adjustments.

\noindent\textbf{Task Arithmetic}~\citep{TaskArithmetic_2023_ICLR} defines the difference between the fine-tuned and pre-trained model parameters as a task vector. By combining multiple task vectors and adding them to the pre-trained model, it enables multi-task learning.

\noindent\textbf{MagMax}~\citep{marczak2025magmax} merges task vectors~\citep{TaskArithmetic_2023_ICLR} by selecting the parameter with the largest magnitude for each position, consolidating knowledge into a single model without retaining task-specific data.

\noindent\textbf{Ties Merging}~\citep{TiesMerging_2023_NeurIPS} highlights the importance of addressing interference in task arithmetic-based merging. It involves removing redundant parameters from the task vector and resolving parameter sign conflicts.

\noindent\textbf{PCB-Merging}~\citep{PCB_NeurIPS2024} resolves conflicting parameter values that arise during model merging by using intra-task importance and inter-task similarity to rescale and prune task vectors.

\noindent\textbf{LiNeS}~\citep{Lines_ICLR_2025}
observes that reducing the influence of shallow layers helps prevent distortion of general representations, thereby simplifying merging coefficient selection by allowing them to increase linearly with layer depth.

\noindent\textbf{Consensus TA}~\citep{Tall_Mask_2024_ICML} enhances Task Arithmetic~\citep{TaskArithmetic_2023_ICLR} by using Consensus Merging, which retains only weights beneficial to multiple tasks while eliminating irrelevant or task-specific weights. This process uses task-specific binary masks to identify relevant weights and forms a consensus mask to minimize task interference.

\noindent\textbf{TSV-M}~\citep{TSV_CVPR2025} 
decomposes per-layer task matrices using Singular Value Decomposition (SVD) to obtain low-rank Task Singular Vectors (TSVs). It then decorrelates them to mitigate interference when merging models.

\noindent\textbf{ISO-Merging}~\citep{Iso-cts_ICML2025} 
improves alignment by flattening the singular value spectrum of the merged task matrix, creating a uniform common subspace. To better preserve unique features, it further enhances this common subspace by incorporating task-specific singular vectors.

\noindent\textbf{EMR-Merging}~\citep{EMRMerging_2024} involves three steps: Elect, Mask, and Rescale-Merging. These steps select key parameters to form a unified model, apply task-specific masks for each task, and adjust scales to achieve better performance.

\noindent\textbf{AdaMerging}~\citep{Adamerging_2024_ICLR} adaptively learns merging coefficients at the task or layer level by minimizing the entropy of predictions on unlabeled test data.

\noindent\textbf{Representation Surgery}~\citep{Surgery_2024_ICML} reduces representation bias by training a task-specific module that aligns the merged model’s features with those of the individual models.

\noindent\textbf{WEMoE}~\citep{WeMoE_2024_ICML} utilizes a Mixture of Experts (MoE) module to dynamically separate and integrate shared and task-specific knowledge based on input samples. By training the router on unlabeled test data, it optimizes routing weights and improves task-specific performance.

\noindent\textbf{ProbSurgery}~\citep{ProbSurgery_ICML2025} mitigates representation bias by modeling the bias as a learnable distribution to better capture the uncertainty that arises from parameter interference.

\noindent\textbf{LOT Merging}~\citep{lotmerging_NeurIPS2025} mitigates feature drift through a convex quadratic optimization approach, which allows for efficient closed-form solutions requiring minimal data.

\begin{table*}[t]
  \caption{Performance comparison on the original test set versus the average of corrupted test sets.}
  \label{table:clean_vs_corrupted_average}
  \centering
  \setlength{\tabcolsep}{1.5mm}
    \fontsize{8}{9}\selectfont
  \begin{tabular}{l|ccccc|ccccc}
    \toprule
    Method & \multicolumn{5}{c|}{Original Test Set} & \multicolumn{5}{c}{Corrupted Test Set (Average)} \\
    \cmidrule(lr){2-6} \cmidrule(lr){7-11}
                    & Cars & EuroSAT & RESISC45 & GTSRB & Avg. & Cars & EuroSAT & RESISC45 & GTSRB & Avg. \\
    \midrule
    Individual      & 77.7 & 99.9 & 96.1 & 98.7 & 93.1 & 73.0 & 54.1 & 84.9 & 86.2 & 74.5 \\
    Task Arithmetic & 67.0 & 94.0 & 82.6 & 75.1 & 79.7 & 61.3 & 46.6 & 68.6 & 47.0 & 55.9 \\
    Ties-Merging    & 67.5 & 83.7 & 79.3 & 65.4 & 74.0 & 62.6 & 41.3 & 67.1 & 39.1 & 52.5 \\
    PCB-Merging     & 70.8 & 89.6 & 84.6 & 80.7 & 81.4 & 65.3 & 46.0 & 72.1 & 52.9 & 59.1 \\
    LiNeS           & 69.5 & 96.3 & 84.6 & 82.4 & 83.2 & 64.1 & 50.1 & 70.4 & 53.1 & 59.4 \\
    Consensus TA    & 67.1 & 95.4 & 78.0 & 84.9 & 81.4 & 61.4 & 50.1 & 48.0 & 70.2 & 57.4 \\
    Iso-Merging         & 73.6 & 96.3 & 91.3 & 93.4 & 88.6 & 67.5 & 51.4 & 64.2 & 76.9 & 65.0 \\
    AdaMerging & 76.2 & 96.0 & 87.5 & 97.0 & 89.2 & 68.5 & 44.4 & 74.6 & 81.4 & 67.2 \\
    Surgery w/ Ada  & 73.4 & 98.5 & 91.2 & 97.8 & 90.2 & 67.2 & 52.4 & 77.8 & 78.0 & 68.8 \\
    WEMoE           & 79.1 & 99.3 & 95.5 & 99.1 & 93.2 & 74.3 & 43.3 & 86.9 & 70.1 & 68.6 \\
    \ours         & 77.6 & 99.4 & 95.3 & 98.3 & 92.7 & 72.9 & 53.1 & 84.3 & 84.6 & 73.7 \\
    \bottomrule
  \end{tabular}
\end{table*}

\begin{figure*}[t]
    \centering
    \begin{subfigure}{0.4\textwidth}
        \includegraphics[width=\linewidth]{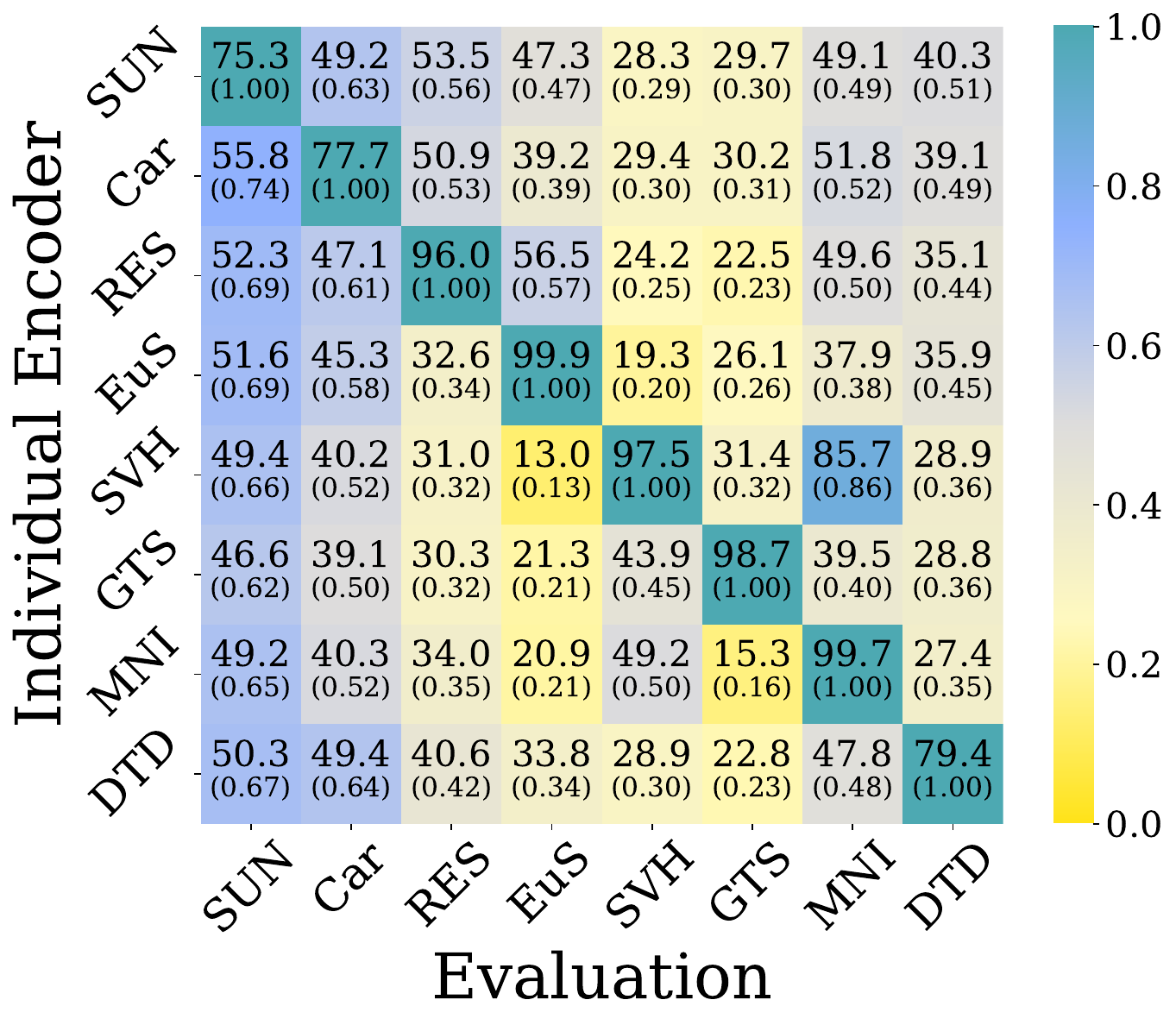}
        \caption{Fixed Classifier}
        \label{fig:appendix_crosstask_fixed}
    \end{subfigure}
    \hspace{2em}
    \begin{subfigure}{0.4\textwidth}
        \includegraphics[width=\linewidth]{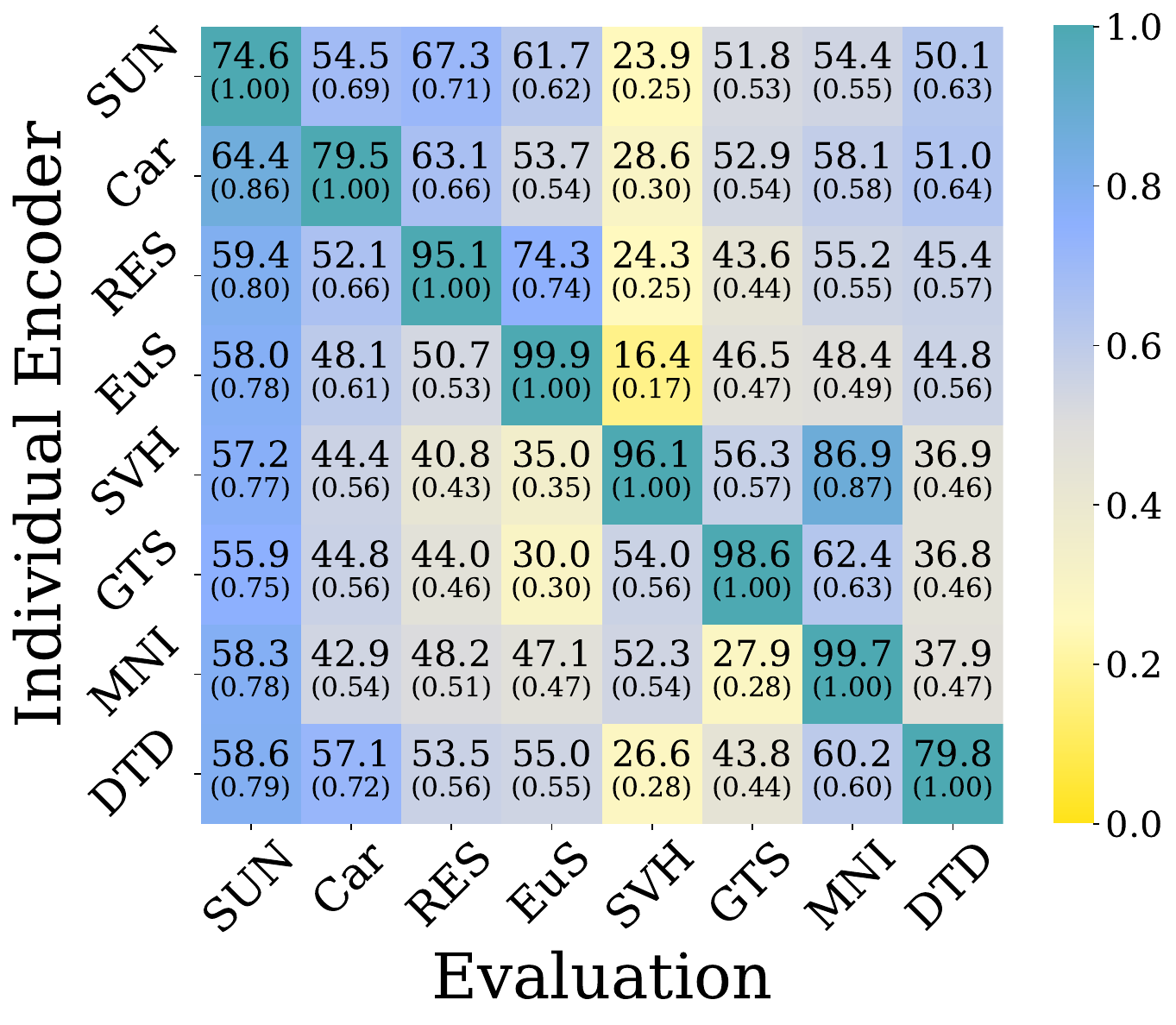}
        \caption{Classifier Training with TA (Coef=0.3)}
        \label{fig:appendix_crosstask_ta}
    \end{subfigure}

    \caption{\textbf{Cross-task evaluations across diverse encoders and heads from different tasks.} Each cell \((i, j) \) displays the accuracy when features from task \(i\)'s visual encoder are classified by task \(j\)'s classifier. The upper value in a cell shows the absolute task accuracy, and the number below (in parentheses) denotes the relative performance compared to the diagonal. (a) Base (untrained) classifiers generally deteriorate when evaluated on other tasks or using different encoders. (b) Classifiers trained with Task Arithmetic (TA) features show mostly reduced performance loss (\ie., higher relative numbers). This suggests that adjusting task-specific layers effectively achieves better functional alignment.}
    \label{fig:appendix_crosstask_matrix}
\end{figure*}

\section{More Experimental Results}
\label{appendix:experiment}
\subsection{Experiments on corrupted test datasets}
\label{appendix:corruption}
To evaluate the robustness of merging methods against distribution shifts, we conduct experiments on corrupted datasets. Following the protocol of ImageNet-C~\citep{Imagenet-C_ICLR2019}, we apply 7 distinct types of corruption (\eg., motion blur, impulse noise, gaussian noise, pixelate, spatter, contrast, and JPEG compression) at severity level 5 to the test sets of 4 vision tasks (Cars, EuroSAT, RESISC45, GTSRB). This setup provides a rigorous testbed for evaluating performance on out-of-distribution data.
In the main paper, we presented the average performance across various corruption types and tasks to demonstrate the overall vulnerability of training-free methods to distribution shifts (Figure~\ref{fig:clean_corruption}). This section provides a comprehensive breakdown of those results.

Table~\ref{table:clean_vs_corrupted_average} shows the average performance across all 7 corruption types, revealing specific vulnerabilities in baseline methods. For instance, WEMoE and AdaMerging exhibit a significant performance drop on the EuroSAT task, while some training-free methods are particularly susceptible to GTSRB. In contrast, our method maintains significantly higher accuracy across all tasks, demonstrating superior robustness. For a more granular analysis, Table~\ref{table:clean_corrupted_all} details the performance of each merging method on all 4 tasks, individually evaluated across each of the 7 corruption types.

\subsection{Pilot Study}
\label{appendix:pilot_cross}
Figure~\ref{fig:appendix_crosstask_fixed} shows cross-task evaluations via individual visual encoders with their respective base classifiers; 
Figure~\ref{fig:appendix_crosstask_ta} shows results from combining individual visual encoders with trained classifiers from Task Arithmetic.
We observe consistent improvements in cross-task performance when using the classifiers trained on other tasks.
For instance, integrating the GTSRB~\citep{GTSRB_2011} classifier trained on Task Arithmetic features with the SVHN~\citep{SVHN_2011} encoder achieves 56.3\% accuracy (Figure~\ref{fig:appendix_crosstask_ta}), surpassing the base classifier's 31.4\% (Figure~\ref{fig:appendix_crosstask_fixed}).
These results suggest that training the classifier on merged features for the GTSRB task also facilitates the alignment of individual SVHN features with the GTSRB task. 
This demonstrates the trained classifier's enhanced ability to achieve functional alignment with the encoder.
Figure \ref{fig:additional_cross-task_all} presents the cross-task evaluation results of a classifier trained on a Task-Arithmetic merged encoder with merging coefficients ranging from 0.1 to 1.0, showing the task-pair results corresponding to the main paper’s Figure~\ref{fig:crosstask_results}. From this, we can observe a clear variance in performance depending on the choice of the merging coefficient.
This suggests that even when task-specific layers are trained, the choice of a shared encoder can lead to suboptimal performance.

\subsection{Vision Tasks}
\label{appendix:vision_tasks}
We reported the average performance of ViT-B-32 and ViT-L-14 on 8, 14, and 20 tasks in Table~\ref{tab:vit-b-l-8-14-20} of the main paper.
In Tables~\ref{tab:vitb32_8tasks}, \ref{tab:14_tasks_results_b32}, \ref{tab:20_tasks_results_b32},
\ref{tab:vitl14_8tasks}, \ref{tab:14_tasks_results_l14}, and \ref{tab:20_tasks_results_l20},
we present the per-task performance of ViT-B-32 and ViT-L-14 for the 8-, 14-, and 20-task settings.
For each task, our method consistently outperforms other approaches and achieves performance close to that of individual models.

\begin{wrapfigure}{r}{0.42\linewidth}
\vspace{-1em}
    \centering
    \includegraphics[width=\linewidth]{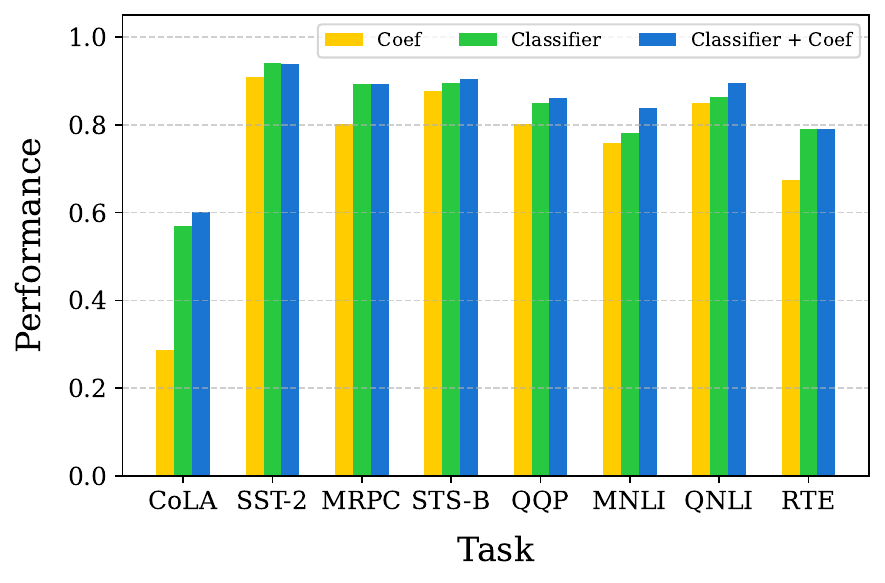}
    \caption{\textbf{Multi-task performance when merging RoBERTa models on 8 tasks.} Yellow indicates training only coefficients, green indicates training only the classifier, and blue indicates training both in our method.}
    \label{fig:appendix_nlp}
\end{wrapfigure}

\begin{table}[t]
\caption{{\textbf{Multi-task performance across 8 generative tasks.} We present a performance comparison of merging the FLAN-T5 models fine-tuned on 8 NLP tasks in GLUE.}}
\footnotesize
\centering
\label{tab:flan}
{\begin{tabular}{lccccccccc}
\toprule
Method & CoLA & SST2 & MRPC & STS-B & QQP & MNLI & QNLI & RTE & Avg. \\
\midrule
Individual    & 75.0& 83.4& 87.5& 91.5& 85.4& 85.9& 93.6& 88.7& 86.4\\
\midrule
Weight Average    & 69.1 & 91.7 & 79.4 & 73.2 & 83.9 & 62.6 & 89.8 & 81.2 & 78.9 \\
Task Arithmetic~\citep{TaskArithmetic_2023_ICLR}   & 70.5 & 92.3 & 78.4 & 77.8 & 83.6 & 57.8 & {90.2} & 80.5 & 78.9 \\
DARE~\citep{Dare_2024_ICML}              & 69.2 & 91.7 & 79.7 & 74.1 & {83.9} & 63.1 & 89.8 & 81.2 & 79.1 \\
TIES-Merging~\citep{TiesMerging_2023_NeurIPS}      & 70.3 & 91.7 & 78.9 & 78.3 & 83.5 & 65.0 & {90.2} & 81.6 & 79.9 \\
DARE + TIES       & 70.7 & 91.7 & 78.2 & 80.3 & 83.6 & 56.8 & {90.2} & 80.1 & 79.0 \\
\midrule
AdaMerging\citep{Adamerging_2024_ICLR}        & {70.9} & 90.7 & 78.7 & 67.4 & 81.1 & 80.4 & 89.4 & {82.0} & 80.1 \\
WEMoE~\citep{WeMoE_2024_ICML}        & {70.7} & 91.5& 78.2& 72.5& 82.4& 76.8& 90.0& 78.7& 80.1\\
\midrule
SyMerge (Ours)    & 69.1 & {93.4} & {79.7} & {82.6} & 83.4 & {82.9} & 88.1 & 80.9 & {82.5} \\
\bottomrule
\end{tabular}}
\end{table}

\subsection{NLP Tasks}
We conduct experiments to investigate how the training of the classifier and merging coefficients, respectively, affects the performance of NLP tasks in \ours, as presented in Table~\ref{tab:performance_roberta} of the main paper.
Figure~\ref{fig:appendix_nlp} shows the results of merging RoBERTa~\citep{RoBERTa_2019} models for 8 NLP tasks using the predictions of individual models as guidance, optimized through the cross-entropy loss. As mentioned in the main paper, evaluation metrics differ across tasks: the Matthews correlation coefficient is used for CoLA~\citep{CoLA_2019}, the average of Pearson and Spearman correlations is applied to STS-B~\citep{STSB_2017}, and the accuracy is used for all other tasks~\citep{SST2_2013,MRPC_2005,QQP_2017, MNLI_2017, QNLI_2016_EMNLP,RTE_2007}. The approach where both the classifier and coefficients are trained simultaneously corresponds to \ours. To examine the role of classifier training in NLP tasks, ablation experiments are conducted by removing specific components. The results show that training only the coefficients leads to the lowest performance while training only the classifier achieves a relatively high performance. Furthermore, training both the classifier and coefficients together demonstrated a complementary effect, achieving the highest performance.

{Additionally, we evaluate SyMerge on a larger sequence-to-sequence model, FLAN-T5 Base~\citep{flan-t5}, using 8 GLUE tasks cast in a text-generation format to test scalability. We use a batch size of 4 for 1000 iterations, following the FusionBench protocol~\cite{tang2024fusionbench}. As shown in Table~\ref{tab:flan}, SyMerge consistently outperforms test-time merging methods such as AdaMerging and WEMoE in this larger LM setting and reduces the gap to individually trained experts. Since generation models output token-level distributions, SyMerge applies directly by matching the merged model’s distribution to each expert’s distribution on unlabeled target text with cross-entropy.}


\begin{table}[t]
    \centering
    \caption{{\textbf{Robustness to Data Scarcity.} We evaluate \ours in (a) Online TTA (streaming) and (b) Limited-Data scenarios on ViT-B/32.}}
    \label{tab:robustness_analysis}

    \begin{subtable}[t]{0.48\linewidth}
        \centering
        \caption{{{Online TTA Performance.}}}
        \resizebox{\linewidth}{!}{
            {\begin{tabular}{lcccccc}
                \toprule
                \multirow{2}{*}{{Method}} & \multicolumn{6}{c}{{Seen Samples}} \\
                \cmidrule(lr){2-7}
                 & {10} & {20} & {50} & {100} & {200} & {500} \\
                \midrule
                AdaMerging & 70.0 & 71.5 & 71.1 & 72.0 & 72.6 & 74.1 \\
                Surgery & 70.0 & 71.1 & 70.9 & 73.4 & 73.2 & 73.6 \\
                \midrule
                \ours & \textbf{73.2} & \textbf{75.3} & \textbf{75.7} & \textbf{77.1} & \textbf{80.0} & \textbf{82.1} \\
                \bottomrule
            \end{tabular}}
        }
        \label{tab:online_tta}
    \end{subtable}
    \hfill 
    \begin{subtable}[t]{0.43\linewidth}
        \centering
        \caption{{{Data Efficiency.}}}
        \resizebox{\linewidth}{!}{
            {\begin{tabular}{lccccc}
                \toprule
                \multirow{2}{*}{{Method}} & \multicolumn{5}{c}{{Unlabeled Data Ratio}} \\
                \cmidrule(lr){2-6}
                 & {1\%} & {5\%} & {10\%} & {20\%} & {100\%} \\
                \midrule
                AdaMerging & 76.0 & 77.2 & 77.5 & 77.7 & 80.1 \\
                Surgery & 72.8 & 77.3 & 78.4 & 78.9 & 80.9 \\
                \midrule
                \ours & \textbf{78.1} & \textbf{84.5} & \textbf{86.7} & \textbf{88.0} & \textbf{90.1} \\
                \bottomrule
            \end{tabular}}
        }
        \label{tab:data_efficiency}
    \end{subtable}
\end{table}
\subsection{Robustness to Data Scarcity} 

In {real-world deployments, models often face a stream of queries or strictly limited data access rather than a large, pre-collected batch. To evaluate \ours in such data-scarce and practical scenarios, we conduct experiments in two distinct settings: \textit{Online Test-Time Adaptation} and \textit{Limited-Data Adaptation}.}

{\noindent\textbf{Online Adaptation.} First, we simulate a streaming setting where the model receives unlabeled test samples sequentially (batch size of 1). For each incoming sample, the model updates its parameters and is immediately evaluated on that sample, prohibiting multiple passes. Table~\ref{tab:online_tta} compares the cumulative average performance of \ours against leading test-time adaptation methods on ViT-B/32. \ours demonstrates remarkable data efficiency. Specifically, in terms of \textit{rapid adaptation}, with only 10 samples seen, \ours achieves 73.2\% accuracy, surpassing AdaMerging and Surgery by over 3.0\%p. Furthermore, regarding \textit{sustained improvement}, as the data stream continues to 500 samples, \ours continuously improves to 82.1\%, establishing a significant margin (${\sim}$8.0\%p) over the baselines, which tend to plateau early.}

\noindent\textbf{Limited-Data Efficiency.} {We further evaluated performance using varying amounts of unlabeled test data (ranging from 1\% to 100\% of the test set). As shown in Table~\ref{tab:data_efficiency}, \ours consistently outperforms optimization-based TTA methods (AdaMerging, Surgery) as well as training-free merging baselines such as Task Arithmetic (69.1\%), Ties Merging (72.9\%), and LiNeS (74.1\%). Notably, even with extremely limited data (1\%), \ours achieves 78.1\% accuracy, significantly outperforming the baselines. This result confirms that our expert-guided self-labeling strategy creates a reliable learning signal even from minimal data, making \ours highly suitable for practical scenarios where data collection is expensive or restricted.}

\begin{table*}[!t]
    \centering\tabcolsep=0.3em
    \caption{\textbf{Per-task performance for the ablation that removes classifier adaptation.} We merge ViT-B/32 models on 8 tasks and evaluate the merged encoder from \ours while using the original zero-shot classifier for each task.}
    \label{tab:intrinsic}
    \footnotesize
    \begin{tabular}{l|cccccccc|c}
        \toprule 
            Method       & SUN397          & Cars            & RESISC45        & EuroSAT         & SVHN            & GTSRB           & MNIST           & DTD             & {Avg.} \\
            \midrule                                      
            Task Arithmetic encoder + zero-shot head &  55.2            & 54.9            & 66.7            & 78.9            & 80.2            & 69.7            & 97.3            & 50.4            & 69.1 \\
            AdaMerging encoder + zero-shot head      & 64.5& 68.1& 79.2& 93.8& 87.0& 91.9& 97.5& 59.1& 80.1\\
        \midrule
 \ours encoder + zero-shot head & 71.4& 73.5& 87.4& 90.6& 92.6& 81.3& 96.1& 66.1&82.4\\
 \bottomrule
    \end{tabular}

    \vspace{-0.5em}
    
\end{table*}

\subsection{Per-task results for improvement beyond classifier adaptation}
To isolate whether \ours improves the merged encoder itself rather than only the task-specific layer, we run an ablation that removes classifier adaptation at evaluation time. We take the merged encoder produced by \ours on CLIP~\cite{CLIP_2021_ICML} ViT-B/32 for 8 vision tasks, replace each task head with the corresponding original zero-shot classifier, and evaluate without any further training. We compare this against the standard Task Arithmetic merged encoder paired with the same zero-shot classifiers. Table~\ref{tab:intrinsic} reports per-task accuracies. Consistent with the main text, \ours improves the average accuracy from 69.1 to 82.4, showing that the gains largely come from better merged representations in the encoder, even when the classifier is kept fixed. Notably, this intrinsic improvement is also competitive with, and even exceeds, AdaMerging (80.1), which focuses on optimizing only the encoder merging coefficients without training task heads.

\section{Proof for Proposition 3.1}\label{appendix:proof}

We provide a proof for Proposition~\ref{prop:ci_superiority} in the main paper.

\begin{proposition}[]
Assume cross-task linearity~\citep{CTL_ICML2024} so that $f(x;\frac{1}{2}(\theta_i + \theta_j)) \approx \frac{1}{2}f(x;\theta_i)+\frac{1}{2}f(x;\theta_j)$,
and suppose the loss function is convex in its output. Then the merged model $f_{merge}(x)=f(x;\frac{1}{2}(\theta_i+\theta_j))$ has an expected loss below the case of weight disentanglement.
\end{proposition}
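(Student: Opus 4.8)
The plan is to chain the two stated assumptions—cross-task linearity and convexity of the loss—into a convex upper bound on the merged model's task-$j$ loss, and then to show that the synergy condition strictly lowers this bound relative to the disentanglement case. First I would rewrite the merged weights as $\tfrac{1}{2}(\theta_i+\theta_j)=\theta_0+\tfrac{1}{2}(\tau_i+\tau_j)$ and apply the cross-task linearity assumption to replace the merged output by the average of the two single-task outputs, $f(x;\tfrac{1}{2}(\theta_i+\theta_j))\approx\tfrac{1}{2}f(x;\theta_i)+\tfrac{1}{2}f(x;\theta_j)$, which recasts the merged prediction in terms of the two expert models.

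Next I would invoke convexity of $L_j$ in its output (Jensen's inequality) to obtain the pointwise bound
\[
L_j\!\left[f\!\left(x;\tfrac{1}{2}(\theta_i+\theta_j)\right)\right]\le \tfrac{1}{2}L_j[f(x;\theta_0+\tau_i)]+\tfrac{1}{2}L_j[f(x;\theta_0+\tau_j)],
\]
recognizing the first term on the right as the cross-task loss (task $i$'s model evaluated on task $j$) and the second as task $j$'s own expert loss. Taking the expectation over $x$ preserves the inequality, giving a convex upper bound on the expected merged loss.

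The crux is then a direct comparison of the cross-task term across the two regimes. Under weight disentanglement the cross-task loss equals the pre-trained loss, $L_j[f(x;\theta_0+\tau_i)]=L_j[f(x;\theta_0)]$, whereas the synergy condition gives the strict inequality $L_j[f(x;\theta_0+\tau_i)]<L_j[f(x;\theta_0)]$. Substituting each into the bound shows that the synergy upper bound lies strictly below the disentanglement upper bound, with the gap equal to $\tfrac{1}{2}\bigl(L_j[f(x;\theta_0)]-L_j[f(x;\theta_0+\tau_i)]\bigr)>0$. This is precisely the sense in which positive cross-task improvement tightens the convex upper bound, and it extends to $K$ tasks by applying the same Jensen step to the $K$-fold average.

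The main obstacle I anticipate is interpretive rather than computational: the conclusion is a statement about the convex upper bound, not the raw merged loss, so I would state explicitly that the guarantee is on the bound and flag that cross-task linearity enters as an approximation. If an exact claim is desired, I would either promote the $\approx$ to an assumed equality (treating cross-task linearity as exact, as the proposition's hypothesis does) or carry a first-order error term and argue it is negligible relative to the strict gap above; the remaining steps are then routine.
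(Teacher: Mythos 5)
Your proposal is correct and follows essentially the same route as the paper's proof: apply cross-task linearity, use Jensen's inequality (convexity of the loss) to obtain the averaged upper bound, then substitute the disentanglement equality versus the strict synergy inequality to show the bound tightens by half the synergy gap (the paper writes this gap as $\epsilon_{ij}/2$ where $\mathcal{L}_j(f_i)=\mathcal{L}_j(f_0)-\epsilon_{ij}$). Your closing caveat---that the guarantee concerns the convex upper bound rather than the merged loss itself, and that CTL enters as an approximation---applies equally to the paper's own proof, which likewise only compares upper bounds.
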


\begin{proof}
Let $f_i(x) \triangleq f(x; \theta_i)$ and $f_j(x) \triangleq f(x; \theta_j)$. The expected loss on task $j$ is denoted by $\mathcal{L}_j(f)$. We assume Cross-Task Linearity (CTL), so $f(x; \frac{1}{2}(\theta_i + \theta_j)) \approx \frac{1}{2}f_i(x) + \frac{1}{2}f_j(x)$. Given that the loss function $L_j$ is convex with respect to its output, Jensen's inequality provides a general upper bound for the merged model's loss:
\begin{equation}
\label{eq:upper_bound}
    \mathcal{L}_j(f_{\text{merge}}) \le \frac{1}{2} \mathcal{L}_j(f_i) + \frac{1}{2} \mathcal{L}_j(f_j).
\end{equation}
We now analyze this bound under two conditions for $\theta_i = \theta_0 + \tau_i$, where $\theta_0$ is a pre-trained model and $\tau_i$ is the task vector for task $i$.

\textbf{Case A: Weight Disentanglement.}
This condition assumes that the task vector $\tau_i$ does not affect performance on task $j$, i.e., $\mathcal{L}_j(f_i) = \mathcal{L}_j(f_0)$. Substituting this into~\eqref{eq:upper_bound} yields:
\begin{equation}
\label{eq:bound_a}
    \mathcal{L}_j(f_{\text{merge}}) \le \frac{1}{2} \mathcal{L}_j(f_0) + \frac{1}{2} \mathcal{L}_j(f_j).
\end{equation}

\textbf{Case B: Synergistic Effect.}
This condition assumes that $\tau_i$ improves performance on task $j$, implying $\mathcal{L}_j(f_i) < \mathcal{L}_j(f_0)$. We can write this as $\mathcal{L}_j(f_i) = \mathcal{L}_j(f_0) - \epsilon_{ij}$ for some $\epsilon_{ij} > 0$. This gives a tighter bound:
\begin{equation}
\label{eq:bound_b}
    \mathcal{L}_j(f_{\text{merge}}) \le \frac{1}{2} (\mathcal{L}_j(f_0) - \epsilon_{ij}) + \frac{1}{2} \mathcal{L}_j(f_j) = \left(\frac{1}{2} \mathcal{L}_j(f_0) + \frac{1}{2} \mathcal{L}_j(f_j)\right) - \frac{\epsilon_{ij}}{2}.
\end{equation}
Comparing~\eqref{eq:bound_a} and~\eqref{eq:bound_b}, the upper bound on the loss is strictly lower by $\frac{\epsilon_{ij}}{2}$ under the synergistic effect. This proves that merging synergistic task vectors yields a superior model.
\end{proof}

\definecolor{mycolor}{cmyk}{0.1, 0.1, 0, 0}
\begin{table}[!t]
        \centering
            \caption{\textbf{Performance details of merged ViT-B/32 models} when training different layers with merging coefficients for task-specific adjustments. We observe that consistently high performance can be achieved regardless of which task-specific single layer is trained.}
    \label{tab:appendix_layer_coef_train}
    
    \scriptsize
    \tabcolsep=0.45em
    \begin{tabular}{c|cccccccc|c}
        \toprule 
             Layer              & \scriptsize SUN             & \scriptsize Cars              &\scriptsize  RES.          & \scriptsize Euro           & \scriptsize SVH.              & \scriptsize GTS.            & \scriptsize MNI.             &\scriptsize  DTD               & \scriptsize \textbf{Avg.} \\
            \midrule                                  
            1 & 73.7& 78.0& 95.4& 99.9& 96.3& 98.7& 99.5& 79.1& 90.1\\
            2 & 74.3& 78.2& 95.8& 99.9& 96.1& 98.7& 99.5& 79.0& 90.2\\
            3 & 74.4& 78.5& 95.9& 99.9& 96.5& 98.8& 99.5& 79.3& 90.4\\
            4 & 74.5& 78.4& 95.9& 99.8& 96.7& 98.7& 99.6& 79.2& 90.4\\
            5 & 74.6& 78.4& 96.0& 99.9& 96.5& 99.0& 99.6& 79.1& 90.4\\
            6 & 74.6& 78.4& 95.2& 99.8& 96.2& 98.9& 99.1& 79.5& 90.2\\
            7 & 74.7& 78.1& 95.6& 99.3& 96.4& 98.7& 99.3& 79.8& 90.2\\
            8 & 74.8& 77.9& 95.9& 99.6& 96.2& 98.1& 99.1& 79.5& 90.1\\
            9 & 75.0& 78.7& 95.8& 99.9& 96.4& 98.6& 99.5& 79.5& 90.4\\
            10 & 74.8& 78.5& 95.8& 99.9& 96.2& 98.9& 99.5& 79.4& 90.4\\
            11 & 75.2& 78.6& 95.9& 99.9& 96.5& 98.6& 99.3& 79.7& 90.5\\
            12 & 75.0& 77.8& 95.4& 99.7& 95.8& 97.5& 99.3& 79.5& 90.0\\
            {Classifier}     & {74.3}            & {79.3}   & {94.8}   & 99.0            & 95.7            & {98.5}            & {99.2}            & {80.2}   & {90.1} \\
            \midrule
            Late   & 71.2& 70.8& 94.3& 99.2& 96.1& 98.8& 99.3& 77.2& 88.4\\
            Early  & 71.7& 77.2& 95.1& 99.9& 95.6& 98.7& 97.7& 79.5& 89.4\\
            All    & 68.4& 72.1& 92.7& 93.9& 87.7& 98.5& 98.3& 78.0& 86.2\\ 
            \bottomrule
    \end{tabular}

\end{table}

\section{More Empirical Analyses}\label{appendix:analyses}
\subsection{Detailed Results on Task-specific Layers}
\label{appendix:D1}

Our proposed method, \ours, introduces the concept of incorporating a task-specific layer to balance shared and task-specific representations during model merging. This approach allows effective adaptation while maintaining efficiency to address task conflicts in multi-task learning.
ViT-B/32 is employed and initialized using Task Arithmetic-merged weights across all layers. When a particular layer is marked as task-specific, it is split into unique versions for each task, resulting in eight task-specific layers. While these layers are trained exclusively on corresponding task data, the remaining layers update only the merging coefficients using data from all tasks.
Table~\ref{tab:appendix_layer_coef_train} presents detailed task-level results.
`Early' refers to layers 1 to 6-th, while `Late' refers to layers 7 to 12-th.
Training both the merging coefficients and task-specific layers achieves performance comparable to or slightly better than training merging coefficients alongside the classifier. 
These findings highlight the minimal impact of task-specific layer choice on overall performance, reaffirming the robustness of our approach. The results also show consistent average performance across layers, demonstrating that merging coefficients effectively balance representations across tasks, mitigating biases even when other layers remain frozen.

\subsection{On Loss Functions}
\label{appendix:lossfunc}

\begin{figure}[!t] 
    \centering 
    \includegraphics[width=0.6\linewidth]{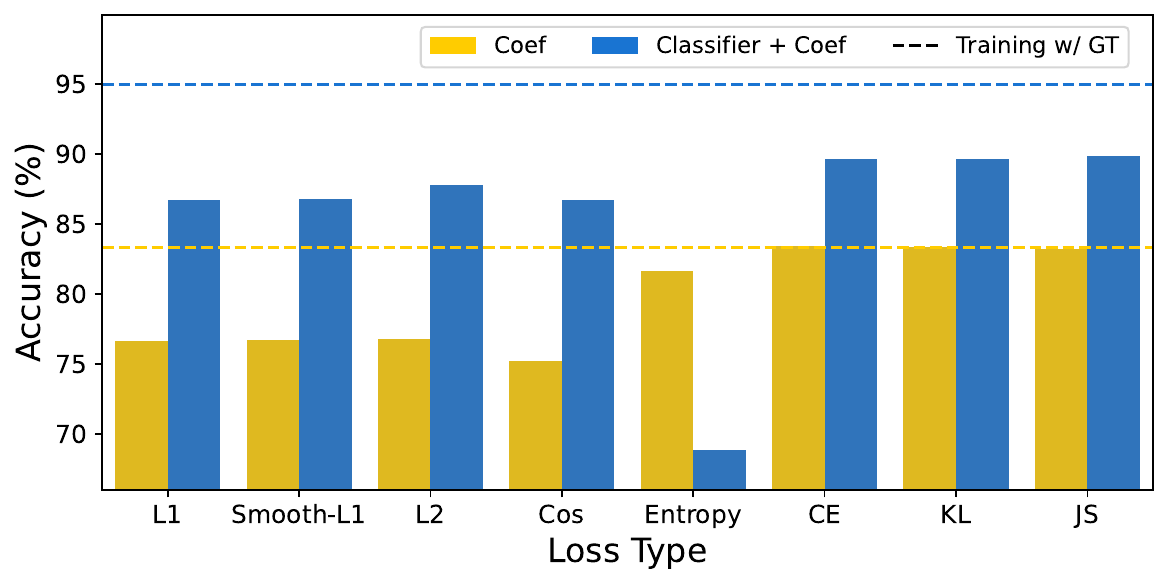} 
    \vspace{-0.5em} 
        \caption{\textbf{Comparison of training with various loss functions.} The yellow bars represent training only the coefficients, while the blue bars indicate the joint training of the classifier and coefficients. The dashed line corresponds to training with ground truth labels using cross-entropy loss, which is the upper bound.} 
    \label{fig:appendix_losstype} 
    \vspace{-0.5em} 
\end{figure}

In Figure~\ref{fig:appendix_losstype}, we analyze the impact of various loss functions on aligning predictions between individual models and the merged model.
By default, we employ cross-entropy with hard one-hot pseudo-labels in our main experiments for simplicity, while a soft-target KL-divergence variant yields comparable performance.When training only the coefficients, distance-based losses (\eg., L1, Smooth-L1, L2, and Cosine) demonstrate relatively lower performance compared to distribution-based losses (\eg., KL-divergence, JS-divergence, cross-entropy, and entropy). 
In contrast, when both the classifier and coefficients are trained jointly, most loss functions achieve strong performance under our method. However, entropy minimization, as explored in previous works~\citep{WeMoE_2024_ICML, Adamerging_2024_ICLR}, leads to a significant decline in performance during joint training. 

Notably, in Table~\ref{tab:appendix_combined}, we observe that the entropy loss often fails to provide gradients in the correct direction due to its divergence from the ground truth (GT)-based loss. The table highlights that the cross-entropy loss, in contrast, aligns more closely with the GT, resulting in more accurate gradient directions. This highlights the importance of incorporating label information for guiding classifier training in the merged model. 

To further support this point, Figure~\ref{fig:appendix_tsne_dtd} shows the t-SNE visualization for the DTD dataset before and after entropy minimization training. While feature clustering improves after training, as shown in (b) compared to (a), a noticeable discrepancy remains between the predictions and the ground truth labels. These observations emphasize the importance of label guidance, even if incomplete, for achieving better alignment with the GT and improving the performance of the merged model.

\begin{table}[!t]
\footnotesize
\tabcolsep=0.3em
\caption{\textbf{Spearman correlation of losses with ground truth cross-entropy loss} for (a) ViT-B/32 and (b) ViT-L/14. Values closer to 1 indicate that the corresponding loss function exhibits better alignment with the loss computed using the ground truth.}
\label{tab:appendix_combined}

\begin{subtable}[t]{\textwidth}
\begin{tabular}{c|c|cccccccc}

\toprule
& & SUN397 & Cars & RESISC45 & EuroSAT & SVHN & GTSRB & MNIST & DTD \\
\midrule
\multirow{3}{*}{\rotatebox{90}{Entropy}} & Before & 0.66 & 0.72 & 0.80 & 0.86 & 0.92 & 0.92 & {1.00} & 0.56 \\
& After & 0.32 & -0.47 & 0.79 & 0.94 & 0.83 & 0.41 & 0.96 & 0.47 \\
& $\Delta$ & \perf{-0.34} & \perf{-1.19} & \perf{-0.01} & \perf{+0.08} & \perf{-0.09} & \perf{-0.51} & \perf{-0.04} & \perf{-0.09} \\
\midrule
\multirow{3}{*}{\rotatebox{90}{Ours}} & Before & 0.80 & 0.86 & 0.97 & {1.00} & 0.98 & 0.99 & {1.00} & 0.84 \\
& After & 0.82 & 0.88 & 0.99 & {1.00} & 0.99 & {1.00} & {1.00} & 0.88 \\
& $\Delta$ & \perf{+0.02} & \perf{+0.02} & \perf{+0.02} & \perf{+0.00} & \perf{+0.01} & \perf{+0.01} & \perf{+0.00} & \perf{+0.04} \\
\bottomrule
\end{tabular}
\centering
\vspace{0.5em}
\subcaption{{ViT-B/32}}

\end{subtable}

\begin{subtable}[t]{\textwidth}
\centering
\begin{tabular}{c|c|cccccccc}
\toprule
& & SUN397 & Cars & RESISC45 & EuroSAT & SVHN & GTSRB & MNIST & DTD \\
\midrule
\multirow{3}{*}{\rotatebox{90}{Entropy}} & Before & 0.85& 0.93& 0.95& 0.98& 0.96& 0.96& 1.00& 0.79\\
& After & 0.63& 0.76& 0.94& 0.81& 0.79& 0.95& 0.82& 0.83\\
& $\Delta$ & \perf{-0.22}& \perf{-0.17}& \perf{-0.01}& \perf{-0.17}& \perf{-0.17}& \perf{-0.01}& \perf{-0.18}& \perf{+0.04}\\
\midrule
\multirow{3}{*}{\rotatebox{90}{Ours}} & Before & 0.87& 0.96& 0.98& 1.00& 0.98& 1.00& 1.00& 0.89\\
& After & 0.89& 0.97& 1.00& 1.00& 0.99& 1.00& 1.00& 0.91\\
& $\Delta$ & \perf{+0.02}& \perf{+0.01}& \perf{+0.02}& \perf{+0.00}& \perf{+0.01}& \perf{+0.00}& \perf{+0.00}& \perf{+0.02}\\
\bottomrule
\end{tabular}
\vspace{0.5em}
\subcaption{{ViT-L/14}}
\end{subtable}
\end{table}

\begin{figure}
    \centering
    \begin{minipage}{0.48\textwidth}
        \centering
        \begin{subfigure}[t]{0.49\linewidth}
            \includegraphics[width=\textwidth]{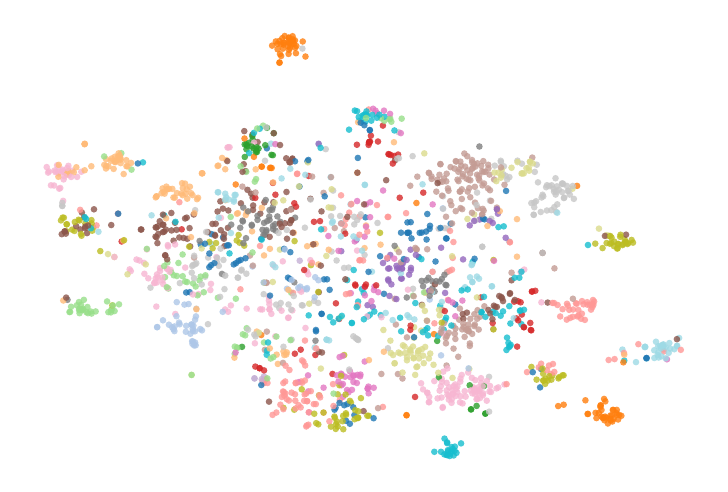}
        \end{subfigure}
        \hfill %
        \begin{subfigure}[t]{0.49\linewidth}
            \includegraphics[width=\textwidth]{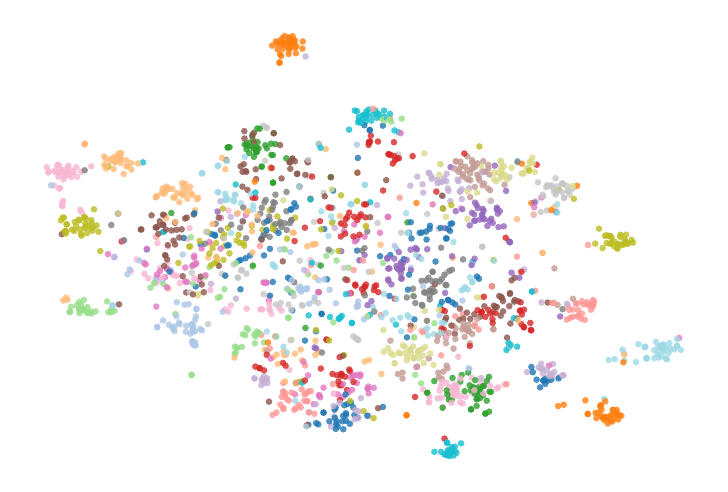}
        \end{subfigure}
        \caption*{(a) Before training (left: EM Merging \: right: GT)}
    \end{minipage}
    \hfill %
    \begin{minipage}{0.48\textwidth}
        \centering
        \begin{subfigure}[t]{0.49\linewidth}
            \includegraphics[width=\textwidth]{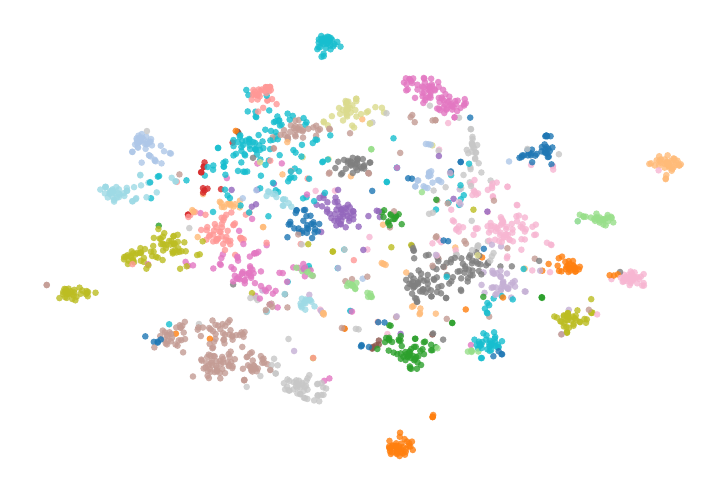}
        \end{subfigure}
        \hfill %
        \begin{subfigure}[t]{0.49\linewidth}
            \includegraphics[width=\textwidth]{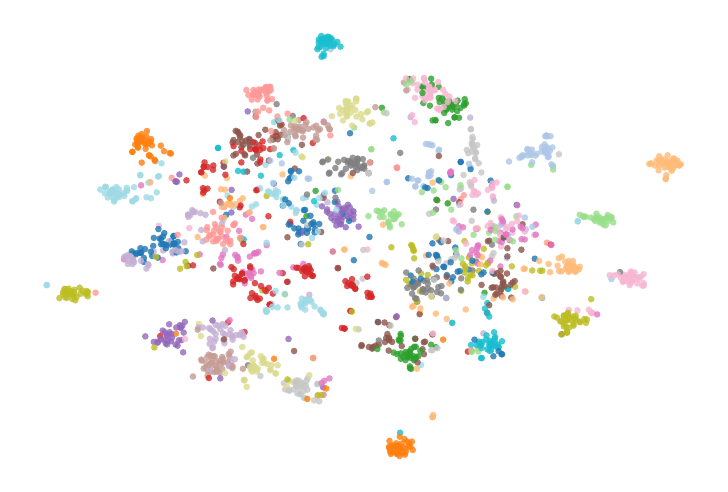}
        \end{subfigure}
        \caption*{(b) After training (left: EM Merging \: right: GT)}
    \end{minipage}

    \vspace{0.5em} %
    
    \caption{\textbf{t-SNE visualization results on the DTD dataset before (a) and after (b) entropy minimization training.} The plots on the left use merged model predictions, while those on the right use ground truth labels. Training improves clustering, but prediction-ground truth mismatch remains.}
    \label{fig:appendix_tsne_dtd}
\end{figure}

\noindent\textbf{Loss correlation.}
We conduct further experiments to validate whether the loss correlation shown in Figure~\ref{fig:loss_corr} of the main paper remains consistent across different backbones (ViT-\{B/32, L/14\}).
Table~\ref{tab:appendix_combined} shows the loss correlation results for 8 vision tasks using the merged ViT-B/32 and ViT-L/14 models.
We observe a significant drop in correlation after training with entropy minimization. 
In contrast, the cross-entropy loss employed in our self-labeling approach maintains consistently high correlation across tasks.
These findings are consistent with the results reported for ViT-B/32 in the main paper, suggesting that our self-labeling approach with the cross-entropy loss could be more effective than the entropy minimization loss.

\begin{figure}[!t]
\centering
    \begin{minipage}{0.32\linewidth}
        \includegraphics[width=\linewidth]{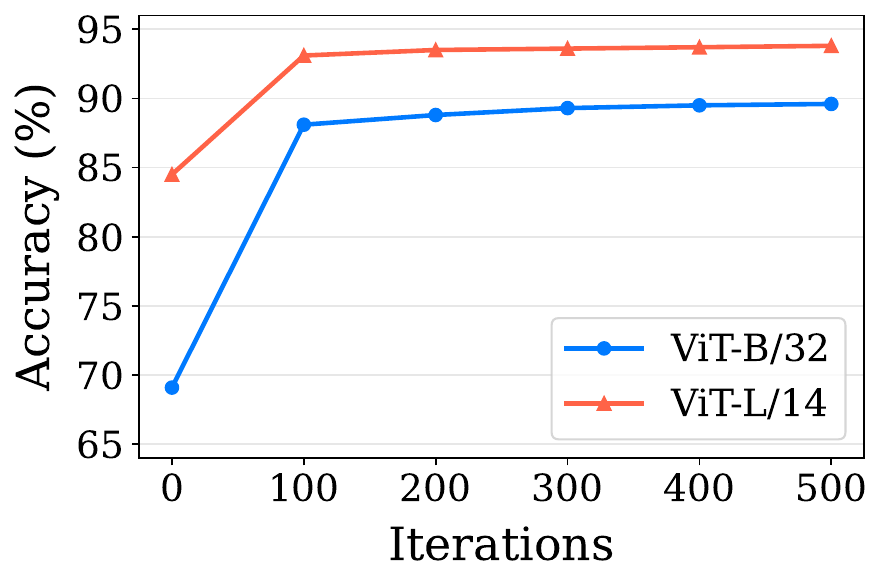}
        \subcaption[]{Learning curves across models.}
        \label{fig:appendix_vit_acc}
    \end{minipage}
    \hfill
    \begin{minipage}{0.66\linewidth} %
        \centering
        \includegraphics[width=0.48\linewidth]{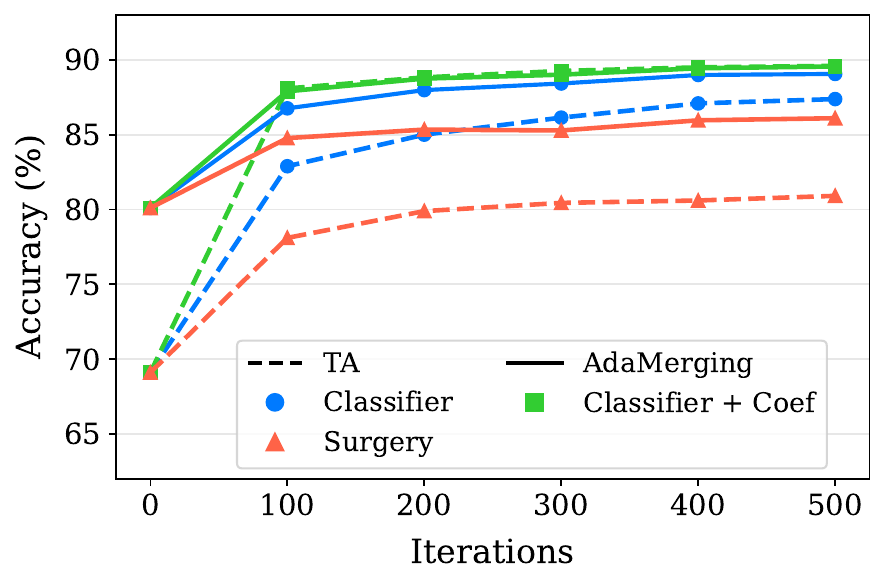}
        \hfill
        \includegraphics[width=0.48\linewidth]{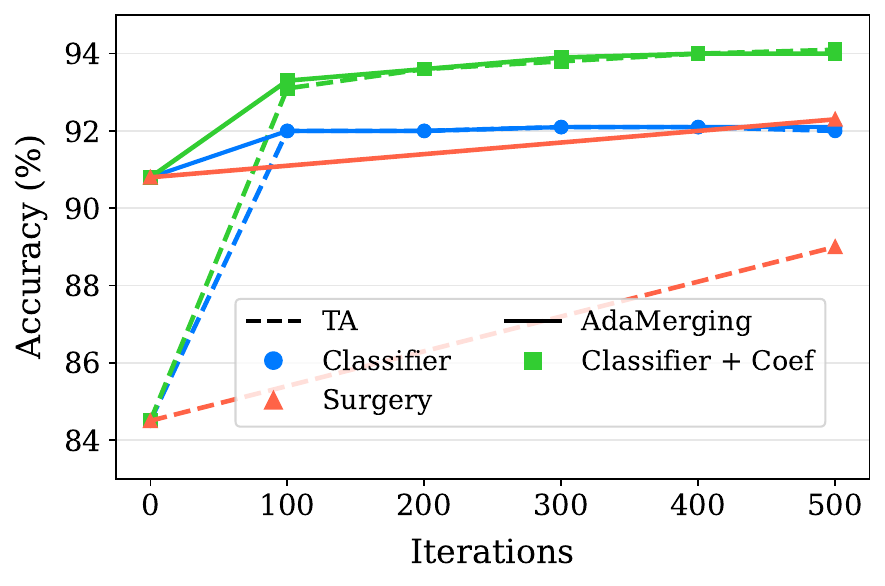}
        
        \subcaption[]{Impact of initialization on ViT-B/32 and ViT-L/14.}
        \label{fig:coefinitsurgery_combined}
    \end{minipage}
    
    \vspace{-0.5em}
    \caption{{\textbf{Learning curves and analysis.} 
    (a) Comparison of average accuracy for ViT-B/32 and ViT-L/14 across training iterations. 
    (b) Impact of initializations on ViT-B/32 (left) and ViT-L/14 (right), comparing fixed-value initialization from Task Arithmetic (dashed lines) and learned-value one in AdaMerging (solid lines).}}
    \label{fig:appendix_additional}
\end{figure}

\begin{figure}[!t]
\centering
    \begin{minipage}{0.24\linewidth}
        \includegraphics[width=\linewidth]{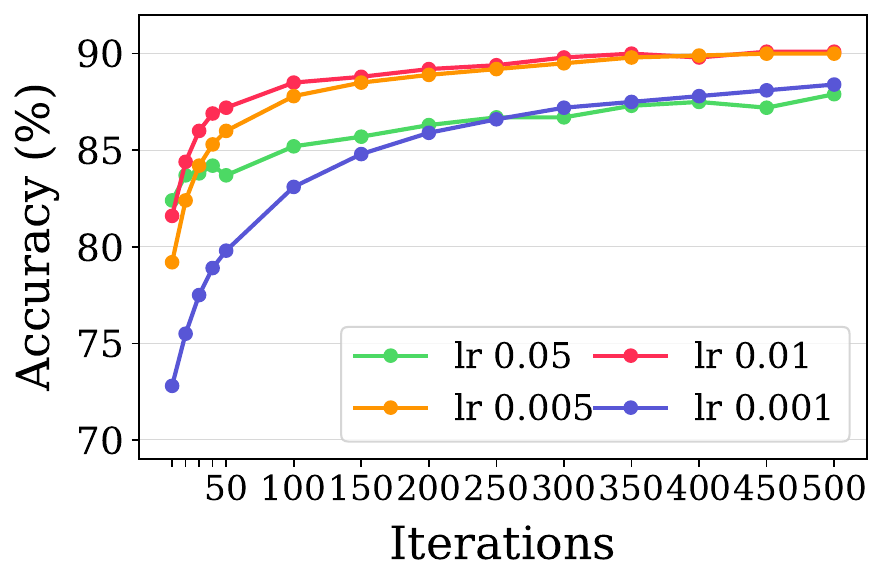}
        \subcaption[]{8 tasks on ViT-B/32.}
        \label{fig:appendix_learning_rates}
    \end{minipage}
    \hfill
    \begin{minipage}{0.24\linewidth}
        \includegraphics[width=\linewidth]{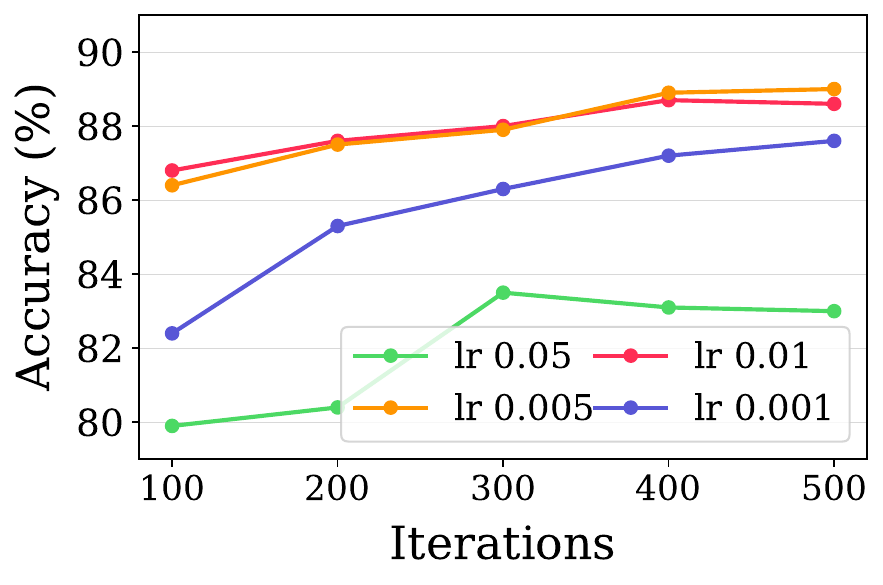}
        \subcaption[]{20 tasks on ViT-B/32.}        \label{fig:appendix_learning_rates2}
    \end{minipage}
    \hfill
    \begin{minipage}{0.24\linewidth} %
        \centering
        \includegraphics[width=\linewidth]{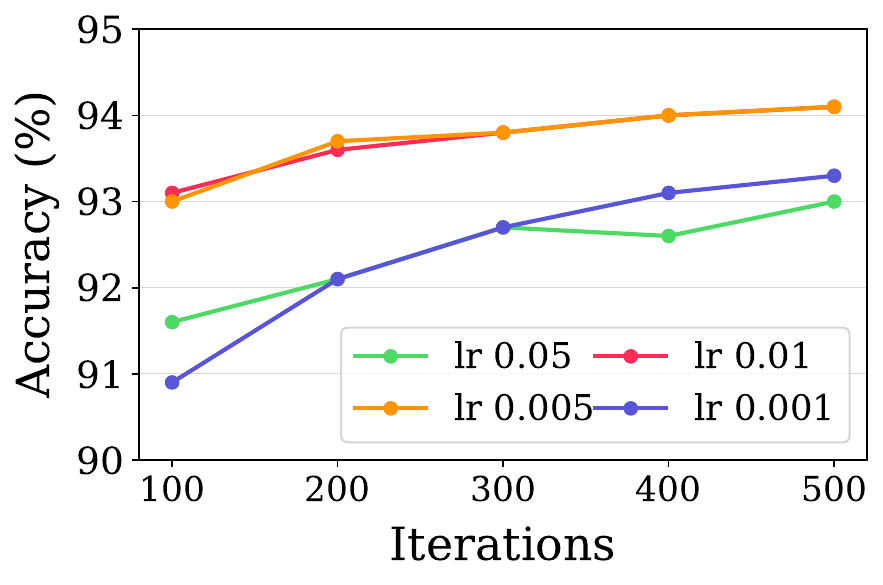}
        \subcaption[]{8 tasks on ViT-L/14.}
        \label{fig:appendix_learning_rates3}
    \end{minipage}
    \hfill
    \begin{minipage}{0.24\linewidth} %
        \centering
        \includegraphics[width=\linewidth]{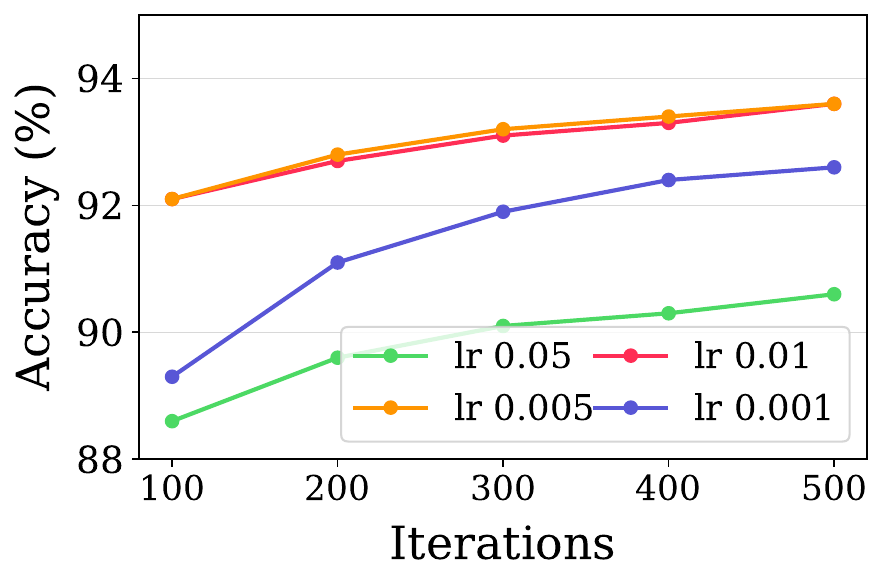}
        \subcaption[]{20 tasks on ViT-L/14.}
        \label{fig:appendix_learning_rates4}
    \end{minipage}    

    \vspace{-0.5em}
\caption{{\textbf{Impact of learning rate on performance.} Average accuracy curves across training iterations for ViT-B/32 and ViT-L/14 under varying learning rates. We report results for both 8-task and 20-task settings to analyze the hyperparameter sensitivity.}}
\label{fig:appendix_lr_curves}
\end{figure}
\vspace{-1em}

\subsection{Convergence Analysis}
Figure~\ref{fig:appendix_additional} illustrates the learning curve of \ours applied to the 8 vision tasks over training iterations. 
In Figure~\ref{fig:appendix_vit_acc}, we compare the performance of ViT-B/32 and ViT-L/14 models. The results demonstrate that our method achieves near-optimal performance within the first 100 iterations and converges quickly, regardless of model size. Notably, larger models like ViT-L/14 consistently achieve higher final accuracies, followed by ViT-B/32, which highlights the advantages of model capacity in capturing task-specific knowledge more effectively.
{Figure~\ref{fig:coefinitsurgery_combined} shows that when training a task-specific adapter only, the Surgery model is sensitive to initialization. In contrast, our method jointly optimizes the merging coefficient and task-specific layer, making it robust to initialization sensitivity. 
Figure~\ref{fig:appendix_lr_curves} shows the convergence of ViT-B/32 and ViT-L/14 under various learning rates for 8-task and 20-task settings. Excluding the extreme learning rate case, our method exhibits rapid convergence and robust performance across most settings.}
These observations demonstrate the robustness and efficiency of \ours across different model scales and hyperparameter settings.

\begin{table}[t]
    \centering

\caption{{\textbf{Empirical Verification of Cross-Task Linearity (CTL).} We measure the cosine similarity between the output of a parameter-interpolated model $f(x; \theta(\alpha))$ and the output interpolation $(1-\alpha)f(x; \theta_{\text{src}}) + \alpha f(x; \theta_{\text{tgt}})$ on ViT-B/32. Using SUN397 as the source task ($\theta_{\text{src}}$), the consistently high similarity scores across all coefficients $\alpha$ (even at $\alpha=0.5$) confirm that the CTL assumption holds robustly in our experimental setting.}}
    \footnotesize

\label{tab:ctl_verification}
    {\begin{tabular}{lccccccccc}
        \toprule
        \multirow{2}{*}{{Task}} & \multicolumn{9}{c}{{Interpolation Coefficient ($\alpha$)}} \\
        \cmidrule(lr){2-10}
         & {0.1} & {0.2} & {0.3} & {0.4} & {0.5} & {0.6} & {0.7} & {0.8} & {0.9} \\
        \midrule
        Cars & 1.00 & 0.99 & 0.98 & 0.97 & 0.97 & 0.97 & 0.98 & 0.99 & 1.00 \\
        RESISC45 & 0.99 & 0.98 & 0.97 & 0.96 & 0.95 & 0.95 & 0.96 & 0.98 & 0.99 \\
        EuroSAT & 0.99 & 0.98 & 0.96 & 0.95 & 0.94 & 0.94 & 0.95 & 0.97 & 0.99 \\
        SVHN & 0.99 & 0.97 & 0.94 & 0.90 & 0.87 & 0.85 & 0.87 & 0.92 & 0.97 \\
        GTSRB & 0.99 & 0.96 & 0.93 & 0.90 & 0.88 & 0.88 & 0.90 & 0.94 & 0.98 \\
        MNIST & 0.99 & 0.97 & 0.94 & 0.92 & 0.90 & 0.89 & 0.91 & 0.95 & 0.98 \\
        DTD & 0.99 & 0.98 & 0.96 & 0.96 & 0.96 & 0.96 & 0.97 & 0.99 & 1.00 \\
        \bottomrule
    \end{tabular}}
    
\end{table}

\subsection{Empirical Verification of Cross-Task Linearity}
\label{subsec:ctl_verification}

{A key theoretical assumption in our work is Cross-Task Linearity (CTL)~\citep{CTL_ICML2024}, which posits that the representation of a merged model can be approximated by the linear interpolation of individual model representations. To rigorously validate this assumption, we conducted a quantitative analysis comparing the two interpolation schemes using ViT-B/32.}
{Specifically, we define the parameter-space interpolation as $\theta(\alpha) = (1-\alpha)\theta_{\text{src}} + \alpha\theta_{\text{tgt}}$ and the representation-space interpolation as $F_{\text{rep}}(\alpha) = (1-\alpha)f(x; \theta_{\text{src}}) + \alpha f(x; \theta_{\text{tgt}})$. If CTL holds strictly, the output of the merged model $f(x; \theta(\alpha))$ should align closely with $F_{\text{rep}}(\alpha)$.
We selected SUN397 as the source task ($\theta_{\text{src}}$) and paired it with seven other target tasks ($\theta_{\text{tgt}}$) to measure the Cosine Similarity between $f(x; \theta(\alpha))$ and $F_{\text{rep}}(\alpha)$ across varying interpolation coefficients $\alpha \in [0.1, 0.9]$.}

{Table~\ref{tab:ctl_verification} presents the results. We observe that the cosine similarity remains remarkably high across all task pairs and $\alpha$ values.
Most notably, even at the interpolation midpoint ($\alpha=0.5$), the similarity scores remain consistently high. 
These findings empirically demonstrate that within the shared solution basin derived from a common pre-trained model, the encoder's behavior exhibits strong linearity. Consequently, this confirms that the CTL assumption serves as a robust approximation in our experimental regime, supporting the validity of our theoretical proposition.}

\begin{table}[t]
    \centering
    \caption{{Ablation study on confidence-based filtering. We report the mean $\pm$ standard deviation over 5 runs. Applying the filtering consistently improves performance across different numbers of tasks and model scales.}}
    \label{tab:filtering_ablation}
    \footnotesize
    {\begin{tabular}{llcc}
        \toprule
        \multirow{2}{*}{Setting} & \multirow{2}{*}{Model} & \multicolumn{2}{c}{Accuracy (\%)} \\
        \cmidrule(lr){3-4}
         &  & w/o Filtering & w/ Filtering \\
        \midrule
        8 tasks & ViT-B/32 & 89.7 $\pm$ 0.1 & 90.1 $\pm$ 0.1 (+0.4) \\
        14 tasks & ViT-B/32 & 88.3 $\pm$ 0.2 & 88.7 $\pm$ 0.1 (+0.4) \\
        20 tasks & ViT-B/32 & 88.5 $\pm$ 0.3 & 88.6 $\pm$ 0.4 (+0.1) \\
        \midrule
        8 tasks & ViT-L/14 & 93.9 $\pm$ 0.0 & 94.1 $\pm$ 0.0 (+0.2) \\
        \bottomrule
    \end{tabular}}
    
\end{table}

\begin{table}[t]
    \centering
    \caption{{\textbf{Computational efficiency.} We report the peak GPU memory usage and the number of trainable parameters during adaptation on 8 vision tasks. To prevent OOM errors on baselines, all methods are evaluated under a standardized sequential update setting (batch size=16). \ours achieves high efficiency, comparable to the most lightweight baseline.}}
    \small
    \label{tab:efficiency}
    {\begin{tabular}{lcccc}
        \toprule
        \multirow{2}{*}{{Method}} & \multicolumn{2}{c}{{Peak GPU Mem (GB)}} & \multicolumn{2}{c}{{Params (M)}} \\
        \cmidrule(lr){2-3} \cmidrule(lr){4-5}
         & {ViT-B/32} & {ViT-L/14} & {ViT-B/32} & {ViT-L/14} \\
        \midrule
        AdaMerging~\citep{Adamerging_2024_ICLR}  & 7.09 & 22.88 & 0.00 & 0.00 \\
        Surgery~\citep{Surgery_2024_ICML}  & 7.13 & 30.44 & 0.13 & 0.20 \\
        WEMoE~\citep{WeMoE_2024_ICML}  & 8.01 & 33.57 & 7.16 & 25.39 \\
        \midrule
        \textbf{\ours} & 7.12 & 23.71 & 0.39 & 0.59 \\
        \bottomrule
    \end{tabular}}
    
\end{table}

\begin{table}[h]
\centering
\footnotesize
\setlength{\tabcolsep}{5pt}
\caption{Per-step runtime comparison on ViT-B/32. Runtime is measured in milliseconds. SyMerge uses on-the-fly expert supervision, while SyMerge$^\dagger$ uses cached expert predictions.}
\label{tab:runtime_appendix}
\begin{tabular}{c|ccccc}
\toprule
\# Tasks & AdaMerging & Surgery & WEMoE & SyMerge & SyMerge$^\dagger$ \\
\midrule
2  & 71  & 170 & 331  & 170 & 65  \\
4  & 87  & 183 & 331  & 179 & 80  \\
8  & 114 & 211 & 428  & 215 & 111 \\
14 & 162 & 259 & 743  & 262 & 153 \\
20 & 207 & 299 & 1047 & 303 & 195 \\
\bottomrule
\end{tabular}
\end{table}

\subsection{Impact of Confidence-based Filtering}
\label{subsec:filtering_ablation}

{To verify the effect of confidence-based filtering, we evaluate the model without it. As detailed in Table~\ref{tab:filtering_ablation}, employing the filtering strategy yields consistent but modest performance gains across varying task counts and model scales (e.g., +0.4\%p on 8 tasks). 
Crucially, even without filtering, \ours maintains state-of-the-art performance (e.g., 89.7\% on 8 tasks), significantly outperforming existing baselines. 
This confirms that while filtering serves as an effective safeguard against spurious supervision from potentially noisy experts, the core efficacy of \ours stems from the synergistic joint optimization framework itself.}

\subsection{Computational Efficiency Analysis}
\label{subsec:efficiency}
{To evaluate the resource efficiency of \ours, we measured the peak GPU memory usage and the number of trainable parameters across 8 vision tasks. 
For a fair comparison, we standardized all methods to employ sequential updates (task-by-task optimization) with a batch size of 16. This adaptation was necessary because the original simultaneous update schemes of baselines like AdaMerging and WEMoE incur excessive memory costs, often leading to out-of-memory (OOM) errors on large backbones (\eg., ViT-L/14).

As shown in Table~\ref{tab:efficiency}, \ours demonstrates remarkable efficiency. Its peak memory usage is comparable to AdaMerging, the most lightweight baseline, and significantly lower than WEMoE. Furthermore, \ours introduces negligible trainable parameters (\eg., $<$0.6M for ViT-L/14). This confirms that our method achieves state-of-the-art performance with a minimal computational footprint, making it practical for resource-constrained environments.}

We further analyze the runtime overhead of expert supervision in Table~\ref{tab:runtime_appendix}. In our default implementation, expert predictions are generated on-the-fly during adaptation. Since only the expert corresponding to the current task is queried at each step, the per-step cost does not scale with the total number of experts. Across different numbers of tasks, \ours has runtime comparable to Surgery and is substantially faster than WEMoE. In addition, when expert predictions are cached, the runtime is further reduced and becomes close to AdaMerging. These results show that the additional cost of expert supervision is manageable and can be largely mitigated by caching.

\subsection{Sparsity Visualization}
Inspired by the literature~\citep{Dare_2024_ICML,TiesMerging_2023_NeurIPS,Breadcrumbs_2023}, which highlights how redundant parameters degrade performance due to conflicts among task-specific parameters, we investigate this phenomenon in our method. We analyze the learned merging coefficients to observe their impact, specifically exploring \textit{sparsity} after training them alone and jointly with the classifier. 

To confirm this trend across various backbones, Figure~\ref{fig:sparse-b32} and~\ref{fig:sparse-l14} provide the layer-wise merging coefficients for ViT-B/32 and ViT-L/14. All models are optimized using the cross-entropy loss with the predictions of individual models as guidance. In these figures,  (a) represents training only the coefficients, while (b) includes joint training of the classifier and coefficients.

We find that jointly training a task-specific layer, like the classifier, with merging coefficients increases the proportion of coefficients concentrated near zero (\ie., smaller than 1e-5), leading to improved accuracy. For example, joint training boosts the share of near-zero coefficients from 37.2\% to 55.9\%, with a corresponding accuracy improvement from 84.6\% to 90.1\%. This trend is consistent across backbones like ViT-L/14, where the proportion of sparse coefficients similarly rises (from 23.3\% to 55.0\%) as does the average accuracy (from 91.5\% to 94.1\%). These results indicate that training the classifier complements sparsity in the merging coefficients, effectively reducing task conflicts by pruning unnecessary parameters and enhancing performance.

\begin{figure}[!t]
\centering
    \begin{minipage}{0.4\linewidth}
        \includegraphics[width=\linewidth]{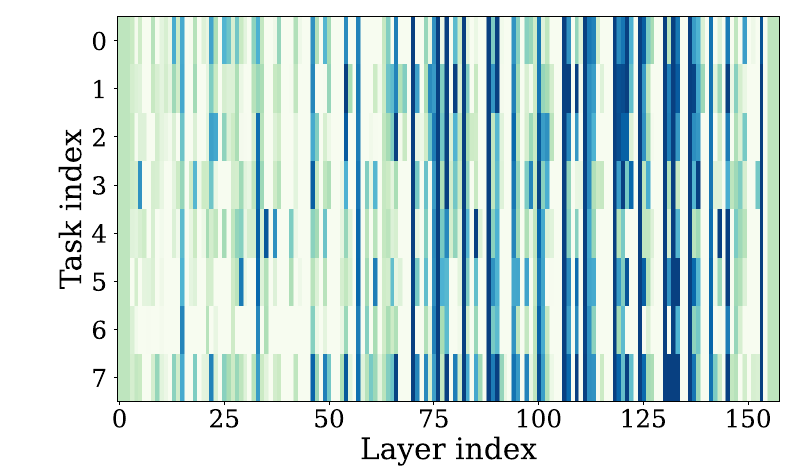}
        \subcaption[]{Training coefficients only}
        \label{fig:appendix_sparse-a-b32}
    \end{minipage}
    \begin{minipage}{0.4\linewidth}
        \includegraphics[width=\linewidth]{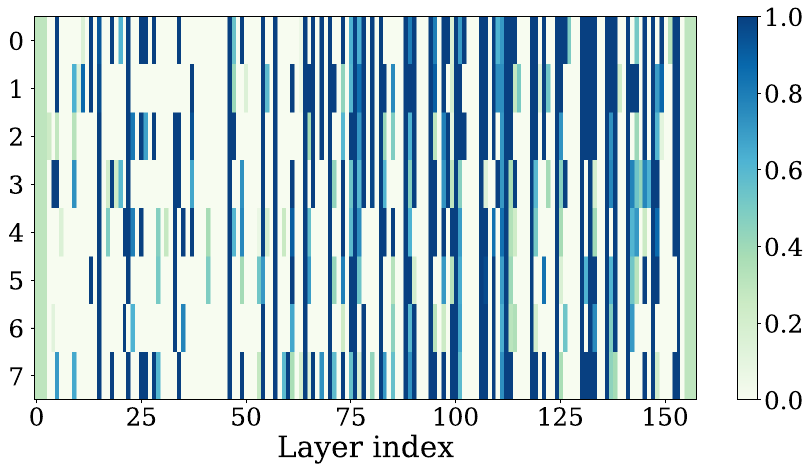}
        \subcaption[]{Joint training \quad\quad\quad}
        \label{fig:appendix_sparse-b-b32}
    \end{minipage}
    \vspace{-.5em}
    \caption{{Impact of joint training on merging coefficient sparsity on ViT-B/32}. }
    \label{fig:sparse-b32}
\end{figure}

\begin{figure}[!t]
\centering
    \begin{minipage}{0.4\linewidth}
        \includegraphics[width=\linewidth]{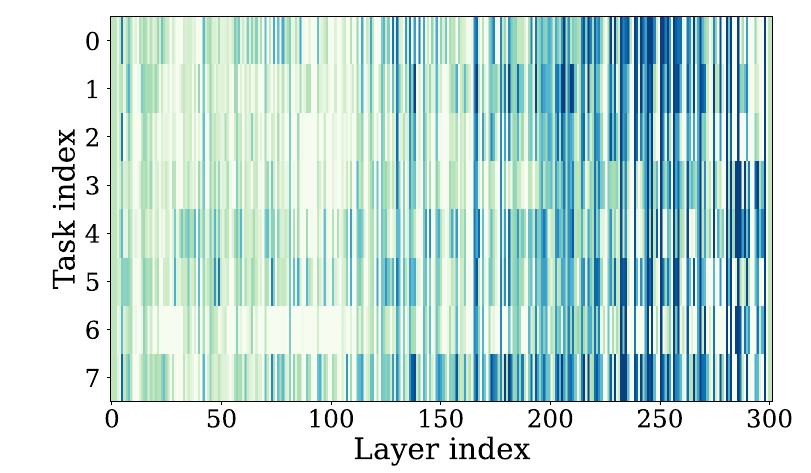}
        \subcaption[]{Training coefficients only}
        \label{fig:appendix_sparse-a-l14}
    \end{minipage}
    \begin{minipage}{0.4\linewidth}
        \includegraphics[width=\linewidth]{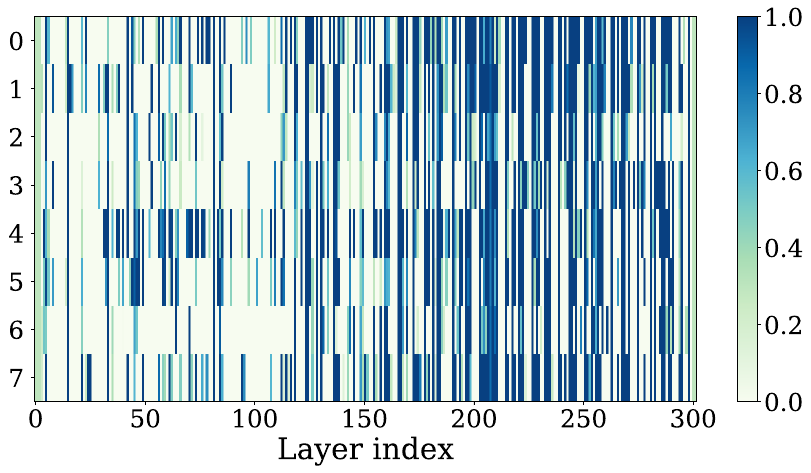}
        \subcaption[]{Joint training \quad\quad\quad}
        \label{fig:appendix_sparse-b-l14}
    \end{minipage}
    \vspace{-.5em}
    \caption{{Impact of joint training on merging coefficient sparsity on ViT-L/14}. }
    \label{fig:sparse-l14}
\end{figure}

\begin{table*}[t]
  \caption{Ablations of the corrupted test dataset on ViT-B/32.}
  \label{table:clean_corrupted_all}
  \begin{center}
    \setlength{\tabcolsep}{1.5mm}
    \fontsize{8}{9}\selectfont
      \begin{tabular}{l|ccccc|ccccc}
        \toprule
        Method          & Cars & EuroSAT & RESISC45 & GTSRB & Avg. & Cars & EuroSAT & RESISC45 & GTSRB & Avg. \\
        \midrule
                                 & \multicolumn{5}{c|}{{Original Test Set}}                   & \multicolumn{5}{c}{{Corrupted Test Set (Motion Blur)}}                                                                                                                                                 \\
        Individual               & 77.7 & 99.9 & 96.1 & 98.7 & 93.1 & 75.1 & 75.2 & 94.1 & 95.3 & 84.9 \\
        Task Arithmetic          & 67.0 & 94.0 & 82.6 & 75.1 & 79.7 & 63.8 & 55.1 & 78.0 & 53.0 & 62.5 \\
        Ties-Merging             & 67.5 & 83.7 & 79.3 & 65.4 & 74.0 & 65.6 & 52.3 & 74.0 & 44.9 & 59.2 \\
        PCB-Merging              & 70.8 & 89.6 & 84.6 & 80.7 & 81.4 & 68.4 & 57.0 & 79.5 & 60.2 & 66.3 \\
        LiNeS                    & 69.5 & 96.3 & 84.6 & 82.4 & 83.2 & 67.0 & 61.1 & 80.4 & 61.3 & 67.5 \\
        Consensus TA             & 67.1 & 95.4 & 78.0 & 84.9 & 81.4 & 63.9 & 59.5 & 56.2 & 80.3 & 65.0 \\
        Iso-CTS                  & 73.6 & 96.3 & 91.3 & 93.4 & 88.6 & 69.7 & 67.9 & 76.4 & 87.8 & 75.5 \\
        AdaMerging        & 76.2 & 96.0 & 87.5 & 97.0 & 89.2 & 71.7 & 69.4 & 83.3 & 91.9 & 79.1 \\
        Surgery w/ Ada           & 73.4 & 98.5 & 91.2 & 97.8 & 90.2 & 69.9 & 73.7 & 88.7 & 90.1 & 80.6 \\
        WEMoE                    & 79.1 & 99.3 & 95.5 & 99.1 & 93.2 & 76.6 & 54.5 & 93.4 & 96.1 & 80.2 \\
        \ours                  & 77.6 & 99.4 & 95.3 & 98.3 & 92.7 & 74.7 & 76.0 & 93.2 & 93.6 & 84.4 \\
        \midrule
                                 & \multicolumn{5}{c|}{Corrupted Test Set (Impulse Noise)} & \multicolumn{5}{c}{Corrupted Test Set (Gaussian Noise)}                                                                                                                                                \\
        Individual               & 69.1 & 14.3 & 79.0 & 66.9 & 57.3 & 73.9 & 14.0 & 90.1 & 74.6 & 63.2 \\
        Task Arithmetic          & 58.2 & 16.3 & 56.9 & 24.8 & 39.0 & 62.8 & 26.3 & 68.5 & 36.5 & 48.5 \\
        Ties-Merging             & 59.7 & 13.7 & 55.0 & 20.9 & 37.3 & 63.6 & 23.3 & 66.6 & 29.3 & 45.7 \\
        PCB-Merging              & 62.3 & 15.7 & 61.0 & 28.4 & 41.8 & 66.5 & 26.7 & 73.0 & 41.4 & 51.9 \\
        LiNeS                    & 60.6 & 18.5 & 58.3 & 28.1 & 41.4 & 65.7 & 28.9 & 70.2 & 40.7 & 51.4 \\
        Consensus TA             & 58.2 & 17.9 & 24.4 & 56.9 & 39.4 & 62.9 & 29.0 & 36.8 & 69.4 & 49.5 \\
        Iso-CTS                  & 62.9 & 20.4 & 33.5 & 64.2 & 45.2 & 69.0 & 33.5 & 46.5 & 77.1 & 56.5 \\
        AdaMerging        & 62.7 & 14.3 & 65.5 & 53.8 & 49.0 & 69.6 & 17.4 & 73.9 & 63.1 & 56.0 \\
        Surgery w/ Ada           & 62.7 & 12.4 & 68.3 & 48.8 & 48.1 & 68.6 & 11.9 & 82.3 & 59.7 & 55.6 \\
        WEMoE                    & 70.3 & 11.6 & 83.5 & 9.6  & 43.7 & 74.9 & 11.6 & 90.6 & 8.0  & 46.2 \\
        \ours                  & 69.0 & 13.1 & 79.5 & 63.6 & 56.3 & 74.4 & 12.6 & 88.8 & 72.4 & 62.1 \\
        \midrule
                                 & \multicolumn{5}{c|}{Corrupted Test Set (Pixelate)}      & \multicolumn{5}{c}{Corrupted Test Set (Spatter)}                                                                                                                                                       \\
        Individual               & 76.7 & 96.7 & 95.6 & 96.0 & 91.3 & 71.1 & 88.5 & 83.6 & 94.8 & 84.5 \\
        Task Arithmetic          & 65.8 & 78.5 & 81.1 & 55.9 & 70.3 & 57.5 & 55.9 & 67.2 & 56.8 & 59.3 \\
        Ties-Merging             & 66.8 & 68.7 & 78.9 & 46.3 & 65.2 & 58.2 & 45.5 & 64.6 & 46.5 & 53.7 \\
        PCB-Merging              & 69.5 & 75.4 & 84.0 & 62.1 & 72.8 & 61.3 & 52.9 & 70.5 & 63.0 & 61.9 \\
        LiNeS                    & 68.4 & 84.7 & 83.3 & 61.7 & 74.5 & 59.8 & 59.6 & 68.8 & 64.8 & 63.3 \\
        Consensus TA             & 65.9 & 82.6 & 58.5 & 83.5 & 72.6 & 57.0 & 65.4 & 58.4 & 68.6 & 62.4 \\
        Iso-CTS                  & 71.7 & 82.0 & 76.2 & 90.4 & 80.1 & 64.1 & 71.8 & 76.0 & 74.2 & 71.5 \\
        AdaMerging         & 71.2 & 89.8 & 81.7 & 90.8 & 83.4 & 66.5 & 14.6 & 74.5 & 92.0 & 61.9 \\
        Surgery w/ Ada           & 71.8 & 94.9 & 90.0 & 89.2 & 86.5 & 63.1 & 87.0 & 74.7 & 90.5 & 78.8 \\
        WEMoE                    & 77.6 & 96.5 & 94.5 & 96.0 & 91.1 & 72.0 & 11.6 & 86.0 & 95.1 & 66.2 \\
        \ours                  & 76.4 & 96.5 & 95.1 & 95.4 & 90.8 & 71.2 & 87.9 & 82.9 & 94.4 & 84.1 \\
        \midrule
                                 & \multicolumn{5}{c|}{Corrupted Test Set (Contrast)}      & \multicolumn{5}{c}{Corrupted Test Set (JPEG Compression)}                                                                                                                                              \\
        Individual               & 68.1 & 22.2 & 56.7 & 82.6 & 57.4 & 77.2 & 67.4 & 95.4 & 92.9 & 83.2 \\
        Task Arithmetic          & 55.2 & 28.0 & 48.2 & 48.5 & 45.0 & 65.9 & 65.9 & 80.3 & 53.3 & 66.4 \\
        Ties-Merging             & 57.5 & 34.3 & 52.4 & 42.5 & 46.6 & 66.8 & 51.4 & 78.1 & 43.1 & 59.8 \\
        PCB-Merging              & 58.6 & 33.7 & 53.3 & 56.4 & 50.5 & 70.3 & 60.7 & 83.5 & 59.1 & 68.4 \\
        LiNeS                    & 58.2 & 29.4 & 49.0 & 55.3 & 48.0 & 68.9 & 68.7 & 82.5 & 59.9 & 70.0 \\
        Consensus TA             & 55.7 & 31.1 & 47.9 & 49.9 & 46.2 & 66.5 & 65.4 & 53.7 & 82.7 & 67.1 \\
        Iso-CTS                  & 63.0 & 26.6 & 69.6 & 55.0 & 53.5 & 72.5 & 57.8 & 71.1 & 89.8 & 72.8 \\
        AdaMerging         & 65.8 & 39.7 & 60.2 & 90.7 & 64.1 & 72.1 & 65.4 & 82.9 & 87.3 & 76.9 \\
        Surgery w/ Ada           & 62.0 & 21.1 & 50.6 & 83.0 & 54.2 & 72.1 & 66.0 & 89.8 & 84.9 & 78.2 \\
        WEMoE                    & 70.9 & 47.0 & 65.7 & 93.1 & 69.2 & 78.0 & 70.5 & 94.3 & 92.8 & 83.9 \\
        \ours                  & 67.6 & 20.5 & 56.5 & 81.3 & 56.5 & 77.3 & 65.0 & 94.3 & 91.5 & 82.0 \\
        \bottomrule
      \end{tabular}
  \end{center}
  \vskip -0.1in
\end{table*}

\begin{figure*}[t]
    \centering
    \setlength{\tabcolsep}{1pt} %
    \renewcommand{\arraystretch}{0.8} %
    \begin{tabular}{cccc} %
    
        \subfloat[Baseline]{\includegraphics[width=0.31\linewidth]{fig/crosstask_base_cls.pdf}} &
        \subfloat[Merging Coef = 0.1]{\includegraphics[width=0.31\linewidth]{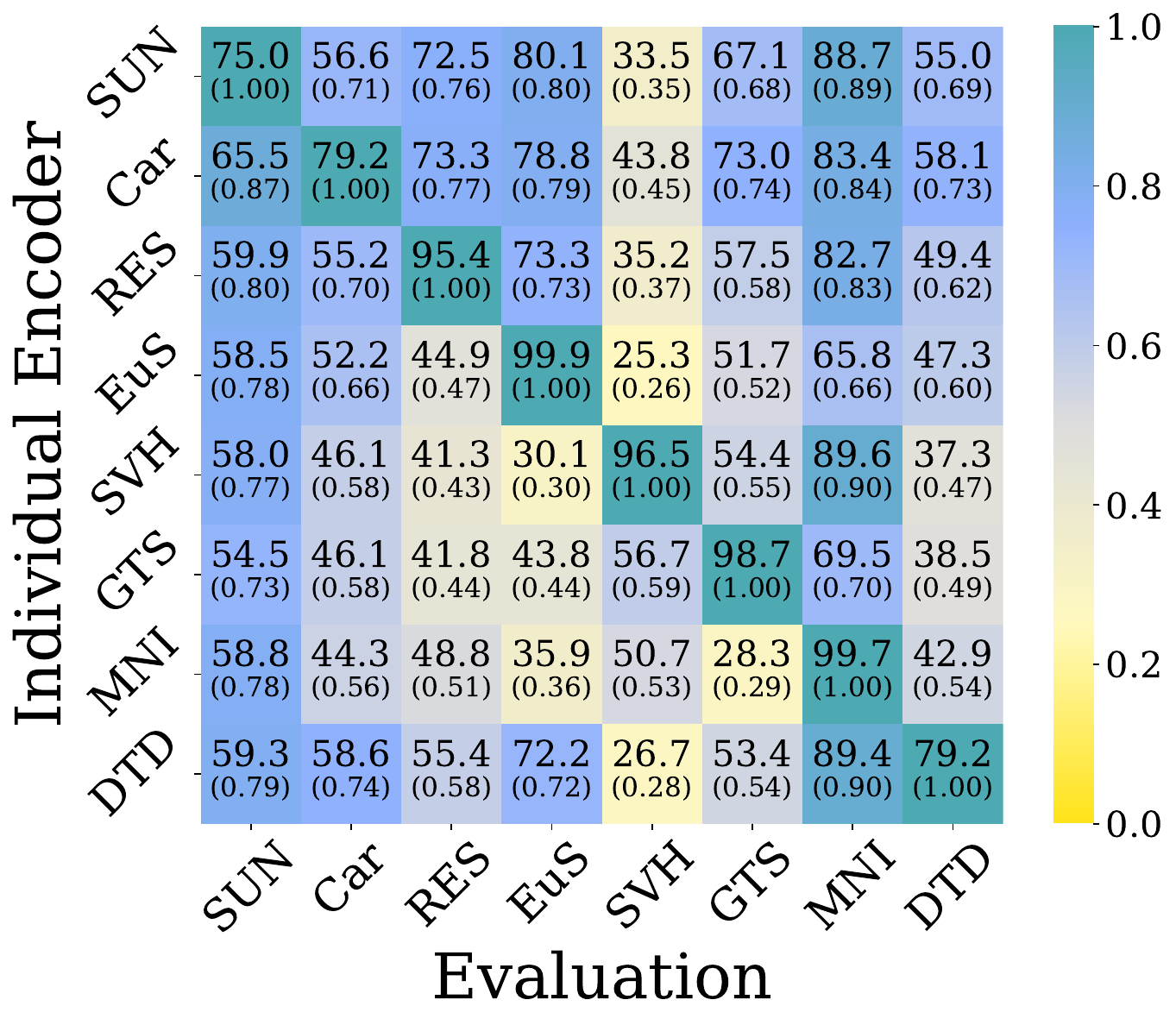}} &
        \subfloat[Merging Coef = 0.2]{\includegraphics[width=0.31\linewidth]{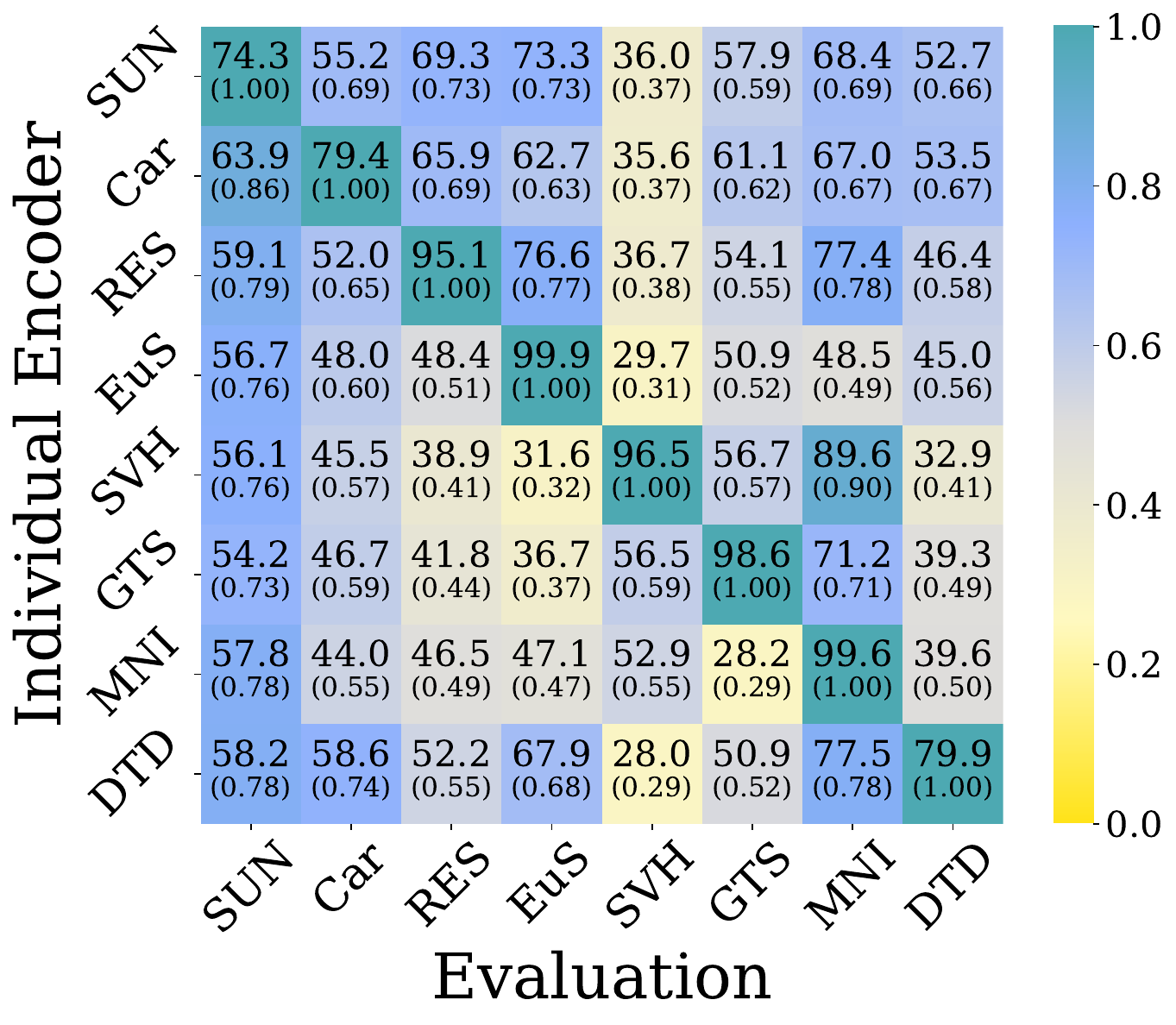}} \\ \\
        
        \subfloat[Merging Coef = 0.3]{\includegraphics[width=0.31\linewidth]{fig/crosstask_ta_cls.pdf}} &
        \subfloat[Merging Coef = 0.4]{\includegraphics[width=0.31\linewidth]{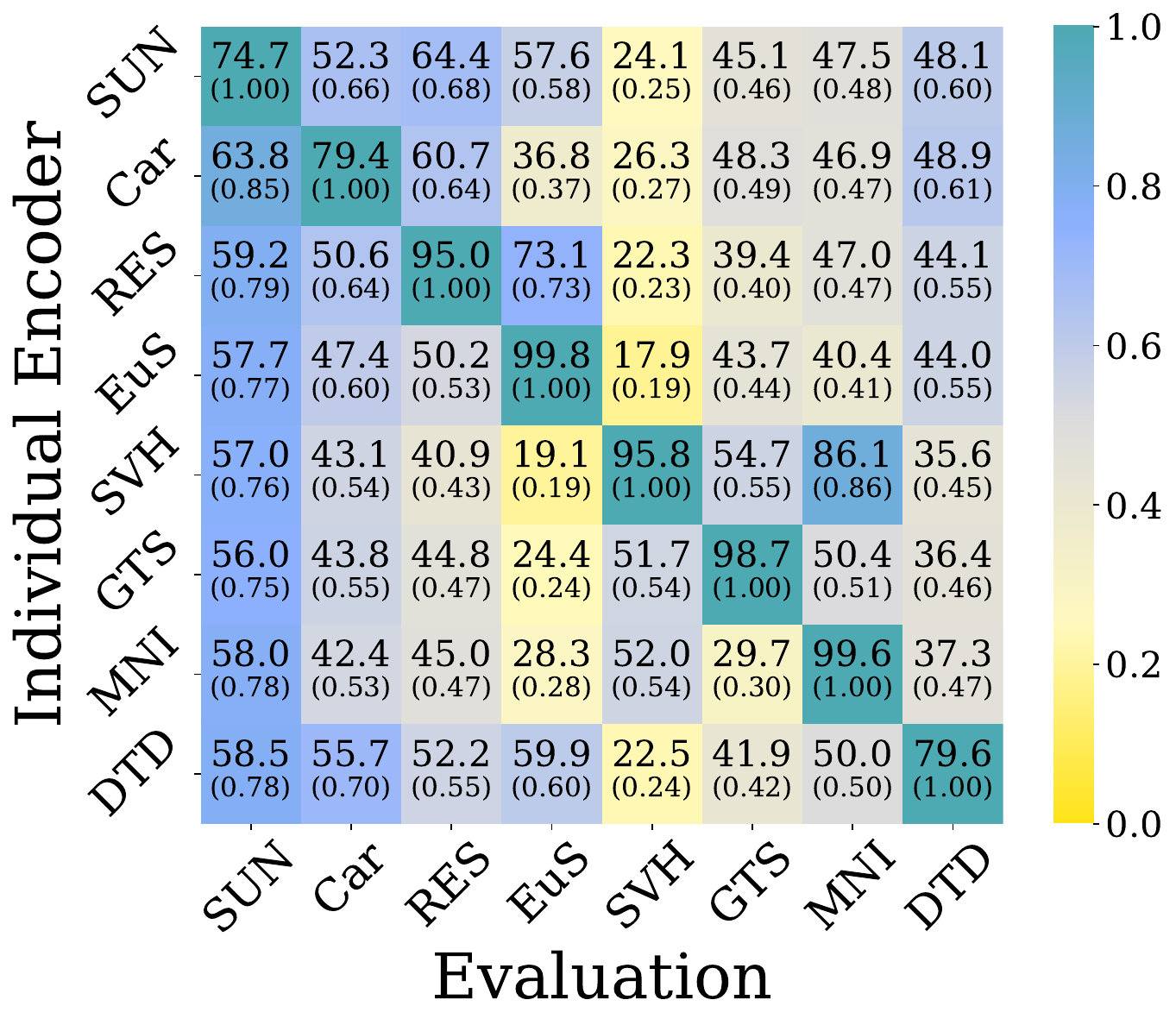}} &
        \subfloat[Merging Coef = 0.5]{\includegraphics[width=0.31\linewidth]{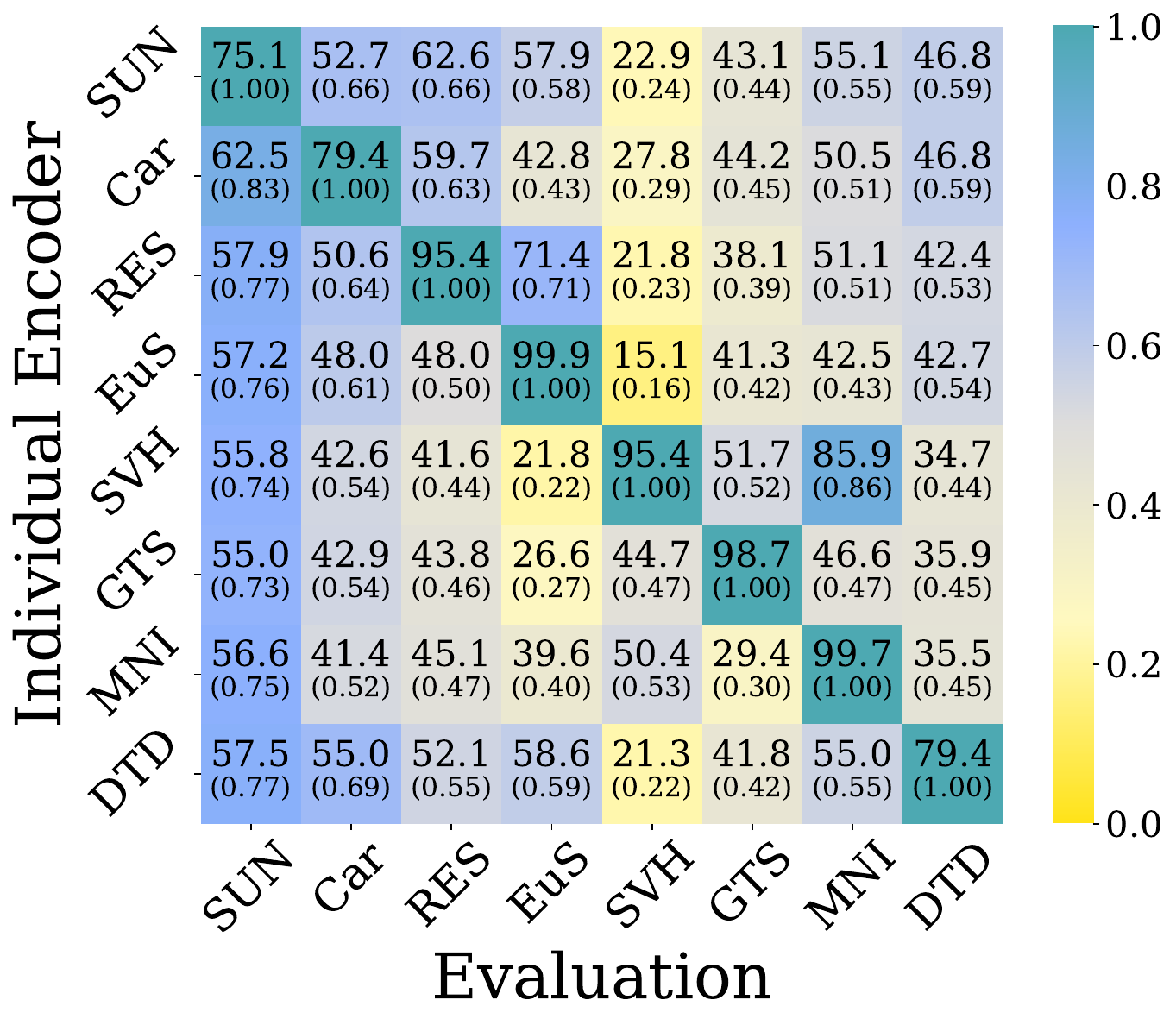}} \\ \\
        
        \subfloat[Merging Coef = 0.6]{\includegraphics[width=0.31\linewidth]{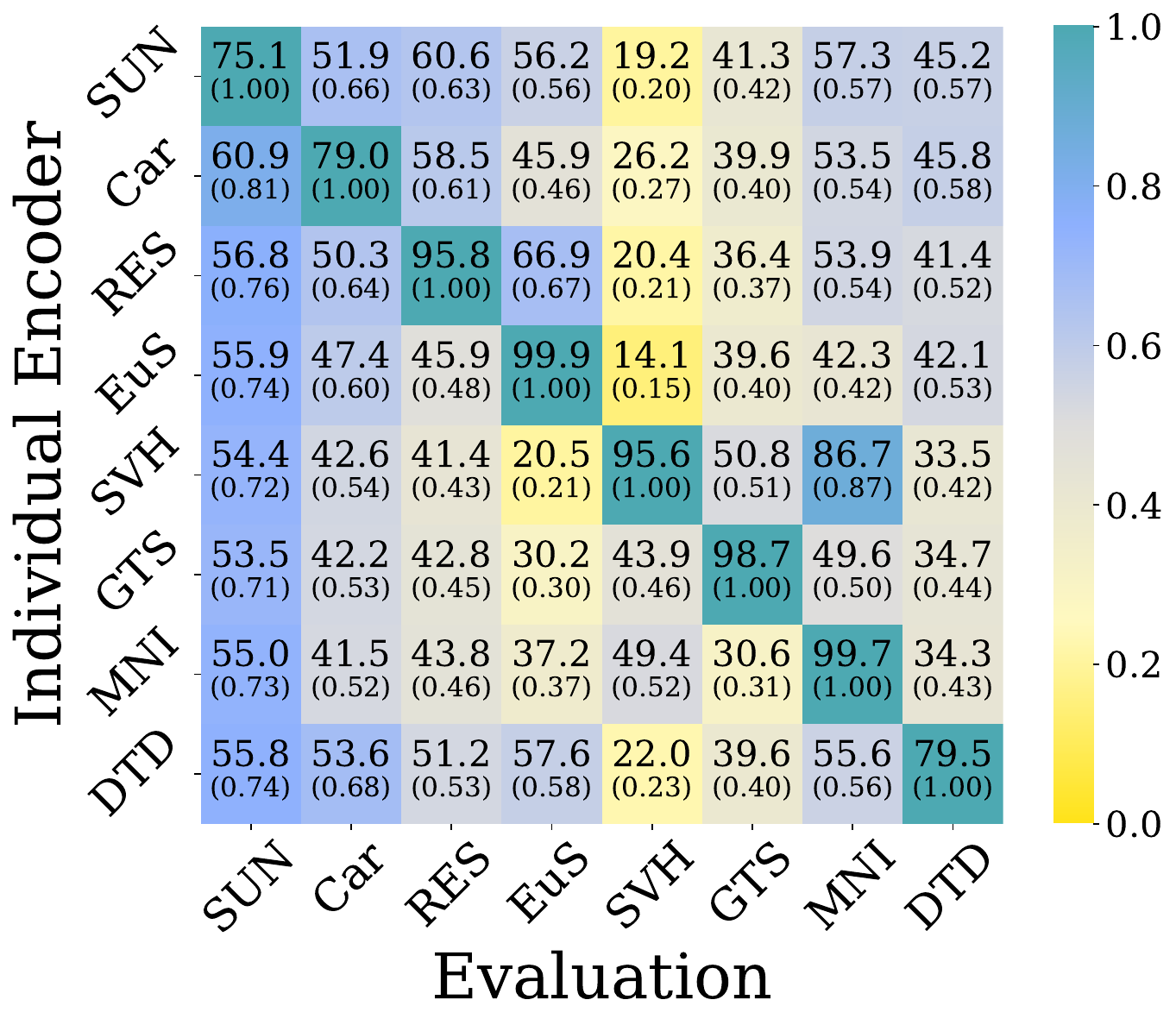}} &
        \subfloat[Merging Coef = 0.7]{\includegraphics[width=0.31\linewidth]{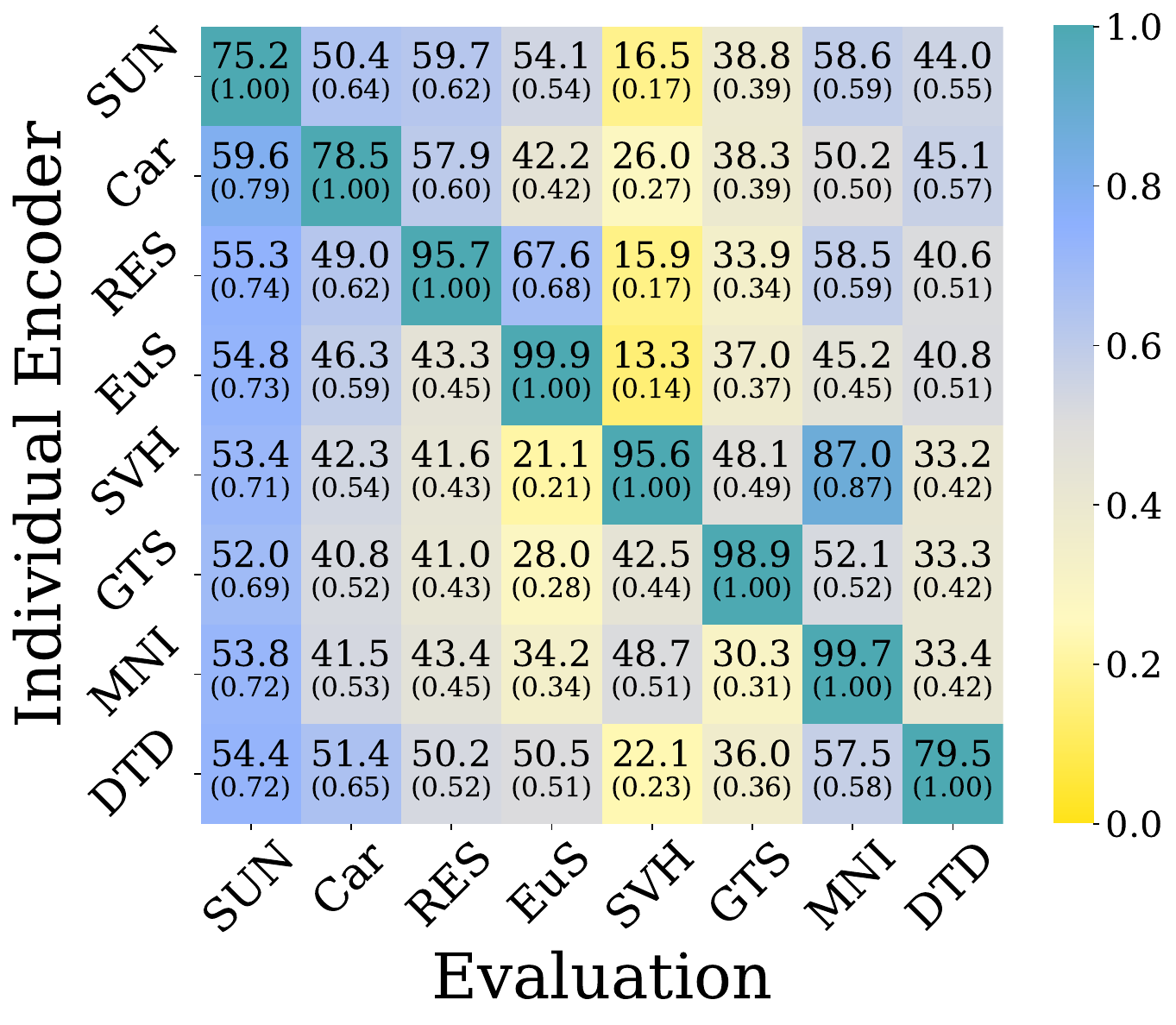}} &
        \subfloat[Merging Coef = 0.8]{\includegraphics[width=0.31\linewidth]{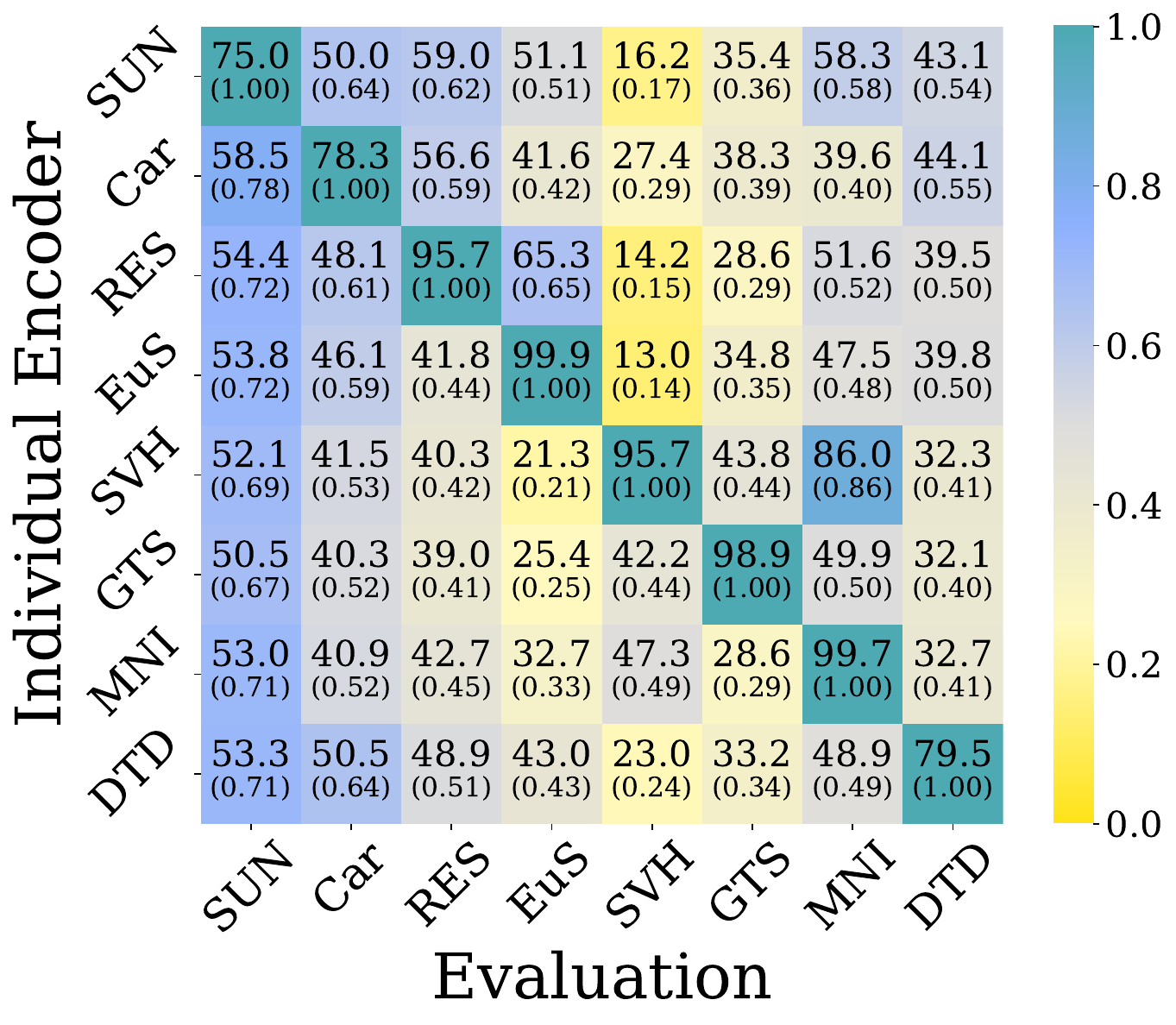}} \\ \\

        \multicolumn{3}{c}{
            \subfloat[Merging Coef = 0.9]{\includegraphics[width=0.31\linewidth]{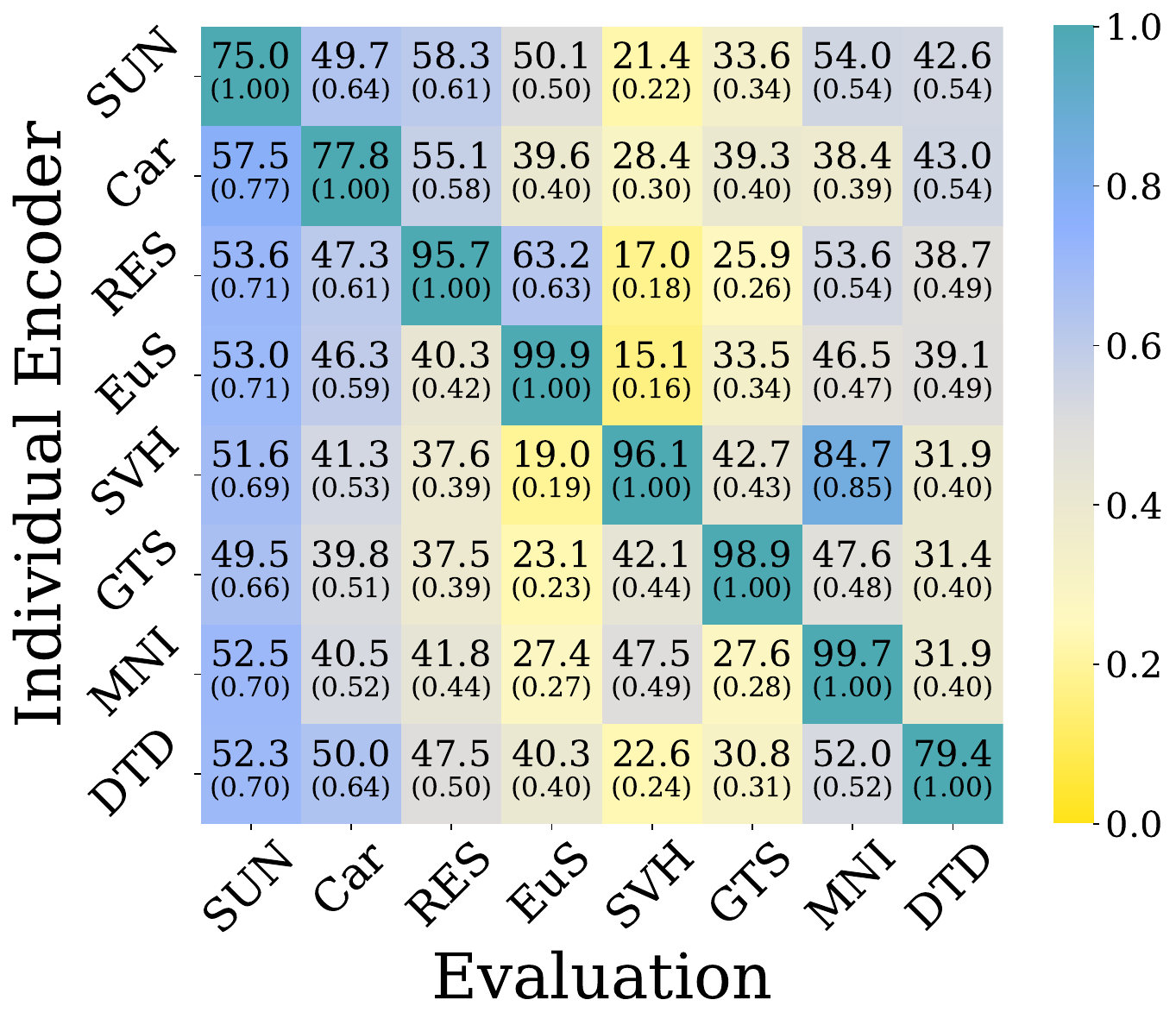}}
            \quad %
            \subfloat[Merging Coef = 1.0]{\includegraphics[width=0.31\linewidth]{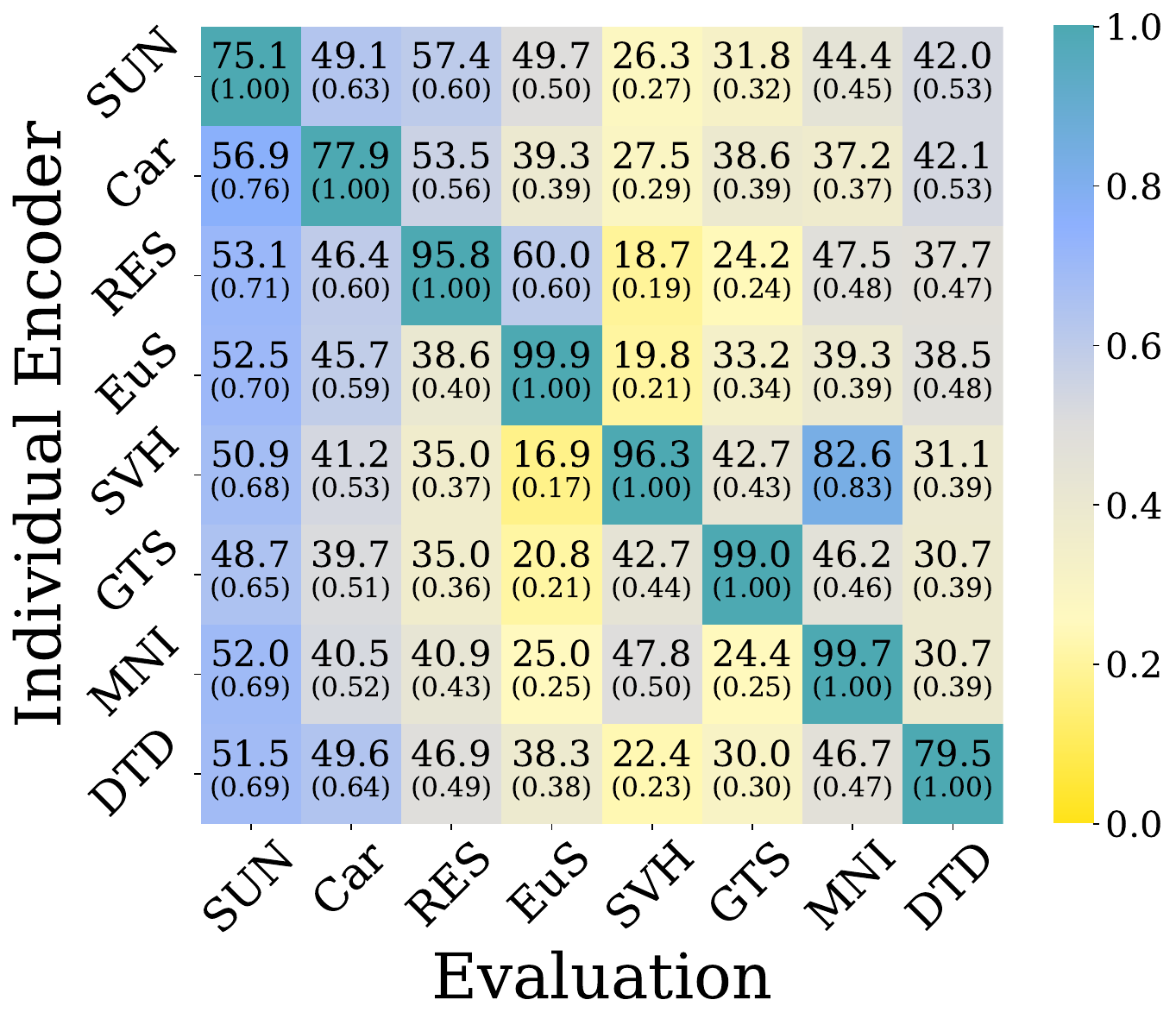}}
        } \\

    \end{tabular}
    \caption{Additional results for cross-task evaluations across diverse encoders and heads from different tasks.}
    \label{fig:additional_cross-task_all}
\end{figure*}

\clearpage
\definecolor{mycolor}{cmyk}{0.1, 0.1, 0, 0}
\begin{table*}[!t]
    \centering\tabcolsep=0.3em
    \caption{Multi-task performance when merging ViT-B/32 models on 8 tasks}
    \label{tab:vitb32_8tasks}
    \scriptsize
    \begin{tabular}{l|cccccccc|c}
        \toprule 
            Method       & SUN397          & Cars            & RESISC45        & EuroSAT         & SVHN            & GTSRB           & MNIST           & DTD               & {Avg.} \\
            \midrule                                  
            Individual                        & 75.3            & 77.7            & 96.1            & 99.7            & 97.5            & 98.7            & 99.7            & 79.4            & 90.5 \\
            \midrule
            Task Arithmetic~\citep{TaskArithmetic_2023_ICLR} &  55.2            & 54.9            & 66.7            & 78.9            & 80.2            & 69.7            & 97.3            & 50.4            & 69.1 \\
            Ties Merging~\citep{TiesMerging_2023_NeurIPS}   &  65.0            & 64.4            & 74.8            & 77.4            & 81.2            & 69.3            & 96.5            & 54.5            & 72.9 \\
            PCB Merging~\citep{PCB_NeurIPS2024}   &              62.9&             62.0&   77.1&             80.1&            87.5&            78.5&          98.7&             58.4&  75.6\\
            LiNeS w/ TA~\citep{Lines_ICLR_2025}   &              63.7&             63.9&   75.1&             86.1&            79.4&            72.2&          96.2&             56.5&  74.1\\
            Consensus TA~\citep{Tall_Mask_2024_ICML}   &              62.9&             61.0&   71.0&             82.7&            86.8&            79.3&          98.1&             57.2&  74.9\\

            TSV-M~\citep{TSV_CVPR2025}   &              68.8&             72.0&   85.9&             94.7&            90.9&            91.2&          99.2&             69.1&  84.0\\
            ISO-CTS~\citep{Iso-cts_ICML2025}   &              70.3&             73.7&   89.1&             95.0&            86.3&            90.9&          99.0&             69.4&  84.2\\

            EMR-Merging~\citep{EMRMerging_2024}        & {75.2}   & 72.8            & 93.5            & {99.5}   & {96.9}   & 98.1            & {99.6}   & 74.4            & 88.7 \\
            \midrule

            AdaMerging~\citep{Adamerging_2024_ICLR}     & 64.5& 68.1& 79.2& 93.8& 87.0& 91.9& 97.5& 59.1& 80.1\\
            Surgery w/ Ada.~\citep{Surgery_2024_ICML}     & 71.2            & 72.0            & 92.3            & 99.0            & 92.2            & 97.9            & 99.0            & 76.1            & 87.5 \\
            WeMoE~\citep{WeMoE_2024_ICML}               & 74.1            & 77.4            & 93.7            & {99.1}            & {96.2}            & {98.9}   & {99.6}   & 76.4            & 89.4 \\
            ProbSurgery w/ Ada.~\citep{ProbSurgery_ICML2025}   &              70.3&             71.6&   91.9&             98.8&            92.5&            97.9&          99.0&             77.5&  87.4\\

            \rowcolor{mycolor}
            \textbf{{\ours}}                   & 74.7& 79.0& 95.1& 98.9& 95.7& 98.3& 99.1& 80.0& 90.1\\
        \bottomrule
    \end{tabular}

    \vspace{-0.5em}
    
\end{table*}

\definecolor{mycolor}{cmyk}{0.1, 0.1, 0, 0}

\begin{table*}[!t]
\centering
\caption{Multi-task performance when merging ViT-B/32 models on 14 tasks. }
\label{tab:14_tasks_results_b32}
\scriptsize
{%
\tabcolsep=0.3em
\begin{tabular}{@{}l|cccccccccccccc|c@{}}
\toprule
 Method & SUN & Cars & RES. & Euro & SVH. & GTS. & MNI. & DTD & C100 & FER & Flower & Pet & PCAM & STL & {Avg.} \\
\midrule
Individual & 75.3 & 77.7 & {96.1} & {99.7} & {97.5} & {98.7} & {99.7} & 79.4 & 89.1 & 72.5 & 90.4 & 91.8 & 87.8 & 97.8 & 89.5 \\
\midrule
Task Arithmetic~\citep{TaskArithmetic_2023_ICLR} & 63.9 & 59.5 & 67.5 & 67.7 & 52.9 & 47.0 & 80.8 & 48.2 & 69.6 & 42.9 & 67.6 & 87.5 & 63.2 & 96.7 & 65.4 \\
Ties Merging~\citep{TiesMerging_2023_NeurIPS} & 65.1& 61.8& 68.3& 63.7& 51.3& 45.9& 80.0& 48.7& 69.7& 42.4& 68.1& 88.0& 62.1& 97.2& 65.2\\
PCB Merging~\citep{PCB_NeurIPS2024} & 52.5& 40.7& 62.2& 71.3& 60.8& 61.7& 94.1& 50.3& 51.1& 44.0& 57.4& 82.0& 75.1& 90.4& 63.8\\

LiNeS w/ TA~\citep{Lines_ICLR_2025} & 64.3& 59.9& 69.6& 75.4& 58.8& 54.6& 85.1& 51.1& 70.0& 45.0& 69.3& 88.0& 64.3& 96.8& 68.0\\
Consensus TA~\citep{Tall_Mask_2024_ICML} & 63.8 & 57.5 & 69.5 & 77.9 & 69.2 & 60.4 & 93.7 & 52.4 & 67.3 & 44.4 & 68.9 & 88.1 & 74.6 & 95.9 & 70.3 \\
TSV-M~\citep{TSV_CVPR2025} & 66.7& 62.0& 81.5& 93.6& 84.6& 84.8& 98.7& 65.9& 71.0& 63.3& 76.4& 91.0& 84.9& 97.1& 80.1\\
ISO-CTS~\citep{Iso-cts_ICML2025} & 68.9& 66.1& 85.7& 91.8& 80.5& 85.3& 98.4& 67.3& 74.9& 59.3& 82.0& 89.0& 81.8& 97.4& 80.6\\
EMR-Merging~\citep{EMRMerging_2024} & 70.4& 68.3& 92.5& {99.0}& {96.1}& 97.6& {99.5}& 72.0& 81.2& 68.5& 85.9& 90.7& {86.7}& 97.3& 86.1\\
\midrule

AdaMerging~\citep{Adamerging_2024_ICLR} & 64.3 & 68.5 & 81.7 & 92.6 & 86.6 & 90.8 & 97.5 & 60.2 & 67.3 & 53.1 & 73.8 & 87.9 & 53.8 & 96.3 & 76.7 \\
Surgery w/ Ada.~\citep{Surgery_2024_ICML} & 69.1& 71.5& 89.5& 97.8& 90.2& 95.3& 98.6& 73.6& 73.4& 66.3& 86.0& {92.2}& 82.3& {98.1}& 84.6\\
WEMoE~\citep{WeMoE_2024_ICML} & 74.2& 78.1& 94.0& 98.2& 95.8& 98.5& 99.6& 75.9& 83.7& 32.8& 90.8& 91.8& 51.3& 97.8& 83.0\\
ProbSurgery w/ Ada.~\citep{ProbSurgery_ICML2025} & 68.9& 68.7& 91.1& 98.6& 92.2& 95.0& 98.3& 77.2& 72.3& 64.4& 87.1& 91.8& 84.6& 98.4& 84.9\\
\rowcolor{mycolor}
\textbf{\ours} & 74.0& 78.7& 94.6& 98.8& 94.6& 97.7& 98.9& 80.0& 84.9& 71.0& 92.2& 92.2& 86.1& 98.1& 88.7\\
\bottomrule
\end{tabular}%
}
\end{table*}

\definecolor{mycolor}{cmyk}{0.1, 0.1, 0, 0}

\begin{table*}[t]
\caption{Multi-task performance when merging ViT-B/32 models on 20 tasks.}
\label{tab:20_tasks_results_b32}

    \centering
    \tabcolsep=0.2em
    \fontsize{7}{7}\selectfont
    \begin{tabular}{l|cccccccccc}
    \toprule
    {Method} & SUN397 & Cars & RESISC45 & EuroSAT & SVHN & GTSRB & MNIST & DTD & CIFAR100 & FER2013 \\
    \midrule
    Individual & 75.3 & 77.7 & {96.1} & {99.7} & {97.5} & {98.7} & {99.7} & 79.4 & {89.1} & {72.5} \\
    \midrule
    Task arithmetic~\citep{TaskArithmetic_2023_ICLR} & 61.8 & 53.0 & 61.9 & 57.5 & 49.8 & 44.6 & 77.9 & 45.6 & 65.4 & 41.4 \\
    Ties Merging~\citep{TiesMerging_2023_NeurIPS} & 64.6 & 58.7 & 66.4 & 59.7 & 54.9 & 46.7 & 80.1 & 47.5 & 69.0 & 41.8 \\
    PCB Merging~\citep{PCB_NeurIPS2024} &  41.3&  21.5&  44.6& 47.4&  52.4&  41.9& 86.9&  39.7&  40.7&  38.5\\
    LiNeS w/ TA~\citep{Lines_ICLR_2025} & 63.6& 55.3& 66.3& 65.0& 56.7& 51.3& 81.6& 48.8& 68.3&  44.1\\
    Consensus TA~\citep{Tall_Mask_2024_ICML} & 63.7 & 53.4 & 66.3 & 63.8 & 63.5 & 52.2 & 89.5 & 49.4 & 66.3 & 41.1 \\
    TSV-M~\citep{TSV_CVPR2025} &  65.8&  54.1&  77.8& 89.8&  76.9&  74.9& 94.1&  60.6&  69.5&  58.3\\
    ISO-CTS~\citep{Iso-cts_ICML2025} &  67.0&  56.2&  80.4& 83.2&  75.4&  76.5& 96.4&  63.9&  74.0&  57.0\\    
    EMR-Merging~\citep{EMRMerging_2024} & 71.0 & 67.6 & {91.1} & 98.6 & {94.4} & 96.7 & {99.4} & 71.0 & 81.1 & 65.7 \\

    AdaMerging~\citep{Adamerging_2024_ICLR} & 63.7 & 65.5 & 77.9 & 90.8 & 75.0 & 89.3 & 96.7 & 56.2 & 67.7 & 48.0 \\
    Surgery w/ Ada.~\citep{Surgery_2024_ICML} & 67.9 & 69.2 & 89.0 & 97.8 & 86.1 & 95.3 & 98.4 & 71.8 & 73.8 & 63.5 \\
    WEMoE~\citep{WeMoE_2024_ICML} &  80.1&  91.9&  95.5& 98.7&  96.5&  98.6& 98.9&  76.4&  88.7&  17.2\\
    ProbSurgery w/ Ada.~\citep{ProbSurgery_ICML2025} &  67.9&  64.6&  91.2& 98.0&  90.9&  95.1& 98.5&  74.5&  67.1&  60.4\\

    \rowcolor{mycolor}
    {\textbf{\ours}} & 73.4& 78.2& 93.7& 98.0& 92.4& 97.1& 98.8& 79.8& 83.7& 69.5\\
    \midrule
    \midrule
    Method& Flowers & Pet & PCAM & STL10 & CIFAR10 & EMNIST & FMNIST & Food101 & KMNIST & R-SST2 \\
    \midrule
    Individual & 90.4 & 91.8 & 87.8 & 97.8 & 97.9 & 99.7 & 95.4 & 89.1 & 98.4 & 74.8 \\
    \midrule
    Task arithmetic & 63.1 & 86.0 & 65.8 & 94.4 & 91.5 & 39.6 & 73.9 & 72.1 & 12.2 & 57.8 \\
    Ties Merging & 66.4 & 87.4 & 63.8 & 96.4 & 92.7 & 41.2 & 73.2 & 79.5 & 12.5 & 60.4 \\
    PCB Merging &  43.1&  71.7&  68.4& 85.9&  80.3&  62.1& 74.1&  34.5&  27.0&  52.4\\
    LiNeS w/ TA & 67.0& 87.4& 64.7& 96.0& 92.5& 45.5& 75.4& 78.0& 13.8&  53.2\\
    Consensus TA & 66.6 & 86.3 & 68.9 & 95.8 & 92.5 & 51.9 & 74.5 & 75.5 & 17.0 & 62.4 \\
    TSV-M &  72.7&  90.1&  83.6& 96.8&  94.2&  94.4& 84.0&  79.3&  44.6&  70.4\\
    ISO-CTS &  77.6&  89.5&  82.9& 97.0&  95.1&  83.9& 85.7&  77.4&  49.6&  70.3\\    
    EMR-Merging & 83.8 & 91.3 & {85.9} & 97.7 & {96.8} & {99.6} & {93.2} & 83.7 & 91.4 & 71.3 \\

    AdaMerging  & 68.0 & 87.6 & 54.1 & 96.5 & 91.3 & 30.8 & 80.9 & 79.7 & 12.6 & 60.2 \\
    Surgery w/ Ada. & 83.6 & 91.7 & 83.7 & {98.2} & 94.6 & 97.0 & 88.9 & 83.9 & 82.1 & 73.8 \\
    WEMoE &  98.3&  96.0&  51.2& 99.4&  98.0&  99.1& 37.4&  94.8&  10.0&  49.9\\
    ProbSurgery w/ Ada. &  84.1&  91.6&  84.4& 98.1&  92.6&  98.4& 85.1&  81.5&  92.8&  73.6\\
    \rowcolor{mycolor}
    {\textbf{\ours}} & 91.7& 92.2& 85.6& 97.8& 95.5& 98.2& 90.5& 85.3& 95.4& 76.1\\

    \bottomrule
    \end{tabular}
\end{table*}

\definecolor{mycolor}{cmyk}{0.1, 0.1, 0, 0}
\begin{table*}[!t]
    \centering\tabcolsep=0.3em
    \caption{Multi-task performance when merging ViT-L/14 models on 8 tasks.}
    \label{tab:vitl14_8tasks}
    \scriptsize
    \begin{tabular}{l|cccccccc|c}
        \toprule 
            Method       & SUN397          & Cars            & RESISC45        & EuroSAT         & SVHN            & GTSRB           & MNIST           & DTD               & {Avg.} \\
            \midrule                                  
            Individual                        & 82.3& 92.4& 97.4& 100& 98.1& 99.2& 99.7& 84.1& 94.2\\
            \midrule
            Task Arithmetic~\citep{TaskArithmetic_2023_ICLR} &  73.9& 82.1& 86.6& 94.1& 87.9& 86.7& 98.9& 65.6& 84.5\\
            Ties Merging~\citep{TiesMerging_2023_NeurIPS}   &  76.5& 85.0& 89.3& 95.7& 90.3& 83.3& 99.0& 68.8& 86.0\\
            PCB Merging~\citep{PCB_NeurIPS2024}   &              75.9&             85.9&   89.6&             95.7&            89.9&            92.3&          99.2&             71.4&  87.5\\
            LiNeS w/ TA~\citep{Lines_ICLR_2025}   &              74.5&             85.4&   88.8&             95.4&            90.8&            90.8&          99.3&             70.4&  86.9\\
            Consensus TA~\citep{Tall_Mask_2024_ICML}   &              74.9&             83.0&   88.2&             95.4&            91.3&            91.5&          99.1&             69.6&  86.6\\

            TSV-M~\citep{TSV_CVPR2025}   &              78.4&             89.9&   93.7&             98.7&            95.6&            96.5&          99.5&             79.9&  91.5\\
            ISO-CTS~\citep{Iso-cts_ICML2025}   &              80.6&             91.7&   96.0&             99.2&            95.1&            98.5&          99.5&             83.0&  93.0\\

            EMR-Merging~\citep{EMRMerging_2024}        & 83.2& 90.7& 96.8& 99.7& 97.9& 99.1& 99.7& 82.7& 93.7\\
            \midrule

            AdaMerging~\citep{Adamerging_2024_ICLR}     & 79.0& 90.3& 90.8& 96.2& 93.4& 98.0& 99.0& 79.9& 90.8\\
            Surgery w/ Ada.~\citep{Surgery_2024_ICML}     & 80.3& 90.8& 94.3& 98.2& 94.1& 98.7& 99.2& 82.5& 92.3\\
            WeMoE~\citep{WeMoE_2024_ICML}               & 81.4& 92.6& 95.4& 99.4& 97.7& 99.3& 99.7& 83.7& 93.6\\
            ProbSurgery w/ Ada.~\citep{ProbSurgery_ICML2025}   &              79.1&             91.1&   95.5&             99.1&            95.2&            99.0&          99.3&             83.4&  92.7\\

            \rowcolor{mycolor}
            \textbf{{\ours}}                   & 81.8& 92.6& 97.2& 99.7& 97.8& 99.1& 99.5& 84.8& 94.1\\
        \bottomrule
    \end{tabular}

    \vspace{-0.5em}
    
\end{table*}

\definecolor{mycolor}{cmyk}{0.1, 0.1, 0, 0}

\begin{table*}[!t]
\centering
\caption{Multi-task performance when merging ViT-L/14 models on 14 tasks.}
\label{tab:14_tasks_results_l14}
\scriptsize
{%
\tabcolsep=0.3em
\begin{tabular}{@{}l|cccccccccccccc|c@{}}
\toprule
 Method & SUN & Cars & RES. & Euro & SVH. & GTS. & MNI. & DTD & C100 & FER & Flower & Pet & PCAM & STL & {Avg.} \\
\midrule
Individual & 82.3& 92.4& 97.4& 100& 98.1& 99.2& 99.7& 84.1& 93.3& 76.5& 97.9& 95.2& 90.0& 99.3& 93.2\\
\midrule
Task Arithmetic~\citep{TaskArithmetic_2023_ICLR} & 71.6& 79.0& 80.6& 86.5& 75.6& 65.8& 96.0& 60.9& 83.9& 43.5& 80.2& 94.9& 79.7& 99.4& 78.4\\
Ties Merging~\citep{TiesMerging_2023_NeurIPS} & 73.3& 78.1& 84.3& 87.6& 84.4& 79.7& 98.5& 63.4& 81.9& 47.5& 78.2& 94.7& 76.9& 99.2& 80.5\\
PCB Merging~\citep{PCB_NeurIPS2024} & 72.2& 73.3& 82.4& 89.3& 86.2& 85.2& 98.9& 63.6& 76.8& 47.7& 87.4& 94.4& 82.4& 98.7& 81.3\\
LiNeS w/ TA~\citep{Lines_ICLR_2025} & 72.6& 79.7& 83.1& 89.9& 78.2& 72.6& 97.2& 63.6& 84.5& 45.7& 81.1& 95.6& 82.2& 99.4& 80.4\\
Consensus TA~\citep{Tall_Mask_2024_ICML} & 73.6& 79.2& 85.3& 94.3& 86.0& 82.8& 98.6& 63.7& 82.7& 47.9& 78.4& 94.9& 86.1& 99.1& 82.3\\
TSV-M~\citep{TSV_CVPR2025} & 76.3& 84.5& 92.2& 97.7& 93.7& 94.1& 99.5& 75.2& 85.8& 66.7& 87.5& 95.9& 85.8& 99.6& 88.2\\
ISO-CTS~\citep{Iso-cts_ICML2025} & 79.2& 88.6& 95.1& 98.7& 91.9& 96.9& 99.4& 80.6& 88.2& 62.1& 96.5& 96.1& 82.5& 99.5& 89.7\\
EMR-Merging~\citep{EMRMerging_2024} & 80.5& 89.4& 96.5& 99.6& 97.7& 99.0& 99.7& 81.3& 90.3& 71.3& 92.9& 95.1& 86.4& 99.4& 91.4\\
\midrule

AdaMerging~\citep{Adamerging_2024_ICLR} & 76.0& 91.1& 92.1& 97.0& 93.5& 97.4& 98.9& 77.4& 82.5& 50.8& 89.9& 95.7& 51.2& 99.2& 85.2\\
Surgery w/ Ada.~\citep{Surgery_2024_ICML} & 77.9& 91.2& 94.7& 98.4& 94.4& 98.2& 99.3& 81.5& 84.6& 71.4& 95.4& 95.9& 83.7& 99.5& 90.4\\
WEMoE~\citep{WeMoE_2024_ICML} & 80.7& 92.6& 95.3& 98.8& 96.7& 98.6& 99.7& 83.2& 91.4& 44.7& 98.6& 96.2& 62.2& 99.3& 88.4\\
ProbSurgery w/ Ada.~\citep{ProbSurgery_ICML2025} & 77.5& 90.9& 95.1& 98.7& 94.6& 98.7& 99.3& 81.4& 85.0& 71.6& 96.7& 96.1& 85.3& 99.5& 90.7\\
\rowcolor{mycolor}
\textbf{\ours} & 81.2& 92.0& 96.8& 99.7& 97.0& 99.0& 99.4& 84.8& 91.3& 76.2& 98.5& 95.7& 88.7& 99.5& 92.8\\
\bottomrule
\end{tabular}%
}
\end{table*}

\definecolor{mycolor}{cmyk}{0.1, 0.1, 0, 0}

\begin{table*}[t]
\caption{{Multi-task performance when merging ViT-L/14 models on 20 tasks.}}
\label{tab:20_tasks_results_l20}
    \centering
    \tabcolsep=0.2em
    \fontsize{7}{7}\selectfont
    \begin{tabular}{l|cccccccccc}
    \toprule
    {Method} & SUN397 & Cars & RESISC45 & EuroSAT & SVHN & GTSRB & MNIST & DTD & CIFAR100 & FER2013 \\
    \midrule
    Individual & 82.3& 92.4& 97.4& 100.0& 98.1& 99.2& 99.7& 84.1& 93.3& 76.5\\
    \midrule
    Task arithmetic~\citep{TaskArithmetic_2023_ICLR} & 71.2& 76.2& 78.0& 79.3& 75.3& 65.7& 95.9& 59.7& 82.5& 42.0\\
    Ties Merging~\citep{TiesMerging_2023_NeurIPS} & 69.7& 63.2& 71.7& 63.3& 81.1& 64.1& 97.2& 56.5& 71.8& 43.5
\\
    PCB Merging~\citep{PCB_NeurIPS2024} &  68.2&  56.5&  70.2& 68.4&  81.3&  68.1& 97.6&  56.5&  68.1&  43.8\\
    LiNeS w/ TA~\citep{Lines_ICLR_2025} & 72.1& 77.6& 80.6& 84.6& 76.5& 69.0& 96.5& 62.0& 83.3&  43.5\\
    Consensus TA~\citep{Tall_Mask_2024_ICML} & 73.7& 75.9& 83.2& 84.8& 82.2& 71.8& 97.7& 61.8& 82.1& 45.5
\\
    TSV-M~\citep{TSV_CVPR2025} &  74.9&  79.7&  89.9& 95.7&  90.3&  90.4& 97.9&  72.8&  83.3&  60.1
\\
    ISO-CTS~\citep{Iso-cts_ICML2025} &  77.6&  82.1&  93.7& 98.5&  90.4&  96.4& 99.0&  78.9&  86.4&  59.5\\    
    EMR-Merging~\citep{EMRMerging_2024} & 79.5& 88.9& 95.9& 99.3& 96.9& 98.7& 99.6& 79.3& 90.1& 64.6\\

    AdaMerging~\citep{Adamerging_2024_ICLR} & 75.2& 90.4& 91.5& 96.5& 88.1& 97.0& 96.2& 71.7& 80.9& 49.5
\\
    Surgery w/ Ada.~\citep{Surgery_2024_ICML} & 76.8& 90.7& 94.0& 98.0& 90.8& 98.1& 98.5& 78.3& 83.0& 69.3\\
    WEMoE~\citep{WeMoE_2024_ICML} &  80.1&  91.9&  95.5& 98.7&  96.5&  98.6& 98.9&  76.4&  88.7&  17.2
\\
    ProbSurgery w/ Ada.~\citep{ProbSurgery_ICML2025} &  76.9&  89.7&  94.4& 98.6&  92.5&  98.6& 98.7&  79.7&  83.4&  69.8\\

    \rowcolor{mycolor}
    {\textbf{\ours}} & 80.9& 91.8& 96.5& 99.1& 96.4& 98.4& 99.4& 84.9& 91.0& 75.7\\
    \midrule
    \midrule
    Method& Flowers & Pet & PCAM & STL10 & CIFAR10 & EMNIST & FMNIST & Food101 & KMNIST & R-SST2 \\
    \midrule
    Individual & 97.9& 95.2& 90.0& 99.3& 99.3& 99.8& 96.0& 95.4& 98.7& 85.6\\
    \midrule
    Task arithmetic & 71.2& 76.2& 78.0& 79.3& 75.3& 65.7& 95.9& 59.7& 82.5& 42.0\\
    Ties Merging & 67.5& 93.7& 70.3& 97.4& 94.8& 85.7& 83.5& 82.0& 34.9& 67.1
\\
    PCB Merging &  74.0&  93.2&  75.9& 97.0&  93.8&  85.9& 83.9&  78.2&  37.4&  69.5\\
    LiNeS w/ TA & 72.1& 77.6& 80.6& 84.6& 76.5& 69.0& 96.5& 62.0& 83.3&  43.5\\
    Consensus TA & 76.4& 95.1& 79.4& 99.0& 97.5& 85.2& 85.4& 91.3& 38.6& 71.2
\\
    TSV-M &  85.3&  96.0&  86.0& 99.4&  98.1&  98.8& 90.9&  92.4&  79.7&  82.8
\\
    ISO-CTS &  94.1&  95.4&  82.8& 99.4&  98.3&  95.6& 92.3&  93.2&  89.2&  82.3\\    
    EMR-Merging & 96.5& 95.7& 87.1& 99.5& 99.0& 99.8& 94.5& 94.1& 95.1& 85.1\\

    AdaMerging  & 88.4& 95.5& 50.4& 99.0& 97.0& 97.0& 91.5& 92.1& 10.0& 83.9
\\
    Surgery w/ Ada. & 94.2& 95.7& 83.3& 99.4& 97.9& 99.1& 92.3& 92.4& 72.7& 85.6\\
    WEMoE &  98.3&  96.0&  51.2& 99.4&  98.0&  99.1& 37.4&  94.8&  10.0&  49.9
\\
    ProbSurgery w/ Ada. &  95.8&  95.1&  83.9& 99.4&  98.1&  99.2& 92.7&  92.8&  80.3&  85.0\\
    \rowcolor{mycolor}
    {\textbf{\ours}} & 98.4& 95.7& 88.0& 99.4& 98.6& 99.2& 93.7& 93.8& 97.9& 85.5\\

    \bottomrule
    \end{tabular}

\end{table*}

\end{document}